\numberwithin{equation}{section}
\DeclareMathOperator{\Var}{Var}
\DeclareMathOperator*{\argmin}{arg\,min}
\newcommand{\E}{\mathbb{E}}
\newcommand{\ebb}{\mathbb{E}}
\renewcommand{\O}{O}
\newcommand{\norm}[1]{\| #1 \|}
\newcommand{\inner}[2]{\left\langle #1, #2 \right\rangle}
\title{Generalization and Optimization of SGD with LookAhead}
\author{
 Kangcheng Li \\
  Department of Mathematics\\
  University of Hong Kong\\
  \texttt{rankinli@connect.hku.hk} \\
     \And
 Yunwen Lei* \\
  Department of Mathematics\\
  University of Hong Kong\\
  \texttt{leiyw@hku.hk} \\
  }
\newcommand{\bw}{\mathbf{w}}
\newcommand{\bv}{\mathbf{v}}
\newcommand{\wcal}{\mathcal{W}}
\newcommand{\bcal}{\mathcal{B}}
\newtheorem{theorem}{Theorem}
\newtheorem{lemma}[theorem]{Lemma}
\newtheorem{corollary}[theorem]{Corollary}
\theoremstyle{definition}
\newtheorem{definition}{Definition}
\newtheorem{remark}{Remark}
\begin{document}

\maketitle

\begin{abstract}
The Lookahead optimizer ~\citep{NEURIPS2019_90fd4f88} enhances deep learning models by employing a dual-weight update mechanism, which has been shown to improve the performance of underlying optimizers such as SGD. However, most theoretical studies focus on its convergence on training data, leaving its generalization capabilities less understood. Existing generalization analyses are often limited by restrictive assumptions, such as requiring the loss function to be globally Lipschitz continuous, and their bounds do not fully capture the relationship between optimization and generalization. In this paper, we address these issues by conducting a rigorous stability and generalization analysis of the Lookahead optimizer with minibatch SGD. We leverage on-average model stability to derive generalization bounds for both convex and strongly convex problems without the restrictive Lipschitzness assumption. Our analysis demonstrates a linear speedup with respect to the batch size in the convex setting.
\end{abstract}

\section{Introduction}
Stochastic optimization has become the method of choice to train modern machine learning models due to its efficiency and scalability~\citep{kingma2014adam}. A simple stochastic optimization method is the minibatch stochastic gradient descent (minibatch SGD)~\citep{cotter2011better,dekel2012optimal,li2014efficient,shamir2014distributed}, where a minibatch of training examples are randomly sampled to build gradient estimates with a reduced variance. Due to its simplicity, computational efficiency and strong generalization in practice~\citep{zhou2020towards,bottou2018optimization}, minibatch SGD remains one of the most preferable algorithms. Another representative stochastic optimization method is Adam~\citep{kingma2014adam}, which augments SGD with coordinate-wise adaptive learning rates and momentum, often accelerating convergence and improving robustness to ill-conditioning.


To further enhance generalization performance, the Lookahead optimizer~\citep{NEURIPS2019_90fd4f88} was introduced as an orthogonal method. It introduces a two-timescale updating framework of two parameters: the fast weights $\mathbf{v}$ and the slow weights $\mathbf{w}$. In the inner loop, starting from the slow weights $\mathbf{w}$, the fast weights are updated by applying a standard optimizer $\mathcal{A}$ for $k$ times and output $\mathbf{v}_k$; for the outer loop, the slow weights are updated towards the fast weights by $\mathbf{w}_+ = \alpha \mathbf{v}_k+ (1-\alpha)\mathbf{w}$, where $\alpha\in (0,1]$ is an interpolation parameter. This mechanism dampens oscillations, reduces sensitivity to learning-rate schedules and synchronization periods, and improves robustness across tasks with negligible overhead, often matching or improving the accuracy of the underlying base optimizer ~\citep{NEURIPS2019_90fd4f88}.

The empirical efficiency of the Lookahead optimizer motivates a lot of theoretical studies to understand its behavior.
However, most of existing studies focus on their convergence to minimize the training errors~\citep{yang2024accelerating,chen2022online,NEURIPS2019_90fd4f88}.
As a comparison, there are far less studies on how the training behavior generalizes to testing examples, which is a concept of central interest in machine learning. To our best knowledge, the only work on the generalization analysis is \citet{NEURIPS2021_e53a0a29}, which conducted a stability analysis to argue that the Lookahead optimizer can generalize better than SGD and Adam. While these results provide a sound foundation on the use of the Lookahead mechanism, there are still some issues to be addressed. For example, their analysis hinges on the Lipschitzness condition on the loss, which is often restrictive in high-dimensional problems where gradients can be unbounded and the loss landscapes are non-Lipschitz globally. Furthermore, their stability bounds are not optimistic and cannot fully capture the connection between generalization and optimization.

This paper aims to address the above issues by improving the existing stability and generalization analysis of the Lookahead optimizer.  Our main contributions can be summarized as follows.
\begin{enumerate}
    \item We leverage the on-average model stability to analyze the generalization behavior of the Lookahead methods for both convex and strongly convex problems. Our analysis removes the restrictive Lipschitzness assumptions of the loss functions, which can imply effective generalization bounds in the case with unbounded gradients. Furthermore, our analysis clearly shows how the interpolation parameter $\alpha$ strengthens the stability, which shows a clear benefit of the Lookahead mechanism. 
    \item Our stability bounds are optimistic, meaning that they depend on the empirical risk of the iterates produced by the algorithm. As the optimizer minimizes the empirical risk during the optimization process, our bounds become progressively tighter, offering a more refined and practical characterization of stability compared to existing bounds that rely on worst-case global constants.
    \item  By carefully combining our stability bounds with the convergence rates, we establish optimal excess risk rates for SGD with Lookahead. We show that it achieves a rate of $\mathcal{O}(1/n)$ for convex problems and a rate of $\mathcal{O}(1/(n\mu))$ for $\mu$-strongly convex problems, where $n$ is the sample size. Furthermore, our analysis shows a linear speedup with respect to the batch size $b$, meaning that the number of required iterations is decreased by a factor of $b$ to achieve the optimal excess risk bounds.
\end{enumerate}
The paper is organized as follows. We review the related work in Section \ref{sec:work} and introduce the problem formulation in Section \ref{sec:preliminary}. We present our main theoretical results in Section \ref{sec_4}. The detailed proofs are provided in Section \ref{sec:proofs}. We conclude the paper in Section \ref{sec:conclusion}.


\section{Related Work\label{sec:work}}
\textbf{Stability and Generalization Analysis}
A central challenge in machine learning is ensuring that models generalize well from finite training data to unseen examples. Algorithmic stability is an effective concept to study the generalization gap of learning algorithms, which can incorporate the special property of learning algorithms to derive algorithm-dependent generalization bounds~\citep{bousquet2002stability}. A most widely used stability measure is the uniform stability, which is frequently used to analyze the generalization of regularization methods~\citep{bousquet2002stability} and stochastic optimization methods~\citep{hardt2016train}. This stability concept was relaxed to on-average stability and on-average model stability to derive data-dependent generalization bounds~\citep{shalev2010learnability,kuzborskij2018data,lei2020fine,schliserman2022stability}. Recently, algorithm stability has found very successful applications in understanding the generalization behavior of complex models and training paradigms, including
zeroth-order SGD~\citep{nikolakakis2022black,chen2023fine}, differential privacy~\citep{bassily2019private,bassily2020stability}, asynchronous SGD~\citep{deng2025towards} and neural network training~\citep{richards2021stability,wang2025generalization,taheri2024generalization,deora2024optimization}.



\textbf{Lookahead Optimizer}
The Lookahead optimizer ~\citep{NEURIPS2019_90fd4f88} represents a significant advancement in optimization techniques for deep learning by employing a dual-weight update mechanism that separates ``fast weights'' (updated via a base optimizer) and ``slow weights'' (updated through exponential moving averaging). It reduces sensitivity to hyperparameters such as learning rates and synchronization periods, making it particularly robust in complex training scenarios where conventional optimizers struggle with oscillation or divergence~\citep{nag2020lookahead,zuo2024nala}. Lookahead is widely adopted and extended across diverse domains including online learning~\citep{ChenCong2022Ommw}, aircraft maintenance scheduling~\citep{DENG2022814}, reinforcement learning~\citep{merlis2024reinforcement,winnicki2025role,zhang2025multi}, precision path tracking~\citep{wang2025self}, and healthcare prediction~\citep{chen2022research,adeshina2022bag}. Various algorithmic extensions for Lookahead have also been introduced, including Multilayer Lookahead~\citep{pushkin2021multilayerlookaheadnestedversion}, Sharpness-Aware Lookahead (SALA)~\citep{tan2024sharpness}, Multi-step Lookahead Bayesian Optimization~\citep{byun2022multi}, and Lookaround Optimizer~\citep{zhang2023lookaround}.

\section{Notations and Preliminaries\label{sec:preliminary}}
Let $\mathcal{D}$ be a probability measure defined on a sample space $\mathcal{Z}=\mathcal{X}\times\mathcal{Y}$, where $\mathcal{X}$ is an input space and $\mathcal{Y}$ is an output space. Let $S=\{z_1, z_2, \ldots, z_n\}$ be a sample drawn independently and identically (i.i.d.) from $\mathcal{D}$, based on which we aim to learn a model $h:\mathcal{X}\mapsto\mathbb{R}$ for prediction. We assume the model is characterized by  a parameter $\mathbf{w} \in \wcal \subseteq \mathbb{R}^d$, where $\wcal$ is a parameter space. The performance of a model $\mathbf{w}$ on a single data point $z$ is measured by a non-negative loss function $f(\mathbf{w}; z)$, from which we can define empirical risks $F_S(\mathbf{w})$ and population risks $F(\mathbf{w})$ to measure the behavior of $\bw$ on training and testing datasets, respectively
\[
F_S(\mathbf{w}) := \frac{1}{n} \sum_{i=1}^{n} f(\mathbf{w}; z_i)\quad \text{and}\quad F(\mathbf{w}) := \mathbb{E}_{z \sim \mathcal{D}}[f(\mathbf{w}; z)],
\]
where $\ebb_{z}[\cdot]$ means the expectation w.r.t. $z$.

We often apply a randomized optimizer $A$ to approximately minimize $F_S$ to train a model. We use $A(S)$ to denote the model produced by applying $A$ to $S$, and are interested in its relative performance w.r.t. the best model $\mathbf{w}^* = \argmin_{\mathbf{w} \in  \wcal} F(\mathbf{w}) $, which is quantified by the excess risk defined by $\mathbb{E}[F(A(S)) - F(\mathbf{w}^*)]$. A powerful method to study the excess risk is to decompose it into two components~\citep{bousquet2008tradeoffs}:
\begin{equation}
\mathbb{E}[F(A(S)) - F(\mathbf{w}^*)] = \underbrace{\mathbb{E}[F(A(S) ) - F_S(A(S))]}_{\text{Generalization Error}} + \underbrace{\mathbb{E}[F_S(A(S)) - F_S(\mathbf{w}^*)]}_{\text{Optimization Error}}, \label{err_decomp}
\end{equation}
where the expectation is taken over the randomness of the training set $S$ and any randomness within the algorithm itself. Here we use the identity $\ebb[F_S(\mathbf{w}^*)]=F(\bw^*)$.
We refer to $\mathbb{E}[F(A(S) ) - F_S(A(S))]$ as the generalization gap, which shows the cost we suffer when we generalize the behavior from training to testing. A small generalization gap indicates that the model does not overfit the training data and its performance is likely to be representative of its true performance. We refer to $\mathbb{E}[F_S(A(S)) - F_S(\mathbf{w}^*)]$ as the optimization error, which measures the gap between the estimated model and the true optimal model on empirical risk.
%

We introduce the following necessary definitions for our analysis. Let $\|\cdot\|_2$ denote the Euclidean norm.
\begin{definition}
Let $g: \wcal\mapsto\mathbb{R}$, $G,L>0$ and $\mu \ge 0$. We denote the gradient of $g$ by $\nabla g$.
\begin{enumerate}
    \item A function $g(\mathbf{w})$ is $\mu$-strongly convex for some $\mu > 0$ if it satisfies:
    \begin{equation*}
    g(\mathbf{w}_1) \ge g(\mathbf{w}_2) + \langle \nabla g(\mathbf{w}_2), \mathbf{w}_1 - \mathbf{w}_2 \rangle + \frac{\mu}{2} \|\mathbf{w}_1 - \mathbf{w}_2\|_2^2, \quad \forall \mathbf{w}_1,\mathbf{w}_2 \in \wcal.
    \end{equation*}
     A function $g(\mathbf{w})$ is convex if it is $\mu$-strongly convex with $\mu = 0$.

 \item  A function $g(\mathbf{w})$ is $G$-Lipschitz continuous if the function value is bounded in its change:
    \begin{equation*}
    |g(\mathbf{w}_1) - g(\mathbf{w}_2)| \le G \|\mathbf{w}_1 - \mathbf{w}_2\|_2, \quad \forall \mathbf{w}_1,\mathbf{w}_2 \in \wcal.
    \end{equation*}

 \item  A differentiable function $g(\mathbf{w})$ is $L$-smooth if its gradient is Lipschitz continuous with the constant $L$:
    \begin{equation*}
    \|\nabla g(\mathbf{w}_1) - \nabla g(\mathbf{w}_2)\|_2 \le L \|\mathbf{w}_1 - \mathbf{w}_2\|_2, \quad \forall \mathbf{w}_1,\mathbf{w}_2 \in \wcal.
    \end{equation*}

\end{enumerate}
\end{definition}

\section{Algorithmic Stability}
To control the generalization gap, we analyze the stability of our learning algorithm. We say an algorithm is on-average stable if its output model does not change significantly when a single data point in the training set is modified. Let $A$ be a learning algorithm that takes a dataset $S$ and outputs a model $A(S)$. We denote $S\sim S'$ if $S$ and $S'$ differ by at most one data point. Specifically, we let $S^{(i)}$ be a dataset identical to $S$ except that the $i$-th data $z_i$ is replaced with a new point $z_i'$, drawn from the same distribution $\mathcal{D}$. That is, $S^{(i)}=\{z_1,\ldots,z_{i-1},z_i',z_{i+1},\ldots,z_n\}$.
\begin{definition}[Uniform Stability]
    An algorithm $A$ has uniform stability $\epsilon$ if
    \begin{equation*}
    \sup_{z \in \mathcal{Z}}\sup_{S\sim S'} \mathbb{E}\left[|f(A(S); z) - f(A(S'); z)|\right] \le \epsilon.
    \end{equation*}
\end{definition}

\begin{definition}[On-Average Model Stability~\citep{lei2020fine}]
    We say a randomized optimizer $A$ is $\ell_1$ on-average model $\epsilon$-stable if
        \begin{equation*}
         \mathbb{E}_{S, S', A} \Big[ \frac{1}{n}\sum_{i=1}^n\|A(S) - A(S^{(i)})\|_2 \Big] \le \epsilon.
        \end{equation*}
   We say $A$ is $\ell_2$ on-average model $\epsilon$-stable if
        \begin{equation*}
        \mathbb{E}_{S, S', A} \Big[ \frac{1}{n}\sum_{i=1}^n\|A(S) - A(S^{(i)})\|_2^2 \Big] \le \epsilon^2.
        \end{equation*}

\end{definition}
The following lemma provides a connection between the generalization gap and on-average model stability.
\begin{lemma}[\citep{lei2020fine}] \label{lem:l12_gen}
Let $S, S'$ and $S^{(i)}$ be constructed as in Definition 2, and let $\gamma > 0$.
\begin{enumerate}
    \item[(a)] Suppose for any $z$, the function $\mathbf{w} \mapsto f(\mathbf{w}; z)$ is convex. If $A$ is $\ell_1$ on-average model $\epsilon$-stable and $\sup_z \|\nabla f(A(S); z)\|_2 \le G$ for any $S$, then $\lvert\mathbb{E}_{S,A}[F_S(A(S)) - F(A(S))]\rvert \le G\epsilon$.

    \item[(b)] Suppose for any $z$, the function $\mathbf{w} \mapsto f(\mathbf{w}; z)$ is nonnegative and $L$-smooth. If $A$ is $\ell_2$ on-average model $\epsilon$-stable, then the following inequality holds
    \[
    \mathbb{E}_{S,A}[F(A(S)) - F_S(A(S))] \le \frac{L}{\gamma}\mathbb{E}_{S,A}[F_S(A(S))] + \frac{L+\gamma}{2n}\sum_{i=1}^{n}\mathbb{E}_{S,S',A}[\|A(S^{(i)}) - A(S)\|_2^2].
    \]
\end{enumerate}
\end{lemma}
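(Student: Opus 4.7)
The plan is to handle both parts of Lemma \ref{lem:l12_gen} through a single symmetrization identity and then apply the appropriate functional inequality to each case. The first step I would carry out is the renaming/swap argument. Since $z_i$ and $z_i'$ are i.i.d.\ and $S^{(i)}$ is obtained from $S$ by exchanging them, the joint law of $(S, z_i')$ equals that of $(S^{(i)}, z_i)$. Because $z_i'$ is independent of $S$, this yields $\mathbb{E}[F(A(S))] = \mathbb{E}[f(A(S); z_i')] = \mathbb{E}[f(A(S^{(i)}); z_i)]$ for every $i$. Averaging over $i$ gives the key decomposition
\[
\mathbb{E}_{S,A}[F(A(S)) - F_S(A(S))] = \frac{1}{n}\sum_{i=1}^{n}\mathbb{E}_{S,S',A}\bigl[f(A(S^{(i)}); z_i) - f(A(S); z_i)\bigr],
\]
which reduces the generalization gap to a stability-type quantity summed over the $n$ single-point perturbations.

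For part (a), I would use convexity of $f(\cdot; z_i)$ to obtain $f(A(S^{(i)}); z_i) - f(A(S); z_i) \le \langle \nabla f(A(S^{(i)}); z_i), A(S^{(i)}) - A(S)\rangle$, then Cauchy--Schwarz together with the gradient bound $\|\nabla f(A(S^{(i)}); z_i)\|_2 \le G$. A symmetric inequality with the roles of $S$ and $S^{(i)}$ swapped handles the absolute value. Plugging into the identity above and invoking $\ell_1$ on-average model stability yields the bound $G\epsilon$. The only subtlety here is that the gradient bound is stated for $A(S)$ on any $S$, so I must apply it at $A(S^{(i)})$, which is allowed since $S^{(i)}$ is itself a valid training set.

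For part (b), I would replace convexity with the smoothness inequality $f(\mathbf{w}_1; z) \le f(\mathbf{w}_2; z) + \langle \nabla f(\mathbf{w}_2; z), \mathbf{w}_1-\mathbf{w}_2\rangle + \frac{L}{2}\|\mathbf{w}_1-\mathbf{w}_2\|_2^2$, applied with $\mathbf{w}_1 = A(S^{(i)})$ and $\mathbf{w}_2 = A(S)$. I would control the inner product via Young's inequality with parameter $\gamma$, giving a term $\tfrac{1}{2\gamma}\|\nabla f(A(S); z_i)\|_2^2 + \tfrac{\gamma}{2}\|A(S^{(i)}) - A(S)\|_2^2$. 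The crucial ingredient is the self-bounding property for nonnegative $L$-smooth functions, $\|\nabla f(\mathbf{w}; z)\|_2^2 \le 2L f(\mathbf{w}; z)$, which converts the gradient-norm term into $\tfrac{L}{\gamma} f(A(S); z_i)$. Combining with the $\tfrac{L}{2}$ quadratic term from smoothness yields a coefficient $\tfrac{L+\gamma}{2}$ on $\|A(S^{(i)}) - A(S)\|_2^2$.

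Substituting these pointwise inequalities into the symmetrization identity and summing over $i$, the $f(A(S); z_i)$ contributions combine into $\tfrac{L}{\gamma}\mathbb{E}_{S,A}[F_S(A(S))]$, while the squared-distance contributions match the stated second term exactly. The main conceptual obstacle is recognizing that the self-bounding inequality is what lets us obtain an \emph{optimistic} bound involving $F_S(A(S))$ rather than a worst-case Lipschitz constant; without it, one would be forced to assume a global gradient bound as in part (a). Everything else is bookkeeping: the proof is essentially a symmetrization followed by either a linear (convex) or quadratic (smooth) Taylor-type expansion.
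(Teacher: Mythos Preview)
Your proposal is correct and follows the standard argument for this result. Note, however, that the paper does not actually prove this lemma: it is quoted directly from \citep{lei2020fine} and used as a black box, so there is no ``paper's own proof'' to compare against. Your reconstruction---symmetrization identity, then convexity plus Cauchy--Schwarz for (a), and smoothness plus Young's inequality plus the self-bounding property for (b)---is exactly the approach in the original reference.
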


\subsection{Lookahead Optimizer}
The Lookahead optimizer ~\citep{NEURIPS2019_90fd4f88}, detailed in Algorithm \ref{alg:lookahead}, employs a two-loop structure: an inner loop to update fast weights, and an outer loop to update slow weights. In the inner loop, a standard optimizer $\mathcal{A}$ (e.g. SGD or Adam) starts from the previous slow weight model $\mathbf{w}_{t-1}$ and updates fast weights $\bv_{k,t}$ with appropriate inner step sizes $\eta_{\tau, t}$ for $k$ iterations. In the $t$-th iteration of the outer loop, the fast weight model $\mathbf{v}_{k, t}$ is then used to update the slow weight model via a linear interpolation
\begin{align}
    \mathbf{w}_t = (1-\alpha)\mathbf{w}_{t-1} + \alpha \mathbf{v}_{k,t}
\end{align}
where $\alpha \in (0,1)$ is the outer step size.
\begin{algorithm}[H]
	\caption{ Lookahead Optimizer\label{alg:lookahead}}
	\begin{algorithmic}[1]
\STATE {\bf Inputs:} Data set $\mathcal{S}$, initial model $\mathbf{w}_0$, base optimizer $\mathcal{A}$, fast-weight step number $k$ and learning rates $\{ \{ \eta_{\tau,t} \}^{k-1}_{\tau = 0} \}_{t=1}^{T}$, slow-weight step number $T$ and learning rate $\alpha \in (0,1)$.
		\FOR {$t=1,2,\ldots,T$}
            \STATE $\mathbf{v}_{0,t} = \mathbf{w}_{t-1}$
            \FOR {$\tau =1,2,\ldots, k$}
				 \STATE $\mathbf{v}_{\tau,t} = \mathcal{A}(\mathbf{v}_{\tau-1,t},\eta_{\tau-1,t},\mathcal{S}) $
                \ENDFOR
                \STATE $\mathbf{w}_t = (1-\alpha)\mathbf{w}_{t-1} + \alpha \mathbf{v}_{k,t}$
			\ENDFOR

    \STATE {\bf Outputs:} Slow model $\mathbf{w}_T$
	\end{algorithmic}
\end{algorithm}
We use minibatch SGD as the standard optimizer $\mathcal{A}$, which is widely used in deep learning. The inner loop is then reformulated as in Algorithm \ref{sgd}. At the $\tau$'th iteration, SGD collects a minibatch $\mathcal{B}_{\tau,t}$ by randomly drawing $|\mathcal{B}_{\tau,t}|$ data points from $\mathcal{S}$ independently, where $|\cdot|$ denotes the cardinality. Then it updates $\{\bv_{\tau,t}\}^{k}_{\tau=1}$ by
\[
\bv_{\tau,t}= \bv_{\tau-1,t} - \frac{\eta_{\tau-1,t}}{|\mathcal{B}_{\tau,t}|}\sum_{z\in \mathcal{B}_{\tau,t}}  \nabla f(\bv_{\tau-1,t};z),
\]
where $\eta_{\tau,t}$ is a positive step size.
\begin{algorithm}
    \caption{Stochastic Gradient Descent (SGD)\label{sgd}}
    \begin{algorithmic}[1]
        \STATE{\bf Inputs:} Data set $\mathcal{S}$, learning rates $\{ \eta_{\tau,t} \}^{k-1}_{\tau = 0}$, initial model $\mathbf{v}_{0,t}$,
        \FOR {$\tau =1,2,\ldots, k$}
        \STATE $\bv_{\tau,t}= \bv_{\tau-1,t} - \frac{\eta_{\tau-1,t}}{|\mathcal{B}_{\tau,t}|}\sum_{z\in \mathcal{B}_{\tau,t}}  \nabla f(\bv_{\tau-1,t};z)$
        \ENDFOR
        \STATE {\bf Outputs:} Fast model $\bv_{k,t}$
    \end{algorithmic}
\end{algorithm}
\section{Generalization Analysis of Lookahead Algorithm\label{sec_4}}
In this section, we discuss the stability performance of Lookahead on convex and strongly convex problems.
While previous work has shown that Lookahead achieves lower excess risk error compared to its vanilla inner optimizer when choosing $\mathcal{A}$ as SGD ~\citep{NEURIPS2021_e53a0a29}, existing analysis of its generalization and optimization error suffer from two key limitations. First, they hinge on a restrictive Lipschitzness condition on the loss function. Second, they cannot imply optimistic rates to show the benefit of low-noise condition to get fast rates. In the following sections, we will analyze the stability bound of Lookahead via the $\ell_2$ on-average model stability. This approach notably allows us to derive generalization bounds for Lookahead without requiring the Lipschitzness condition ~\citep{lei2023batch}.  Furthermore, by carefully selecting the algorithm's hyperparameters, we establish optimal excess risk bounds.
\subsection{Convex Case}
We first investigate stability bounds of Lookahead under convex condition, where Eq.~\eqref{cvx_l1} considers the $\ell_1$ on-average stability and Eq.~\eqref{cvx_l2} considers the $\ell_2$ on-average stability. The proof will be given in Section \ref{sec: pf_thm2}.
\begin{theorem}[Stability Bound of Lookahead: Convex Case] \label{thm: cvx_stab}
 Suppose the map $\mathbf{w} \mapsto f(\mathbf{w};z)$ is convex, nonnegative and $L$-smooth for all $z \in \mathcal{Z}$. Let $\left\{ \mathbf{v}_{\tau,t}\right\}$ and $\{\mathbf{w}_t\}$ , $\{\mathbf{v}_{\tau,t}^{(i)}\}$ and $\{\mathbf{w}_t^{(i)}\}$ be produced based on $S$ and $S^{(i)}$ respectively with $\eta_{\tau,t} \le \frac{1}{L}$. We have
\begin{align}\label{cvx_l1}
    \frac{1}{n} \sum^{n}_{i = 1}\mathbb{E} \big[\norm{\mathbf{w}_{t+1} - \mathbf{w}^{(i)}_{t+1} }_2\big]
     \le \alpha\sum_{h=1}^{t+1}  \sum_{j=1}^{k-1} \frac{2\eta_{j,h} \sqrt{2L\mathbb{E} \left[F_S\left(\mathbf{v}_{j,h}\right)\right]}}{n}
\end{align}
and
\begin{align}\label{cvx_l2}
        \frac{1}{n} \sum_{i=1}^{n} \mathbb{E}\big[ \norm{\mathbf{w}_{t+1} - \mathbf{w}^{(i)}_{t+1} }^2_2\big]
        \leq \left(\frac{16\alpha^2L}{nb}+ \frac{16\alpha^2L(t+1)k}{n^2} \right) \sum_{h=1}^{t+1}  \sum_{j=1}^{k-1} \eta_{j,h}^2 \mathbb{E} \left[F_S\left(\mathbf{v}_{j,h}\right)\right].
\end{align}
\end{theorem}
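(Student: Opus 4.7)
The plan is to propagate the perturbation $\mathbf{w}_h - \mathbf{w}_h^{(i)}$ through the two nested loops using a telescoping identity for the outer loop and non-expansiveness of gradient steps for the inner loop. The slow-weight update, combined with $\mathbf{v}_{0,h} = \mathbf{w}_{h-1}$, gives the key rewrite
$\mathbf{w}_h - \mathbf{w}_h^{(i)} = (\mathbf{w}_{h-1} - \mathbf{w}_{h-1}^{(i)}) + \alpha\bigl[(\mathbf{v}_{k,h} - \mathbf{v}_{k,h}^{(i)}) - (\mathbf{v}_{0,h} - \mathbf{v}_{0,h}^{(i)})\bigr]$,
so the outer loop merely accumulates the net drift of each inner loop, multiplied by $\alpha$. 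Since $\mathbf{w}_0 = \mathbf{w}_0^{(i)}$, telescoping over $h = 1,\ldots,t+1$ and over inner steps $j = 0,\ldots,k-1$ expresses $\mathbf{w}_{t+1} - \mathbf{w}_{t+1}^{(i)}$ as $\alpha$ times a double sum of per-step drifts. The factor $\alpha$ in the stated bounds is thus inherited directly from the slow-weight step.

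For each inner SGD step, adding and subtracting the ``missing'' gradient at $z_i'$ into the $S$-batch sum (when $i\in\mathcal{B}_{j+1,h}$) splits the per-step drift into a gradient step on the convex $L$-smooth function $\tfrac{1}{b}\sum_{z\in\mathcal{B}_{j+1,h}}f(\cdot;z)$, which is non-expansive under $\eta_{j,h}\le 1/L$, plus an ``extra'' term $\mathbb{I}[i\in\mathcal{B}_{j+1,h}]\tfrac{\eta_{j,h}}{b}\bigl(\nabla f(\mathbf{v}_{j,h};z_i) - \nabla f(\mathbf{v}_{j,h};z_i')\bigr)$. For the $\ell_1$ bound I take norms, use non-expansiveness to telescope inside the inner loop, average over $i$, and take expectations: the batch randomness contributes $\mathbb{E}[\mathbb{I}[i\in\mathcal{B}_{j+1,h}]] = b/n$, which combines with the $1/b$ from the SGD step to give the $1/n$ in the statement. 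The self-bounding property $\|\nabla f(w;z)\|\le \sqrt{2L f(w;z)}$ together with Jensen's inequality $\tfrac{1}{n}\sum_i \sqrt{f(w;z_i)}\le\sqrt{F_S(w)}$ then converts gradient norms into $\sqrt{\mathbb{E}[F_S(\mathbf{v}_{j,h})]}$, accounting for the factor of $2$.

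For the $\ell_2$ bound I square the telescoped representation and expand $\|\alpha\sum_{h,j} X_{j,h}^{(i)}\|^2$, where $X_{j,h}^{(i)}$ is the per-step extra drift. Diagonal contributions $\|X_{j,h}^{(i)}\|^2$ carry the mean $\mathbb{E}[\mathbb{I}]=b/n$, which combined with $\tfrac{\eta^2}{b^2}\cdot 4L f$ yields the $\tfrac{16\alpha^2 L}{nb}$ coefficient. Off-diagonal cross terms $\langle X_{j,h}^{(i)}, X_{j',h'}^{(i)}\rangle$ involve indicators from independent batches of different iterations, so $\mathbb{E}[\mathbb{I}\,\mathbb{I}'] = (b/n)^2$; combining this with the $(t+1)k$ total iterations via $2\langle a,b\rangle\le \|a\|^2+\|b\|^2$ yields the $\tfrac{16\alpha^2 L(t+1)k}{n^2}$ coefficient. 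Converting the resulting squared gradient-norm quantities into $F_S(\mathbf{v}_{j,h})$ uses the self-bounding $\|\nabla f\|^2\le 2Lf$ together with a measure-preserving bijection $(S,z_i')\leftrightarrow(S^{(i)},z_i)$ applied to the ``external'' sample.

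The main obstacle in both parts is precisely the conversion of the term $\|\nabla f(\mathbf{v}_{j,h};z_i')\|$, in which $z_i'$ is a fresh i.i.d.\ draw rather than a training point of $S$, into an expression in the \emph{empirical} risk $F_S$ rather than the population risk $F$: naive self-bounding yields $F(\mathbf{v}_{j,h})$, not $F_S(\mathbf{v}_{j,h})$. The resolution combines the bijection $(S,z_i')\leftrightarrow(S^{(i)},z_i)$ with the average $\tfrac{1}{n}\sum_i$, so that every ``external'' contribution is re-indexed as a training contribution of the twin sequence $\mathbf{v}_{j,h}^{(i)}$, whose expected empirical risk equals $\mathbb{E}[F_S(\mathbf{v}_{j,h})]$ by the equidistribution $S \stackrel{d}{=} S^{(i)}$. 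A second delicate point, specific to the $\ell_2$ case, is keeping diagonal and off-diagonal contributions separate: pooling off-diagonal into diagonal would lose the batch-averaging improvement $1/b$, while pooling diagonal into off-diagonal would lose the independence improvement $1/((t+1)k)$, so both must be tracked to obtain the stated two-component coefficient.
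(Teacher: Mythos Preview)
Your overall skeleton (telescoping the outer loop, non-expansiveness in the inner loop, self-bounding, expectation--variance splitting) matches the paper, but there is a genuine gap in how you set up the ``extra'' term, and your bijection argument does not repair it.

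You add and subtract $\nabla f(\mathbf{v}_{j,h};z_i')$ on the $S$-side so that the extra term becomes
\[
\mathbb{I}[i\in\mathcal{B}_{j+1,h}]\,\frac{\eta_{j,h}}{b}\bigl(\nabla f(\mathbf{v}_{j,h};z_i)-\nabla f(\mathbf{v}_{j,h};z_i')\bigr),
\]
with \emph{both} gradients evaluated at the same iterate $\mathbf{v}_{j,h}$. After self-bounding this produces $f(\mathbf{v}_{j,h};z_i')$, where $z_i'$ is a fresh sample independent of $S$; its expectation is $\mathbb{E}[F(\mathbf{v}_{j,h})]$, the \emph{population} risk. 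Your bijection $(S,z_i')\leftrightarrow(S^{(i)},z_i)$ sends $f(\mathbf{v}_{j,h};z_i')$ to $f(\mathbf{v}^{(i)}_{j,h};z_i)$, but $z_i\notin S^{(i)}$, so this is again a test-loss term and still averages to $\mathbb{E}[F(\mathbf{v}_{j,h})]$, not $\mathbb{E}[F_S(\mathbf{v}_{j,h})]$. Hence the claim that ``every external contribution is re-indexed as a training contribution of the twin sequence'' is incorrect, and the stated bounds in terms of $F_S$ do not follow from your decomposition.

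The fix, and what the paper does, is to separate out the $m=i$ summand on \emph{both} sides rather than add/subtract on one side. The non-expansive part is then the gradient step on $\frac{1}{b}\sum_{m\neq i}A^{(m)}_{j,h}f(\cdot;z_m)$, and the extra term becomes
\[
\frac{\eta_{j,h}A^{(i)}_{j,h}}{b}\Bigl(\nabla f(\mathbf{v}_{j,h};z_i)-\nabla f(\mathbf{v}^{(i)}_{j,h};z_i')\Bigr),
\]
with the two gradients evaluated at the \emph{two different} iterates. Now $z_i\in S$ is a training point for $\mathbf{v}_{j,h}$, and $z_i'\in S^{(i)}$ is a training point for $\mathbf{v}^{(i)}_{j,h}$, so the symmetry $(S,z_i)\stackrel{d}{=}(S^{(i)},z_i')$ gives $\mathbb{E}[f(\mathbf{v}^{(i)}_{j,h};z_i')]=\mathbb{E}[f(\mathbf{v}_{j,h};z_i)]$, and after averaging over $i$ you land exactly on $\mathbb{E}[F_S(\mathbf{v}_{j,h})]$. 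A secondary point: for the $\ell_2$ part, your simultaneous appeal to batch independence for $(b/n)^2$ \emph{and} to $2\langle a,b\rangle\le\|a\|^2+\|b\|^2$ is inconsistent, since the later iterate $\mathbf{v}_{j',h'}$ depends on the earlier batch and you cannot simply factor both indicator expectations. The paper instead writes $A^{(i)}_{j,h}=(A^{(i)}_{j,h}-b/n)+b/n$; the centered cross terms vanish by conditioning on the \emph{latest} batch (which is independent of all $\mathfrak{C}$'s in the product), and the mean part is handled by Cauchy--Schwarz over the $(t+1)k$ terms.
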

\begin{remark}[Comparison with existing stability bounds for Lookahead]
For $L$-smooth, $G$-Lipschitz and convex problems, a similar $\ell_1$-stability bound was derived in ~\citep{NEURIPS2021_e53a0a29} as shown below
\[
 \frac{1}{n} \sum_{i=1}^{n}\mathbb{E}\big[  \|\mathbf{w}_{t+1}-\mathbf{w}^{(i)}_{t+1}\|_2 \big]
\le \frac{2\alpha\eta G kT}{n}.
\]
This bound grows linearly with $kT$, is independent of the mini-batch size $b$, and involves the global Lipschitz constant $G$.
Our analysis removes the global $G$-Lipschitz requirement and thus avoids the $G$ factor. A notable feature of our bound is its dependence on the empirical risk, $\mathbb{E}[F_S(\mathbf{v}_{j,h})]$, rather than the global Lipschitz constant $G$ in ~\citep{NEURIPS2021_e53a0a29}. Since the objective of the inner-loop optimizer is precisely to minimize $F_S$, we expect this term to decrease as training progresses. Consequently, our stability bounds become progressively tighter throughout the optimization process ~\citep{kuzborskij2018data,lei2020fine}. Furthermore, the bound in Eq.~\eqref{cvx_l2} provides clear intuition about the role of Lookahead's hyperparameters:
\begin{itemize}
    \item \textbf{Batch Size ($b$):} The term $1/nb$ shows that increasing the minibatch size improves stability. As a comparison, the stability analysis in \citep{NEURIPS2021_e53a0a29} does not show the effect of the batch size since their stability bound is independent of $b$.

    \item \textbf{Inner Loop Iteration Number ($k$):} The bound increases with $k$, suggesting that running the inner loop for too many steps can degrade stability, likely due to the fast weights overfitting to the training set $S$.

    \item \textbf{Outer Loop Step Size ($\alpha$):} Stability is proportional to $\alpha$. A smaller $\alpha$ dampens the influence of the potentially unstable fast weights, leading to a more stable trajectory for the slow weights. This shows a clear advantage of the Lookahead mechanism in improving the stability and generalization.
\end{itemize}
\end{remark}
 We get the generalization bound via plugging the stability bounds in Theorem \ref{thm: cvx_stab} into Lemma \ref{lem:l12_gen}. Together with the optimization bound in Lemma \ref{lem:opt_err_cvx}, we have the following excess risk bound. The proof is given in Sec \ref{sec:pf_thm3}.
We denote $A\lesssim B$ if there exists a universal constant $C>0$ such that $A\leq CB$. We denote $A\gtrsim B$ if there exists a universal constant $C$ such that $A\geq CB$. We denote $A\asymp B$ if $A\lesssim B$ and $A\gtrsim B$.
\begin{theorem}[Excess Risk Bound of Lookahead: Convex Case] \label{thm: cvx_exc_risk}
Let the assumptions of Theorem \ref{thm: cvx_stab} hold and $R=Tk$. Then for $\bar{\mathbf{v}}_{R} = \frac{1}{Tk} \sum_{t=1}^{T}\sum_{\tau = 0}^{k-1} \mathbf{v}_{\tau,t} $ and $\gamma > 0$, we have
    \begin{multline}\label{eq:cvx_exc}
        \mathbb{E} \left[ F(\overline{\mathbf{v}}_R) \right]- F(\mathbf{w}^*)   \lesssim  \frac{L\eta F(\bw^*)}{b} + \frac{1}{\alpha\eta R} + \frac{F(\mathbf{w}^*) + L\eta /b+ 1/(\alpha\eta R)}{\gamma}  \\
        +  L(L+\gamma)\alpha^2\eta^2 \left(\frac{1}{nb} +\frac{R}{n^2} \right) \left(RF(\mathbf{w}^*) + \frac{R L \eta}{b}+ \frac{1}{\alpha \eta} \right) .
    \end{multline}
\end{theorem}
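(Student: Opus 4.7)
The plan is to combine the error decomposition in Eq.~\eqref{err_decomp} (applied to the averaged iterate $\bar{\mathbf{v}}_R$) with the optimization bound of Lemma \ref{lem:opt_err_cvx} and the stability bound of Theorem \ref{thm: cvx_stab}. That is, I write
\[
\mathbb{E}[F(\bar{\mathbf{v}}_R)] - F(\mathbf{w}^*) = \mathbb{E}[F(\bar{\mathbf{v}}_R) - F_S(\bar{\mathbf{v}}_R)] + \mathbb{E}[F_S(\bar{\mathbf{v}}_R) - F_S(\mathbf{w}^*)],
\]
and attack the two pieces separately. For the optimization error, Lemma \ref{lem:opt_err_cvx} should deliver a bound of order $\tfrac{L\eta F(\mathbf{w}^*)}{b} + \tfrac{1}{\alpha\eta R}$, which accounts for the first two terms on the right-hand side of Eq.~\eqref{eq:cvx_exc}. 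As a byproduct, this also yields $\mathbb{E}[F_S(\bar{\mathbf{v}}_R)] \lesssim F(\mathbf{w}^*) + \tfrac{L\eta}{b} + \tfrac{1}{\alpha\eta R}$, which will feed the generalization bound in the next step.

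For the generalization gap, since each $f(\cdot;z)$ is nonnegative and $L$-smooth, I would invoke Lemma \ref{lem:l12_gen}(b) with a free parameter $\gamma>0$, giving
\[
\mathbb{E}[F(\bar{\mathbf{v}}_R) - F_S(\bar{\mathbf{v}}_R)] \le \frac{L}{\gamma}\mathbb{E}[F_S(\bar{\mathbf{v}}_R)] + \frac{L+\gamma}{2n}\sum_{i=1}^{n}\mathbb{E}\bigl[\|\bar{\mathbf{v}}_R - \bar{\mathbf{v}}_R^{(i)}\|_2^2\bigr].
\]
The first summand, combined with the estimate for $\mathbb{E}[F_S(\bar{\mathbf{v}}_R)]$ just mentioned, produces the $(F(\mathbf{w}^*)+L\eta/b+1/(\alpha\eta R))/\gamma$ term. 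For the $\ell_2$ on-average stability of $\bar{\mathbf{v}}_R$ I would apply Jensen's inequality to reduce to the per-iterate quantity $\tfrac{1}{n}\sum_i\mathbb{E}\|\mathbf{w}_t - \mathbf{w}_t^{(i)}\|_2^2$ (or the analogous fast-weight version established in the proof of Theorem \ref{thm: cvx_stab}) and then plug in Eq.~\eqref{cvx_l2}, taking $t+1 \asymp T$ so that $(t+1)k \asymp R$. This produces the prefactor $L(L+\gamma)\alpha^2\eta^2\bigl(\tfrac{1}{nb}+\tfrac{R}{n^2}\bigr)$ in Eq.~\eqref{eq:cvx_exc}.

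The main technical obstacle will be to recognize the cumulative term $\sum_{h,j}\eta_{j,h}^2\,\mathbb{E}[F_S(\mathbf{v}_{j,h})]$ appearing inside the stability bound as the factor $\eta^2\bigl(R F(\mathbf{w}^*)+\tfrac{RL\eta}{b}+\tfrac{1}{\alpha\eta}\bigr)$ in Eq.~\eqref{eq:cvx_exc}. To handle this, I would revisit the proof of Lemma \ref{lem:opt_err_cvx}: the standard one-step analysis of SGD for smooth convex losses, used together with the self-bounding inequality $\|\nabla f(\mathbf{v};z)\|_2^2 \le 2L f(\mathbf{v};z)$, yields a telescoping inequality of the form $\eta\mathbb{E}[F_S(\mathbf{v}_{j,h}) - F_S(\mathbf{w}^*)] \lesssim \mathbb{E}\|\mathbf{v}_{j-1,h}-\mathbf{w}^*\|^2 - \mathbb{E}\|\mathbf{v}_{j,h}-\mathbf{w}^*\|^2 + \tfrac{L\eta^2 F_S(\mathbf{w}^*)}{b}$; summing over the inner and outer loops, controlling the slow-weight drift by $\alpha$, and using $\mathbb{E}[F_S(\mathbf{w}^*)]=F(\mathbf{w}^*)$ should give the required estimate. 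Once this cumulative bound is established, the assembly of the three ingredients is mechanical and yields Eq.~\eqref{eq:cvx_exc}.
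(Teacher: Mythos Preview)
Your proposal is correct and mirrors the paper's own argument: the paper likewise applies Lemma~\ref{lem:l12_gen}(b) together with the $\ell_2$ stability bound Eq.~\eqref{cvx_l2} (extended to $\bar{\mathbf{v}}_R$ by convexity of the norm), and then controls both $\mathbb{E}[F_S(\bar{\mathbf{v}}_R)]$ and the cumulative term $\sum_{h,j}\eta_{j,h}^2\,\mathbb{E}[F_S(\mathbf{v}_{j,h})]$ using the intermediate inequality~\eqref{eq:cvx_opt_err_J} from the proof of Lemma~\ref{lem:opt_err_cvx}, before assembling everything through the decomposition~\eqref{err_decomp}. The only cosmetic difference is that the paper first records the slightly sharper factors $L\eta F(\mathbf{w}^*)/b$ in the intermediate steps and then relaxes to the $L\eta/b$ form stated in the theorem.
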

Since there are terms directly proportional to $F(\mathbf{w}^*)$, the excess risk bound will be tighter when the optimal risk $F(\mathbf{w}^*)$ is small, which is common in many machine learning problems where a model can fit the data well. Excess risk bounds with this feature are called optimistic bounds ~\citep{srebro2010smoothness}. The terms involving $F(\mathbf{w}^*)$ are directly related to gradient noise, as the variance of stochastic gradients can often be bounded by the function's value at the optimum.
\begin{remark}[Comparison with Minibatch SGD]
    The excess risk bound for Lookahead in Theorem \ref{thm: cvx_exc_risk} shares a fundamental structure with the bound for Minibatch SGD as in ~\citep{lei2023batch}. Both are optimistic bounds that explicitly depend on the optimal risk. This similarity is expected, as both analyses aim to control generalization gap by plugging stability bounds into Lemma 3.1, then adding optimization error terms. Although the structure is similar, the specific coefficients and dependencies on parameters such as $\alpha$ and the structure of the variance term differ due to the unique dynamics of the Lookahead optimizer compared to standard SGD.
\end{remark}
We now develop an explicit excess risk bound for Lookahead by choosing step sizes and number of iterations. The proof is given in Sec \ref{sec:pf_thm3}.
\begin{corollary} \label{cor: cvx_exc_risk}
Let the assumptions of Theorem \ref{thm: cvx_exc_risk} hold.
\begin{enumerate}
    \item If $F(\mathbf{w}^*) \ge 1/n$, we can take $\eta=\frac{b}{\sqrt{nF(\mathbf{w}^*)}}$, $R\asymp \frac{n}{b}$, $ \gamma = \sqrt{nF(\bw^*)} \ge 1$, and $b\le \sqrt{nF(\bw^*)}/(2L)$ to derive $\E[F(\bar{\mathbf{v}}_R)] - F(\mathbf{w}^*) \lesssim \frac{LF(\mathbf{w}^*)^{1/2}}{\sqrt{n}} + \frac{L^2}{n}$.
    \item If $F(\mathbf{w}^*) < 1/n$, we can take  $\eta = \frac{1}{2L}$, $R \asymp n$, and $\gamma = 1$ to derive $\E[F(\bar{\mathbf{v}}_R)] - F(\mathbf{w}^*) \lesssim \frac{L}{n}+  F(\mathbf{w}^*)$.
\end{enumerate}
\end{corollary}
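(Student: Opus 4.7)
The plan is to substitute the prescribed hyperparameters directly into the general excess-risk inequality~\eqref{eq:cvx_exc} of Theorem~\ref{thm: cvx_exc_risk} and show, case by case, that every resulting summand is bounded by the claimed rate. I would split the right-hand side into four natural groups, namely $\tfrac{L\eta F(\mathbf{w}^*)}{b}$, $\tfrac{1}{\alpha\eta R}$, $\tfrac{F(\mathbf{w}^*)+L\eta/b+1/(\alpha\eta R)}{\gamma}$, and the quadratic product $L(L+\gamma)\alpha^2\eta^2\bigl(\tfrac{1}{nb}+\tfrac{R}{n^2}\bigr)\bigl(RF(\mathbf{w}^*)+\tfrac{RL\eta}{b}+\tfrac{1}{\alpha\eta}\bigr)$, and evaluate each at the prescribed values of $(\eta,R,\gamma)$. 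Throughout, the constraint $\eta\le 1/L$ from Theorem~\ref{thm: cvx_stab} must be checked first: in Case~1 this is precisely what the hypothesis $b\le\sqrt{nF(\mathbf{w}^*)}/(2L)$ secures, and in Case~2 it is immediate from $\eta=1/(2L)$.

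For the first case I would record the composite scales that fall out of $\eta=b/\sqrt{nF(\mathbf{w}^*)}$ and $R\asymp n/b$, namely $\eta R\asymp\sqrt{n/F(\mathbf{w}^*)}$, $\tfrac{1}{nb}+\tfrac{R}{n^2}\asymp\tfrac{1}{nb}$, $L\eta\asymp Lb/\sqrt{nF(\mathbf{w}^*)}$, and $L(L+\gamma)\lesssim L^2+L\sqrt{nF(\mathbf{w}^*)}$ under $\gamma=\sqrt{nF(\mathbf{w}^*)}$. Plugging into the first three groups turns each summand into either $L\sqrt{F(\mathbf{w}^*)/n}$ or $L^2/n$, matching the target. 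For the fourth group I would expand the product of three sums into nine cross-terms and verify one by one that each stays below $L\sqrt{F(\mathbf{w}^*)/n}+L^2/n$, using $F(\mathbf{w}^*)\ge 1/n$ and $b\le\sqrt{nF(\mathbf{w}^*)}/(2L)$ to absorb the lower-order contributions. Case~2 is simpler: with $\eta=1/(2L)$, $\gamma=1$, $R\asymp n$ we have $\eta R\asymp n/L$, $\tfrac{1}{nb}+\tfrac{R}{n^2}\asymp\tfrac{1}{n}$, and $L(L+\gamma)\alpha^2\eta^2\lesssim 1$, so the first three groups collapse to $O(F(\mathbf{w}^*)+L/n)$; the quadratic group, after being multiplied out and divided by $n$, yields the same bound once the hypothesis $F(\mathbf{w}^*)<1/n$ is invoked to tame the noise-like contributions.

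The main obstacle is clearly the quadratic, product-of-sums term in Case~1, whose expansion produces nine summands whose sizes are not all obviously subdominant. The non-trivial point is that the choice $\gamma=\sqrt{nF(\mathbf{w}^*)}$ is tuned precisely so that the heaviest contribution, coming from $L\gamma\cdot\alpha^2\eta^2\cdot\tfrac{1}{nb}\cdot RF(\mathbf{w}^*)$, becomes of order exactly $L\sqrt{F(\mathbf{w}^*)/n}$ rather than larger; similarly the constraint $b\le\sqrt{nF(\mathbf{w}^*)}/(2L)$ is exactly what prevents the $L^2$-scale cross-terms (those multiplied by the $L^2$ piece of $L(L+\gamma)$) from exceeding $L^2/n$. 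Tracking this balancing act across all nine cross-terms, and confirming that no hidden term dominates the stated rate, is where the computation demands the most care; by contrast, Case~2 is essentially a routine substitution.
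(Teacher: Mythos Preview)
Your proposal is correct and follows essentially the same approach as the paper: direct substitution of the prescribed $(\eta,R,\gamma,b)$ into~\eqref{eq:cvx_exc}, verification that $\eta\le 1/(2L)$, and term-by-term simplification. The paper's proof is more condensed---it collapses the quadratic product into a single compact intermediate expression rather than expanding all nine cross-terms---but the underlying computation and balancing of $\gamma$ against the heaviest terms are identical to what you outline.
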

\begin{remark}
Corollary~\ref{cor: cvx_exc_risk} distinguishes between two key regimes based on the magnitude of the optimal risk $F(\mathbf{w}^*)$ relative to the sample size $n$.
\begin{enumerate}
    \item $F(\mathbf{w}^*) \ge 1/n$: Our analysis shows that the algorithm achieves an excess risk bound of $\O(\frac{1}{\sqrt{n}})$. Crucially, the number of required iterations $R$ is on the order of $n/b$, demonstrating a linear speedup~\citep{NIPS2011_b55ec28c}. This means that by increasing the minibatch size b, one can use a proportionally larger learning rate $\eta$ and achieve the same error bound with fewer iterations. This acceleration is a direct benefit of variance reduction from larger batch sizes.
    \item $F(\mathbf{w}^*) < 1/n$: Now the required number of iterations $R$ scales with $n$, irrespective of the batch size $b$. In this case, the linear speedup vanishes. The optimal learning rate becomes constant, and increasing the batch size does not reduce the number of iterations needed to reach the desired error threshold. This suggests a small stochastic gradient noise, which means variance is no longer the main limitation of the learning process.
\end{enumerate}
\end{remark}

\begin{remark}[Comparison with Existing Excess Risk Bounds with Lookahead]
The work \citep{NEURIPS2021_e53a0a29} gave the following excess risk bound for Lookahead under convexity and $G$-Lipschitz continuity assumption
\begin{equation*}
    \epsilon_{\text{opt}} + \epsilon_{\text{gen}} \le \frac{1}{2\alpha\eta kT} \mathbb{E}[\|\mathbf{w}_0 - \mathbf{w}^*\|^2] + \frac{\eta G^2}{2} + \frac{\alpha\eta G^2 kT}{n}.
\end{equation*}
By setting $\eta \asymp 1/\sqrt{n}$ and choosing $\alpha Tk \asymp n$, all three terms can be made to be of the order $O(1/\sqrt{n})$. This leads to an optimized excess risk bound of order $G^2/\sqrt{n}$, which is standard for stochastic convex optimization under a Lipschitz assumption. However, it is not adaptive and can be suboptimal in many practical scenarios. In the case of $F(\mathbf{w}^*) \ge 1/n$, our bound is of order $ \frac{L\sqrt{F(\mathbf{w}^*)}}{\sqrt{n}}$. As the optimal risk $F(\mathbf{w}^*)$ decreases, our bound becomes tighter. For problems where $L\sqrt{F(\mathbf{w}^*)} \ll G^2$, our bound is substantially sharper than the generic $O(G^2/\sqrt{n})$ rate. In the case of $F(\mathbf{w}^*) < 1/n$, our analysis reveals a much faster convergence rate of $\lesssim \frac{L}{n}$. This is a linear convergence rate with respect to the sample size $n$. Achieving an $O(1/n)$ rate is a major acceleration compared to the standard $O(1/\sqrt{n})$ rate. It shows that Lookahead can effectively leverage low-noise conditions to converge significantly faster, a behavior that the existing bound fails to capture. Furthermore, our analysis shows a linear speedup on the batch size, while the discussions in \citep{NEURIPS2021_e53a0a29} do not show the benefit of considering minibatch in both generalization and optimization.

\end{remark}
\subsection{Strongly Convex Case}
We now consider strongly convex problems. The following theorem provides stability bounds for Lookahead. The proof is given in Sec \ref{sec: pf_thm}.
\begin{theorem}[Stability Bound of Lookahead: Strongly Convex Case]\label{thm:strcvx_stab}
Suppose the map $\mathbf{w} \mapsto f(\mathbf{w};z)$ is $\mu$-strongly convex, nonnegative and $L$-smooth for all $z \in \mathcal{Z}$. Let $\left\{ \mathbf{v}_{\tau,t}\right\}$ and $\{\mathbf{w}_t\}$ , $\{\mathbf{v}_{\tau,t}^{(i)}\}$ and $\{\mathbf{w}_t^{(i)}\}$ be produced based on $S$ and $S^{(i)}$ respectively with $\frac{2\ln2}{k\mu}\le\eta_{\tau,t} \le \frac{1}{L}$. We have
\begin{align}\label{strcvx_l1}
    \frac{1}{n} \sum^{n}_{i=1}\mathbb{E} \big[\norm{\mathbf{w}_{t+1} - \mathbf{w}^{(i)}_{t+1} }_2\big]  \leq  \frac{2\alpha\sqrt{2L}}{n}\sum_{t' = 1}^{t+1}  (1-\frac{\alpha}{2})^{t+1-t'}\sum_{j=0}^{k-1}  \eta_{j,t'} \sqrt{ \mathbb{E} \left[F_S\left(\mathbf{v}_{j,t'}\right)\right] } \prod_{j' = j+1}^{k-1} \left(1- \frac{\mu \eta_{j',t'}}{2}\right)
\end{align}
and
\begin{align}\label{strcvx_l2}
    \frac{1}{n}\sum_{i=1}^{n} \mathbb{E} \big[ \norm{\mathbf{w}_{t+1} - \mathbf{w}^{(i)}_{t+1} }_2^2 \big] \leq \sum_{t'=1}^{t+1} \sum_{j=0}^{k-1} \Big( \frac{16\alpha^2 \eta_{j,t'}^2}{nb} + \frac{32\left(t+1\right) \alpha^2\eta_{j,t'}}{n^2\mu}\Big) \mathbb{E} \left[ F_S\left(\mathbf{v}_{j,t'}\right)\right] \prod_{j' = j+1}^{k-1} \Big(1- \frac{\mu \eta_{j',t'}}{2}\Big)^2.
\end{align}
\end{theorem}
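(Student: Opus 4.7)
The plan is to derive a one-step expected recursion for the perturbation inside the SGD inner loop, unroll it over the $k$ fast steps to bound $\mathbb{E}\|\mathbf{v}_{k,t}-\mathbf{v}_{k,t}^{(i)}\|$ and its square, then combine with the outer convex combination $\mathbf{w}_t=(1-\alpha)\mathbf{w}_{t-1}+\alpha \mathbf{v}_{k,t}$ and unroll over the $t$ outer iterations. The structural fact that drives the strongly convex rate is that the lower bound $\eta_{\tau,t}\ge 2\ln 2/(k\mu)$ gives $\prod_{\tau=0}^{k-1}(1-\mu\eta_{\tau,t}/2)\le \exp(-\sum_\tau \mu\eta_{\tau,t}/2)\le 1/2$, so after $k$ inner steps any initial discrepancy is contracted by at least a factor $1/2$. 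Plugging this into the outer update produces an effective $(1-\alpha/2)$-contraction per outer iteration, which is exactly the factor $(1-\alpha/2)^{t+1-t'}$ appearing in \eqref{strcvx_l1}.

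For the $\ell_1$ bound I would condition on whether the minibatch $\mathcal{B}_{\tau+1,t}$ contains the perturbed index $z_i$. On the event $\{z_i\notin\mathcal{B}_{\tau+1,t}\}$ the minibatch-average gradient is the same function in both trajectories, and since this average inherits $\mu$-strong convexity and $L$-smoothness the step is $(1-\mu\eta_{\tau,t})$-contractive under $\eta_{\tau,t}\le 1/L$, hence also $(1-\mu\eta_{\tau,t}/2)$-contractive. On the complementary event (which occurs for a $b/n$ fraction of indices after averaging over $i$) I bound the gradient-mismatch term using the self-bounding inequality $\|\nabla f(\mathbf{w};z)\|_2\le \sqrt{2L f(\mathbf{w};z)}$, which produces a contribution of order $\eta_{\tau,t}\sqrt{F_S(\mathbf{v}_{\tau,t})}/n$ after averaging over $i$ and applying Jensen's inequality to move the square root outside the expectation. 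Unrolling the resulting scalar recursion across the $k$ inner steps and combining the $(1-\alpha)$ and $\alpha$ coefficients of the outer update via the $1/2$ inner contraction yields $Y_{t+1}\le (1-\alpha/2)Y_t+(\text{noise at step }t{+}1)$, which on iterating over $t$ gives \eqref{strcvx_l1}.

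The $\ell_2$ analysis follows the same skeleton but requires a variance-aware decomposition. In the per-step recursion I would apply Young's inequality $(a+c)^2\le(1+p)a^2+(1+p^{-1})c^2$ with $p$ of order $\mu\eta_{\tau,t}$, which converts the $(1-\mu\eta_{\tau,t})^2$ contraction into the $(1-\mu\eta_{\tau,t}/2)^2$ factor displayed in \eqref{strcvx_l2} at the cost of a noise residual scaled by $1/(\mu\eta_{\tau,t})$. Squaring the batch-averaged stochastic gradient produces an extra $1/b$ from variance reduction, yielding the $1/(nb)$ coefficient of the first family of terms; the self-bounding inequality is now used in its squared form $\|\nabla f(\mathbf{w};z)\|_2^2\le 2L f(\mathbf{w};z)$. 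A second Young application at the outer level, invoked when combining $(1-\alpha)\|\mathbf{w}_{t-1}-\mathbf{w}_{t-1}^{(i)}\|_2^2$ with $\alpha\|\mathbf{v}_{k,t}-\mathbf{v}_{k,t}^{(i)}\|_2^2$ and unrolling across $t+1$ outer iterations, produces the $(t+1)\alpha^2/(n^2\mu)$ coefficient; the $1/\mu$ originates from the geometric sum $\sum_\tau(1-\mu\eta/2)\asymp 1/(\mu\eta)$ that absorbs one factor of $\eta$ from the residual, while the $(t+1)/n^2$ scaling (as opposed to $1/(nb)$) reflects that this term arises from the cumulative cross-index contribution rather than the per-step batch variance.

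The main obstacle will be the $\ell_2$ recursion: one must calibrate the Young parameters so that the leading term cleanly carries the $(1-\mu\eta_{\tau,t}/2)^2$ contraction, the residuals separate into precisely the two coefficient families displayed in \eqref{strcvx_l2}, and the product $\prod_{j'=j+1}^{k-1}(1-\mu\eta_{j',t'}/2)^2$ is preserved through both the $k$-step inner unroll and the $(1-\alpha/2)^{t+1-t'}$ outer geometric series. Keeping this multiplicative bookkeeping sharp while retaining the optimistic $F_S(\mathbf{v}_{j,t'})$ dependence is the most delicate step of the argument.
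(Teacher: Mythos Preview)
Your $\ell_1$ plan is essentially the paper's argument: the paper tracks the multiplicity $A^{(i)}_{\tau,t}$ of index $i$ in the batch (with $\mathbb{E}[A^{(i)}_{\tau,t}]=b/n$), applies the strongly convex contraction from Lemma~\ref{lem:lem8}, uses self-bounding for the mismatch term, unrolls the inner loop, invokes $\prod_{j}(1-\mu\eta_{j,t}/2)\le 1/2$ to collapse $(1-\alpha)+\alpha\cdot(\text{product})$ into $(1-\alpha/2)$, and then unrolls the outer loop. So that part is fine.

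Your $\ell_2$ plan, however, diverges from the paper's proof and contains a misreading of the target. First, note that the stated $\ell_2$ bound \eqref{strcvx_l2} has \emph{no} outer geometric factor $(1-\alpha/2)^{t+1-t'}$; it is a flat sum over $t'=1,\dots,t+1$, and the second coefficient carries an explicit $(t+1)$. This structure does not arise from a per-step Young recursion followed by an outer contraction---that would give geometric decay in $t'$, not a $(t+1)$ multiplier. Second, the paper does not use Young's inequality at all for \eqref{strcvx_l2}. Instead it keeps the $\ell_1$-type \emph{pathwise} inequality
\[
\|\mathbf{w}_{t+1}-\mathbf{w}_{t+1}^{(i)}\|_2 \le \frac{\alpha}{b}\sum_{t'=1}^{t+1}\sum_{j=0}^{k-1}\eta_{j,t'}A^{(i)}_{j,t'}\mathfrak{C}^{(i)}_{j,t'}\prod_{j'=j+1}^{k-1}\Bigl(1-\tfrac{\mu\eta_{j',t'}}{2}\Bigr)
\]
with the random multiplicities $A^{(i)}_{j,t'}$ still inside (here the outer contraction is deliberately thrown away via $(1-\alpha)+\alpha\cdot(\text{product})\le 1$). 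It then splits $A^{(i)}_{j,t'}=(A^{(i)}_{j,t'}-b/n)+b/n$, squares, and uses that the centered terms $A^{(i)}_{j,t'}-b/n$ are mean-zero and independent across $(j,t')$ so all cross-products vanish. The diagonal piece together with $\mathrm{Var}(A^{(i)}_{j,t'})\le b/n$ yields the $16\alpha^2\eta_{j,t'}^2/(nb)$ coefficient; the mean piece is handled by Cauchy--Schwarz against the weights $\eta_{j,t'}\prod_{j'>j}(1-\mu\eta_{j',t'}/2)$ and the telescoping identity $\sum_{j=0}^{k-1}\eta_{j,t'}\prod_{j'>j}(1-\mu\eta_{j',t'}/2)\le 2/\mu$, summed over the $t+1$ outer steps, which is exactly where the $32(t+1)\alpha^2\eta_{j,t'}/(n^2\mu)$ coefficient comes from. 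Your attribution of the $1/(nb)$ and $(t+1)/(n^2\mu)$ terms to ``batch variance'' versus ``cross-index contribution'' and to two layers of Young's inequality does not match this mechanism; if you pursue a Young-based recursion you will produce a bound with a different shape than \eqref{strcvx_l2}.
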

Eq.~\eqref{strcvx_l1} provides an $\ell_1$-on-average stability bound. A key feature of this bound is its dependence on the empirical risk, $\sqrt{\E[F_S(\mathbf{v}_{j,t'})]}$. This indicates that the stability of the Lookahead algorithm improves as it finds iterates with smaller empirical risks. Eq.~\eqref{strcvx_l2}  provides an $\ell_2$-on-average stability bound. This bound explicitly shows the benefit of minibatching. The term $\frac{16\alpha^2\eta_{j,t'}}{nb}$ demonstrates that increasing the batch size $b$ directly improves the stability bound by reducing the variance introduced by the stochastic gradients. This is a crucial property for large-scale learning, confirming that larger batches contribute to a more stable training process for the Lookahead algorithm.

\begin{theorem}[Excess Risk Bound of Lookahead: Strongly Convex Case]\label{thm:strcvx_exc} Let assumptions in Theorem \ref{thm:strcvx_stab}  hold and let $\eta = \frac{b\mu}{2L^2(b+1)}$,
$k =  \frac{2L}{\alpha\mu}$ and $
T \asymp \log(\mu n)$, we have
\begin{align}
    \mathbb{E} [F(\mathbf{w}_T)] - F(\mathbf{w}^*)  \lesssim \frac{1}{n\mu}  + \big( \frac{1}{nL } + 1 \big)\mathbb{E} [F_S(\mathbf{w}_S)] + \big( \frac{1}{n^2} + \frac{L}{n} \big)\E [\| \mathbf{w}_0 - \mathbf{w}_S \|^2].
\end{align}
\end{theorem}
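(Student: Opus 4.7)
The proof plan is to follow the same high-level strategy as in the convex case (Theorem~\ref{thm: cvx_exc_risk}): first decompose the excess risk into a generalization component and an optimization component, then bound the generalization term via Lemma~\ref{lem:l12_gen}(b) combined with the $\ell_2$ on-average stability bound from Theorem~\ref{thm:strcvx_stab}, then control the optimization term by exploiting the contraction supplied by strong convexity plus the Lookahead interpolation, and finally tune the free parameter $\gamma$ together with the prescribed values of $\eta$, $k$, $T$ to extract the stated rate. Since $\mathbf{w}_S=\arg\min_{\mathbf{w}\in\wcal}F_S(\mathbf{w})$ satisfies $F_S(\mathbf{w}_S)\le F_S(\mathbf{w}^*)$ and $\mathbb{E}[F_S(\mathbf{w}^*)]=F(\mathbf{w}^*)$, I will start from the bound
$\mathbb{E}[F(\mathbf{w}_T)-F(\mathbf{w}^*)] \le \mathbb{E}[F(\mathbf{w}_T)-F_S(\mathbf{w}_T)] + \mathbb{E}[F_S(\mathbf{w}_T)-F_S(\mathbf{w}_S)]$.

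For the generalization piece I apply Lemma~\ref{lem:l12_gen}(b) with a yet-to-be-chosen $\gamma>0$ and substitute the $\ell_2$ stability bound Eq.~\eqref{strcvx_l2}. The crucial simplification is the nested product $\prod_{j'=j+1}^{k-1}(1-\mu\eta_{j',t'}/2)^2$: together with the hypothesis $\eta_{\tau,t}\ge 2\ln 2/(k\mu)$, which forces $(1-\mu\eta/2)^k\le 1/2$, summing over the inner index $j$ yields a geometric series of total mass $\O(1/(\mu\eta))$, while summing over the outer index $t'$ up to $T$ gives overall prefactors of order $\frac{T\alpha^2\eta}{nb\mu}$ and $\frac{T^2\alpha^2}{n^2\mu^2}$, each multiplied by an upper bound on $\mathbb{E}[F_S(\mathbf{v}_{j,t'})]$. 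For the optimization piece I will use that the Lookahead interpolation composed with the $k$-step strongly convex SGD contraction gives $\|\mathbf{w}_t-\mathbf{w}_S\|^2\lesssim(1-\alpha/2)^2\|\mathbf{w}_{t-1}-\mathbf{w}_S\|^2+\text{residual variance}$; iterating $T$ steps and using $L$-smoothness to convert to function values provides an optimization error of the form $(1-\alpha/2)^{2T}\|\mathbf{w}_0-\mathbf{w}_S\|^2$ plus a noise term proportional to $\eta F_S(\mathbf{w}_S)/\mu$.

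The parameter-choice step then glues everything together. I will establish a uniform estimate $\mathbb{E}[F_S(\mathbf{v}_{j,t'})]\lesssim \mathbb{E}[F_S(\mathbf{w}_S)] + (1-\alpha/2)^{2t'}\|\mathbf{w}_0-\mathbf{w}_S\|^2 + \text{noise}$ along the trajectory and plug it back into the stability bound. Substituting $\eta=b\mu/(2L^2(b+1))$, $k=2L/(\alpha\mu)$, and $T\asymp\log(\mu n)$ collapses the $(1-\alpha/2)^{2T}$ factor to $\O(1/(n\mu))$ up to constants, verifies the compatibility of the step-size constraints $\eta\le 1/L$ and $\eta\ge 2\ln 2/(k\mu)$, and turns the $\frac{T\alpha^2\eta}{nb\mu}$ prefactor into the leading $1/(n\mu)$ term. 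A convenient choice of $\gamma$ (e.g.\ proportional to $L$) then balances the $L\,\mathbb{E}[F_S(\mathbf{w}_T)]/\gamma$ term from Lemma~\ref{lem:l12_gen}(b) against the $(L+\gamma)$ factor multiplying the stability contribution, producing the coefficients $1/(nL)+1$ in front of $\mathbb{E}[F_S(\mathbf{w}_S)]$ and $1/n^2+L/n$ in front of $\mathbb{E}[\|\mathbf{w}_0-\mathbf{w}_S\|^2]$.

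The main obstacle is producing a sufficiently clean and uniform control of $\mathbb{E}[F_S(\mathbf{v}_{j,t'})]$ across both indices and then summing this estimate against the two nested geometric decays $(1-\mu\eta/2)^{2(k-1-j)}$ (inner) and $(1-\alpha/2)^{2(T-t')}$ (outer) without incurring logarithmic blow-ups that would spoil the final rate; the fact that the statement of Theorem~\ref{thm:strcvx_exc} has no $\alpha$-dependence hints that one has to exploit the specific cancellation $k\cdot\alpha = 2L/\mu$ to make the $\alpha$-factors disappear. A secondary, more technical obstacle is checking that the prescribed $\eta$ and $k$ indeed satisfy both $\eta\le 1/L$ and $\eta\ge 2\ln 2/(k\mu)$, which reduces to a condition on $\alpha$ (namely $\alpha \lesssim b\mu/(L(b+1))$) that will need to be stated explicitly.
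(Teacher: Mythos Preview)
Your overall strategy—decompose into generalization plus optimization, control the optimization error via a strongly-convex contraction recurrence on $\|\mathbf{w}_t-\mathbf{w}_S\|^2$, and finish by inserting the prescribed $\eta,k,T$—matches the paper's. But on the generalization piece the paper does \emph{not} use Lemma~\ref{lem:l12_gen}(b) with the $\ell_2$ bound~\eqref{strcvx_l2} as you propose; it uses Lemma~\ref{lem:l12_gen}(a) together with the $\ell_1$ bound~\eqref{strcvx_l1}. The reason this matters is that the $\ell_1$ bound carries the outer geometric factor $(1-\alpha/2)^{T-t'}$ explicitly, so summing over $t'$ produces an $O(1/\alpha)$ rather than an $O(T)$ prefactor; this is precisely what keeps the final bound free of $\log(\mu n)$ factors and avoids any $1/\mu^2$ dependence when handling the $\mathbb{E}[F_S(\mathbf{w}_S)]$ contribution.

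There is a concrete gap in your plan here: Eq.~\eqref{strcvx_l2} has \emph{no} outer decay $(1-\alpha/2)^{2(T-t')}$. If you trace the proof of Theorem~\ref{thm:strcvx_stab}, the $\ell_2$ argument first establishes an almost-sure recursion on $\|\mathbf{w}_{t+1}-\mathbf{w}_{t+1}^{(i)}\|_2$ in which the coefficient $(1-\alpha)+\alpha\prod_j(1-\mu\eta_j/2)$ is simply bounded by $1$; the outer contraction is discarded before squaring. So the ``two nested geometric decays'' you plan to sum against do not both exist in~\eqref{strcvx_l2}—only the inner one does. Any outer decay would have to come entirely from your trajectory estimate $\mathbb{E}[F_S(\mathbf{v}_{j,t'})]\lesssim\mathbb{E}[F_S(\mathbf{w}_S)]+\rho^{t'}L\,\mathbb{E}\|\mathbf{w}_0-\mathbf{w}_S\|^2$, but that estimate carries no $t'$-decay on the $\mathbb{E}[F_S(\mathbf{w}_S)]$ part, so summing over $t'$ picks up a factor $T\asymp\log(\mu n)$ there, and the $\frac{T\alpha^2\eta}{n^2\mu}$ coefficient in~\eqref{strcvx_l2} then yields terms of order $\alpha^2/\mu^2$. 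With the constraint $\alpha\lesssim\mu/L$ you can likely still absorb these into the stated bound up to logarithms, but the clean route is the paper's via~\eqref{strcvx_l1}. (A side remark: the paper's use of Lemma~\ref{lem:l12_gen}(a) formally requires a bound on $\sup_z\|\nabla f(A(S);z)\|$, which is quietly absorbed into the $\lesssim$; your $\ell_2$ route is conceptually cleaner in that respect, but you must first repair the outer-decay issue.)
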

\begin{remark}[Comparison with Existing Excess Risk Bound with Lookahead]
    Compared with the existing Lookahead bound in the work ~\citep{NEURIPS2021_e53a0a29}, which yields a sum of terms of order $\O(1/(\lambda^2((t+1)k)^{2\alpha})) + \O(G/(n\lambda))$ and therefore requires $tk$ to scale polynomially with $n$ to reach the $\O(1/n)$ regime, our Theorem 6 delivers a fast-rate excess risk of order $1/(n \mu)$ with only $T \asymp \log(\mu n)$ iterations. Moreover, our bound is adaptive: it tightens with the data through $ (1/(nL)+1)\mathbb{E}[F_S(\mathbf{w}_S)]$  and through $ (1/n^2 + L/n)\mathbb{E}[\|\mathbf{w}_0 - \mathbf{w}_S\|^2]$, becoming much smaller under interpolation, which is not captured by the existing result. Finally, the stepsize $\eta$ scales with the minibatch $b$, implying linear speedup in $b$, while prior analyses do not show such minibatch gains.
\end{remark}
\section{Proof of Results in Section~\ref{sec_4}\label{sec:proofs}}
\subsection{Proof of Theorem \ref{thm: cvx_stab} \label{sec: pf_thm2}}
Our proof of Theorem~\ref{thm: cvx_stab} relies on the following two lemmas. Lemma~\ref{lem:self_bounding} shows the self-bounding property for nonnegative and smooth functions, meaning that the norm of gradients can be bounded by function values. Lemma~\ref{lem:lem8} establishes the co-coercivity of smooth and convex functions, as well as the non-expansiveness of the gradient operator $\bw\mapsto \bw-\eta\nabla f(\bw;z)$.
\begin{lemma}[Self-Bounding Property~\citep{srebro2010smoothness}]\label{lem:self_bounding}
Assume for all $z$, the function $\bw \mapsto f(\bw;z)$ is nonnegative and $L$-smooth. Then
\[
\norm{\nabla f\left(\bw;z\right)}_2^2 \leq 2L f\left(\bw;z\right).
\]
\end{lemma}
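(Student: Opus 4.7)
The plan is to prove the self-bounding property by combining $L$-smoothness with the nonnegativity of $f(\cdot;z)$, using a one-step gradient descent argument. This is a standard technique in smooth optimization, and the key insight is that a single well-chosen gradient step produces an upper bound on the objective that must itself be nonnegative, which in turn constrains the gradient norm.

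First I would fix an arbitrary $\bw \in \wcal$ and an arbitrary $z \in \mathcal{Z}$, and apply the descent lemma that follows from $L$-smoothness: for every $\bw' \in \wcal$,
\[
f(\bw';z) \leq f(\bw;z) + \inner{\nabla f(\bw;z)}{\bw'-\bw} + \frac{L}{2}\norm{\bw'-\bw}_2^2.
\]
I would then specialize to $\bw' = \bw - \tfrac{1}{L}\nabla f(\bw;z)$, which is the minimizer of the right-hand side. Plugging this in and simplifying yields
\[
f(\bw';z) \leq f(\bw;z) - \frac{1}{2L}\norm{\nabla f(\bw;z)}_2^2.
\]

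Next, I would use the nonnegativity assumption $f(\bw';z)\geq 0$ to obtain
\[
0 \leq f(\bw;z) - \frac{1}{2L}\norm{\nabla f(\bw;z)}_2^2,
\]
and rearrange to conclude $\norm{\nabla f(\bw;z)}_2^2 \leq 2L f(\bw;z)$, as required.

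There is no real obstacle here: the proof is a two-line calculation once the descent lemma is invoked at the right trial point. The only subtlety is ensuring that the point $\bw - \tfrac{1}{L}\nabla f(\bw;z)$ lies in the domain where nonnegativity applies; since the hypothesis is stated for all $z$ and $f(\cdot;z)\geq 0$ on the relevant parameter space (typically $\mathbb{R}^d$ or a space on which the descent lemma and nonnegativity both hold), this is not an issue. If one wanted to be fully rigorous about constrained $\wcal$, one would note that the descent inequality above holds whenever smoothness and nonnegativity extend to a neighborhood containing the trial point, which is the standard setting for the self-bounding property.
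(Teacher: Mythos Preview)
Your proof is correct and is exactly the standard argument for the self-bounding property. The paper does not actually give its own proof of this lemma---it simply cites \citep{srebro2010smoothness}---and the argument you supply (descent lemma at the gradient step point, then nonnegativity) is the one found there.
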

\begin{lemma}[\citep{hardt2016train}]\label{lem:lem8}
Assume for all $z \in Z$, the function $\bw \mapsto f(\bw;z)$ is convex and $L$-smooth. Then for $\eta \leq 2/L$ we have
\[
\norm{\left(\bw - \eta \nabla f\left(\bw;z\right)\right) - \left(\bw' - \eta \nabla f\left(\bw';z\right)\right)}_2 \leq \norm{\bw - \bw'}_2.
\]
Furthermore, if $\bw \mapsto f(\bw;z)$ is $\mu$-strongly convex and $\eta \leq 1/L$ then
\begin{align*}
\norm{\left(\bw - \eta \nabla f\left(\bw;z\right)\right) - \left(\bw' - \eta \nabla f\left(\bw';z\right)\right)}_2 &\leq \left(1 - \eta \mu/2\right) \norm{\bw - \bw'}_2, \\
\norm{\left(\bw - \eta \nabla f\left(\bw;z\right)\right) - \left(\bw' - \eta \nabla f\left(\bw';z\right)\right)}_2^2 &\leq \left(1 - \eta \mu\right) \norm{\bw - \bw'}_2^2.
\end{align*}
\end{lemma}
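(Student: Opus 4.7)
The plan is to reduce both inequalities to the standard algebraic expansion of the gradient step and then apply co-coercivity bounds on $\nabla f(\cdot;z)$. Let $T_z(\bw) := \bw - \eta\nabla f(\bw;z)$, so the quantity on the left-hand side is $\|T_z(\bw) - T_z(\bw')\|_2$. The starting identity is
\begin{equation*}
\|T_z(\bw) - T_z(\bw')\|_2^2 = \|\bw-\bw'\|_2^2 - 2\eta \inner{\nabla f(\bw;z) - \nabla f(\bw';z)}{\bw-\bw'} + \eta^2 \|\nabla f(\bw;z) - \nabla f(\bw';z)\|_2^2,
\end{equation*}
which reduces the task to showing the inner-product term dominates the squared gradient-difference term under the stated stepsize conditions.

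For the convex case I would invoke the Baillon--Haddad co-coercivity inequality for $L$-smooth convex functions:
\begin{equation*}
\inner{\nabla f(\bw;z) - \nabla f(\bw';z)}{\bw-\bw'} \ge \tfrac{1}{L}\|\nabla f(\bw;z) - \nabla f(\bw';z)\|_2^2.
\end{equation*}
Plugging this into the expansion lumps the last two terms into $\eta(\eta - 2/L)\|\nabla f(\bw;z) - \nabla f(\bw';z)\|_2^2$, which is nonpositive whenever $\eta \le 2/L$. The first inequality then follows by taking square roots.

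For the strongly convex case, the cleanest route is to split $f(\bw;z) = g(\bw) + \tfrac{\mu}{2}\|\bw\|_2^2$, where $g$ is convex and $(L-\mu)$-smooth. Then $T_z(\bw) = (1-\eta\mu)\bw - \eta\nabla g(\bw)$, so
\begin{equation*}
\|T_z(\bw) - T_z(\bw')\|_2^2 = (1-\eta\mu)^2\|\bw-\bw'\|_2^2 - 2\eta(1-\eta\mu)\inner{\nabla g(\bw)-\nabla g(\bw')}{\bw-\bw'} + \eta^2\|\nabla g(\bw)-\nabla g(\bw')\|_2^2.
\end{equation*}
Applying Baillon--Haddad to $g$ with constant $L-\mu$, the residual coefficient of $\|\nabla g(\bw)-\nabla g(\bw')\|_2^2$ becomes $\frac{\eta(\eta(L+\mu)-2)}{L-\mu}$, which is nonpositive for $\eta \le 2/(L+\mu)$, a condition implied by $\eta \le 1/L$ (since $\mu \le L$). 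This yields the sharp contraction $\|T_z(\bw) - T_z(\bw')\|_2^2 \le (1-\eta\mu)^2 \|\bw-\bw'\|_2^2$. Taking square roots gives the middle inequality directly with factor $(1-\eta\mu) \le (1-\eta\mu/2)$, and the third inequality follows from $(1-\eta\mu)^2 \le 1-\eta\mu$ for $\eta\mu \in [0,1]$, which holds because $\eta\mu \le \mu/L \le 1$.

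The only mildly delicate step is the strong-convexity half, where one has to check that the chosen stepsize regime $\eta \le 1/L$ is compatible with the threshold $\eta \le 2/(L+\mu)$ extracted from the algebra; this just reduces to $\mu \le L$. Everything else is elementary expansion plus the classical co-coercivity of smooth convex gradients, so there is no substantial obstacle beyond picking the right reduction that allows a single appeal to Baillon--Haddad in each case.
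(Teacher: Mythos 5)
Your argument is correct, and it is worth noting that the paper itself gives no proof of this lemma at all --- it is imported verbatim from \citet{hardt2016train} as a black box. Your convex half is the standard argument (expand the square, apply the Baillon--Haddad co-coercivity bound, observe the residual coefficient $\eta(\eta-2/L)\le 0$), which is essentially how the cited reference proves it. Your strongly convex half takes a genuinely different and arguably cleaner route: rather than invoking the combined inequality $\inner{\nabla f(\bw;z)-\nabla f(\bw';z)}{\bw-\bw'}\ge \frac{\mu L}{\mu+L}\norm{\bw-\bw'}_2^2+\frac{1}{\mu+L}\norm{\nabla f(\bw;z)-\nabla f(\bw';z)}_2^2$ used in \citet{hardt2016train} (which yields the contraction factor $1-\frac{\eta\mu L}{\mu+L}\le 1-\eta\mu/2$), you peel off the quadratic $\frac{\mu}{2}\norm{\cdot}_2^2$ and apply co-coercivity to the convex $(L-\mu)$-smooth remainder $g$. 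This reduces the strongly convex case to a single further appeal to Baillon--Haddad and actually delivers the \emph{sharper} factors $(1-\eta\mu)$ and $(1-\eta\mu)^2$, which dominate the stated $(1-\eta\mu/2)$ and $(1-\eta\mu)$ since $\eta\mu\le\mu/L\le 1$. Two small points you should make explicit to be airtight: (i) the degenerate case $L=\mu$, where you divide by $L-\mu$, must be handled separately --- there $\nabla g$ is constant, the cross and quadratic terms in $\nabla g$ vanish, and the bound $(1-\eta\mu)^2\norm{\bw-\bw'}_2^2$ is immediate; and (ii) the claim that $g=f-\frac{\mu}{2}\norm{\cdot}_2^2$ is $(L-\mu)$-smooth without assuming twice differentiability, which follows from the standard equivalence that a convex $h$ is $L'$-smooth iff $\frac{L'}{2}\norm{\cdot}_2^2-h$ is convex. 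Neither is a substantive gap.
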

\noindent We can now prove Theorem \ref{thm: cvx_stab}. For simplicity, we define $J_{\tau,t} = \{ i_{\tau,t}^{(1)}, \ldots, i_{\tau,t}^{(b)}\}$, where $i_{\tau,t}^{(j)}\sim\text{Unif}([n])$ is the $j$-th index sampled to compute a stochastic gradient for minibatch SGD, i.e., $\bcal_{\tau,t}=\{z_{i_{\tau,t}^{(1)}},\ldots,z_{i_{\tau,t}^{(b)}}\}$.
\begin{proof}
    To begin with, define
    $$
    A^{(m)}_{\tau,t} = \lvert\{j:i^{(j)}_{\tau,t} = m\}\rvert,
    $$
    that is, $ A^{(m)}_{\tau,t}$ represents the number of indices equal to $m$ in the batch of $t$-th outer loop iteration, and $\tau$-th inner loop iteration. Then we can reformulate the
    Lookahead update as
    \begin{equation} \label{eq:refined_update}
    \begin{split}
        \mathbf{w}_{t+1} & = \left(1-\alpha\right)\mathbf{w}_t + \alpha \mathbf{v}_{k,t+1} \\
        &=\left(1-\alpha\right)\mathbf{w}_t + \alpha \Big(\mathbf{v}_{k-1,t+1} - \frac{\eta_{k-1,t+1}}{b} \sum_{m=1}^{n} A^{(m)}_{k-1,t+1} \nabla f(\mathbf{v}_{k-1,t+1};z_m)\Big),\\
        \mathbf{w}^{(i)}_{t+1}
        &=  \left(1-\alpha\right)\mathbf{w}^{(i)}_t + \alpha \Big(\mathbf{v}^{(i)}_{k-1,t+1}- \frac{\eta_{k-1,t+1}}{b} \sum_{m:m\ne i}^{n} A^{(m)}_{k-1,t+1} \nabla f(\mathbf{v}^{(i)}_{k-1,t+1};z_m) \\
        &- \frac{A^{(i)}_{k,t+1} \eta_{k-1,t+1}}{b}  \nabla f(\mathbf{v}^{(i)}_{k-1,t+1};z'_i)\Big),
    \end{split}
    \end{equation}
from which we know
\begin{align*}
    \norm{\mathbf{w}_{t+1} - \mathbf{w}^{(i)}_{t+1} }_2 & \leq \big(1-\alpha\big) \norm{ \mathbf{w}_{t} - \mathbf{w}^{(i)}_{t} }_2 + \alpha \norm{ \mathbf{v}_{k,t+1} - \mathbf{v}^{(i)}_{k,t+1} }_2\\
    & \leq   \left(1-\alpha\right) \| \mathbf{w}_{t} - \mathbf{w}^{(i)}_{t} \|_2 + \alpha \big\| \mathbf{v}_{k-1,t+1}- \frac{\eta_{k-1,t+1}}{b} \sum_{m:m\ne i}^{n} A^{(m)}_{k-1,t+1} \nabla f(\mathbf{v}_{k-1,t+1};z_m)\\
    & - \frac{A^{(i)}_{k-1,t+1} \eta_{k-1,t+1}}{b}  \nabla f\left(\mathbf{v}_{k-1,t+1};z_i\right) - \mathbf{v}^{(i)}_{k-1,t+1} + \frac{\eta_{k-1,t+1}}{b} \sum_{m:m\ne i}^{n} A^{(m)}_{k-1,t+1} \nabla f(\mathbf{v}^{(i)}_{k-1,t+1};z_m) \\
    & + \frac{A^{(i)}_{k-1,t+1} \eta_{k-1,t+1}}{b}  \nabla f(\mathbf{v}^{(i)}_{k-1,t+1};z'_i) \big\|_2.
\end{align*}
Define $\mathfrak{C}^{(i)}_{k-1,t+1} = \norm{ \nabla f\left(\mathbf{v}_{k-1,t+1};z_i\right) - \nabla f(\mathbf{v}^{(i)}_{k-1,t+1};z'_i) }_2$. By assumption, $f$ is $L$-smooth and $\sum_{m:m\ne i}^{n} A^{(m)}_{k-1,t+1} \leq b$, from which we know $\bv \mapsto \frac{1}{b} \sum_{m:m\ne i}^{n} A^{(m)}_{k-1,t+1} f\left(\bv;z_m\right) $ is $L$-smooth. Since by assumption $\eta_{k-1,t+1} \leq \frac{1}{L}$, by Lemma \ref{lem:lem8} we have
\begin{equation} \label{eq:gradient_bound}
    \begin{split}
    &\norm{\mathbf{w}_{t+1} - \mathbf{w}^{(i)}_{t+1} }_2 \\
    &\le (1-\alpha) \norm{ \mathbf{w}_{t} - \mathbf{w}^{(i)}_{t} }_2 + \frac{\alpha A^{(i)}_{k-1,t+1} \eta_{k-1,t+1}}{b} \|   \nabla f\left(\mathbf{v}_{k-1,t+1};z_i\right) -  \nabla f(\mathbf{v}^{(i)}_{k-1,t+1};z'_i) \big \|_2 \\
    &+ \alpha \big\| \mathbf{v}_{k-1,t+1}- \frac{\eta_{k-1,t+1}}{b} \sum_{m:m\ne i}^{n} A^{(m)}_{k-1,t+1} \nabla f(\mathbf{v}_{k-1,t+1};z_m)  - \big(\mathbf{v}^{(i)}_{k-1,t+1} - \frac{\eta_{k-1,t+1}}{b} \sum_{m:m\ne i}^{n} A^{(m)}_{k-1,t+1} \nabla f(\mathbf{v}^{(i)}_{k-1,t+1};z_m)\big) \big\|_2\\
    & \leq  \left(1-\alpha\right) \norm{ \mathbf{w}_{t} - \mathbf{w}^{(i)}_{t} }_2 +  \frac{\alpha \eta_{k-1,t+1}  A^{(i)}_{k-1,t+1} \mathfrak{C}^{(i)}_{k-1,t+1}}{b} + \alpha \norm{ \mathbf{v}_{k-1,t+1} - \mathbf{v}^{(i)}_{k-1,t+1} }_2.
    \end{split}
\end{equation}
Note the above inequality actually shows a recurrent relationship on $\norm{ \mathbf{v}_{k,t+1} - \mathbf{v}^{(i)}_{k,t+1} }_2$ and $\norm{ \mathbf{v}_{k-1,t+1} - \mathbf{v}^{(i)}_{k-1,t+1} }_2$.
By iteration on inner-loop, we have
    \begin{align*}
        \norm{\mathbf{w}_{t+1} - \mathbf{w}^{(i)}_{t+1} }_2 & \leq \left(1-\alpha\right) \norm{ \mathbf{w}_{t} - \mathbf{w}^{(i)}_{t} }_2  + \frac{\alpha}{b} \sum_{j=0}^{k-1} \eta_{j,t+1} A^{(i)}_{j,t+1} \mathfrak{C}^{(i)}_{j,t+1} + \alpha \norm{ \mathbf{w}_{t} - \mathbf{w}^{(i)}_{t} }_2\\
        & = \norm{ \mathbf{w}_{t} - \mathbf{w}^{(i)}_{t} }_2 + \frac{\alpha}{b} \sum_{j=0}^{k-1} \eta_{j,t+1} A^{(i)}_{j,t+1} \mathfrak{C}^{(i)}_{j,t+1},
    \end{align*}
where we have used that $\mathbf{v}_{0, t+1} = \mathbf{w}_t$. By iteration on outer-loop, we have
\begin{equation} \label{eq:theta_t_k_outer}
    \begin{split}
        \norm{\mathbf{w}_{t+1} - \mathbf{w}^{(i)}_{t+1} }_2 & \leq \frac{\alpha}{b} \sum_{h=1}^{t+1} \sum_{j=0}^{k-1} \eta_{j,h} A^{(i)}_{j,h} \mathfrak{C}^{(i)}_{j,h}.
    \end{split}
\end{equation}
By definition of $A^{ (m)}_{k,t} $, it is a random variable following the binomial distribution $B(b, \frac{1}{n})$, it then follows that
\begin{align} \label{eq:exp_var}
    \mathbb{E}\big[A^{ (m)}_{k,t}\big] &= \frac{b}{n},\quad
    \Var\big(A^{ (t)}_{k,m}\big) = \frac{b}{n}\big(1-\frac{1}{n}\big) \leq \frac{b}{n}.
\end{align}
Furthermore, by Lemma \ref{lem:self_bounding}, we know
\begin{align}\label{eq:self_bound}
    \mathfrak{C}^{(i)}_{j,h} \leq \norm{ \nabla f(\mathbf{v}_{j,h};z_i)}_2 + \norm{ \nabla f(\mathbf{v}^{(i)}_{j,h};z'_i)}_2 \leq \sqrt{2Lf\left(\mathbf{v}_{j,h};z_i\right)} + \sqrt{2Lf(\mathbf{v}^{(i)}_{j,h};z'_i)}.
\end{align}
Since $(x_i,y_i)$ and $(x'_i,y'_i)$ are symmetric, we know $\mathbb{E}\left[ f\left(\mathbf{v}_{j,h};z_i\right)\right] = \mathbb{E}\left[f\left(\mathbf{v}_{j,h};z'_i\right)\right]$. This, together with Eq~\eqref{eq:self_bound}, further implies that
\begin{align}\label{eq:self_bound2}
    \mathbb{E} \big[\mathfrak{C}^{(i)}_{j,h}\big] \leq 2\mathbb{E} \Big[\sqrt{2Lf\big(\mathbf{v}_{j,h};z_i\big)}\Big].
\end{align}
By combining ~\eqref{eq:theta_t_k_outer} and ~\eqref{eq:exp_var}, we have
\begin{align}
    \mathbb{E} \big[\norm{\mathbf{w}_{t+1} - \mathbf{w}^{(i)}_{t+1} }_2\big] &\leq \frac{\alpha}{b} \sum_{h=1}^{t+1}  \sum_{j=0}^{k-1} \eta_{j,h}\mathbb{E}  \big[A^{(i)}_{j,h} \mathfrak{C}^{(i)}_{j,h}\big]
    = \frac{\alpha}{b} \sum_{h=1}^{t+1}  \sum_{j=0}^{k-1} \eta_{j,h}\mathbb{E} \big[\mathbb{E}_{J_{j,h}}  \big[A^{(i)}_{j,h}\big] \mathfrak{C}^{(i)}_{j,h}\big]\notag\\
    &= \frac{\alpha}{n} \sum_{h=1}^{t+1}  \sum_{j=0}^{k-1} \eta_{j,h}\mathbb{E}  \big[ \mathfrak{C}^{(i)}_{j,h}\big]
     \leq \frac{2\alpha}{n} \sum_{h=1}^{t+1}  \sum_{j=0}^{k-1} \eta_{j,h}\mathbb{E} \Big[\sqrt{2Lf(\mathbf{v}_{j,h};z_i)}\Big],
\end{align}
where we used ~\eqref{eq:self_bound2} in the last inequality. By the concavity of $x \mapsto \sqrt{x}$, we have
\begin{align}
    \frac{1}{n} \sum^{n}_{i = 1}\mathbb{E} \big[\norm{\mathbf{w}_{t+1} - \mathbf{w}^{(i)}_{t+1} }_2\big] & \leq \frac{2\alpha}{n} \sum^{n}_{i = 1} \sum_{h=1}^{t+1} \sum_{j=0}^{k-1} \frac{\eta_{j,h}}{n}\mathbb{E} \Big[\sqrt{2Lf(\mathbf{v}_{j,h};z_i)}\Big]\notag\\
    & \leq \alpha\sum_{h=1}^{t+1}  \sum_{j=0}^{k-1} \frac{2\eta_{j,h}}{n}\sqrt{\frac{2L}{n}\sum^{n}_{i = 1} \mathbb{E}\left[f(\mathbf{v}_{j,h};z_i)\right]}\notag\\
    & = \alpha\sum_{h=1}^{t+1}  \sum_{j=0}^{k-1} \frac{2\eta_{j,h} \sqrt{2L\mathbb{E} \left[F_S(\mathbf{v}_{j,h})\right]}}{n}.
\end{align}
This established the stated $\ell_1$-stability ~\eqref{cvx_l1}.\\
To study the $\ell_2$-stability, we apply the following expectation-variance decomposition to Eq.~\eqref{eq:theta_t_k_outer}.
\begin{align}
        \norm{\mathbf{w}_{t+1} - \mathbf{w}^{(i)}_{t+1} }_2 & \leq \frac{\alpha}{b} \sum_{h=1}^{t+1}  \sum_{j=0}^{k-1} \eta_{j,h} \big(A^{(i)}_{j,h}-\frac{b}{n}\big) \mathfrak{C}^{(i)}_{j,h}+ \frac{\alpha}{n}\sum_{h=1}^{t+1}  \sum_{j=0}^{k-1} \eta_{j,h}  \mathfrak{C}^{(i)}_{j,h}.
\end{align}
Taking square on both sides, then applying expectation with respect to $S$ and $J_{k,t}$ for $t \in [T]$ and $k \in [k]$, we have
\begin{align}
        &\mathbb{E}\Big[ \norm{\mathbf{w}_{t+1} - \mathbf{w}^{(i)}_{t+1} }^2_2\Big] \notag \\
        &\leq \frac{2\alpha^2}{b^2} \mathbb{E} \bigg[\bigg( \sum_{h=1}^{t+1}  \sum_{j=0}^{k-1} \eta_{j,h} \Big(A^{(i)}_{j,h}-\frac{b}{n}\Big) \mathfrak{C}^{(i)}_{j,h}\bigg)^2\bigg] + \frac{2\alpha^2}{n^2} \mathbb{E}\bigg[\bigg(\sum_{h=1}^{t+1} \sum_{j=0}^{k-1} \eta_{j,h}  \mathfrak{C}^{(i)}_{j,h}\bigg)^2\bigg] \notag\\
        &= \frac{2 \alpha^2}{b^2} \mathbb{E} \bigg[ \sum_{h,h'=1}^{t+1}  \sum_{j,j'=0}^{k-1} \eta_{j,h}\eta_{j',h'} \Big(A^{(i)}_{j,h}-\frac{b}{n}\Big)\Big(A^{(i)}_{j',h'}-\frac{b}{n}\Big) \mathfrak{C}^{(i)}_{j,h}\mathfrak{C}^{(i)}_{j',h'}\bigg] + \frac{2\alpha^2}{n^2} \mathbb{E}\bigg[\bigg(\sum_{h=1}^{t+1}  \sum_{j=0}^{k-1} \eta_{j,h}  \mathfrak{C}^{(i)}_{j,h}\bigg)^2\bigg],
\end{align}
where we have used $(a+b)^2 \leq 2(a^2+b^2)$. Note that if $(h,j) \ne (h',j')$, then (we can assume $h<h'$, $j<j'$ without loss of generality)
\begin{align} \label{eq:exp_var_decomp}
        \mathbb{E}\Big[\Big(A^{(i)}_{j,h}-\frac{b}{n}\Big)\Big(A^{(i)}_{j',h'}-\frac{b}{n}\Big) \mathfrak{C}^{(i)}_{j,h}\mathfrak{C}^{(i)}_{j',h'}\Big] &= \mathbb{E}\mathbb{E}_{J_{j',h'}}\Big[\Big(A^{(i)}_{j,h}-\frac{b}{n}\Big)\Big(A^{(i)}_{j',h'}-\frac{b}{n}\Big) \mathfrak{C}^{(i)}_{j,h}\mathfrak{C}^{(i)}_{j',h'}\Big]\notag \\
        & = \mathbb{E}\Big[\Big(A^{(i)}_{j,h}-\frac{b}{n}\Big)\mathbb{E}_{J_{j',h'}}\Big[A^{(i)}_{j',h'}-\frac{b}{n}\Big]\mathfrak{C}^{(i)}_{j,h}\mathfrak{C}^{(i)}_{j',h'}\Big] = 0,
\end{align}
where we notice $A^{(i)}_{j,h}$, $\mathfrak{C}^{(i)}_{j,h}$, and $\mathfrak{C}^{(i)}_{j',h'}$ are independent of $J_{j',h'}$. It then follows that

\begin{align*}
        \mathbb{E}\big[\norm{\mathbf{w}_{t+1} - \mathbf{w}^{(i)}_{t+1} }^2_2\big]
        &\leq \frac{2\alpha^2}{b^2} \mathbb{E} \Big[\sum_{h=1}^{t+1}  \sum_{j=0}^{k-1} \eta_{j,h}^2 \Big(A^{(i)}_{j,h}-\frac{b}{n}\Big)^2 \Big(\mathfrak{C}^{(i)}_{j,h}\Big)^2\Big]+\frac{2\alpha^2}{n^2} \mathbb{E}\Big[\Big(\sum_{h=1}^{t+1}  \sum_{j=0}^{k-1} \eta_{j,h}  \mathfrak{C}^{(i)}_{j,h}\Big)^2\Big]\\
        &= \frac{2\alpha^2}{b^2} \mathbb{E} \Big[\sum_{h=1}^{t+1} \sum_{j=0}^{k-1} \eta_{j,h}^2 \Var\Big(A^{(i)}_{j,h}\Big) \left(\mathfrak{C}^{(i)}_{j,h}\right)^2\Big]+\frac{2\alpha^2}{n^2} \mathbb{E}\Big[\Big(\sum_{h=1}^{t+1} \sum_{j=0}^{k-1} \eta_{j,h}  \mathfrak{C}^{(i)}_{j,h}\Big)^2\Big]\\
        &\leq \frac{2\alpha^2}{nb} \mathbb{E} \Big[\sum_{h=1}^{t+1} \sum_{j=0}^{k-1} \eta_{j,h}^2 \Big(\mathfrak{C}^{(i)}_{j,h}\Big)^2\Big]+\frac{8\alpha^2}{n^2} \mathbb{E}\Big[\Big(\sum_{h=1}^{t+1}  \sum_{j=0}^{k-1} \eta_{j,h} \norm{ \nabla f(\mathbf{v}_{j,h};z_i) }_2\Big)^2\Big],
\end{align*}
where we used $\Var(A^{(i)}_{j,h}) = \frac{b}{n}(1-\frac{1}{n}) \leq \frac{b}{n}$ in the second inequality and used the fact that
\begin{align*}
        \mathbb{E}\Big[\Big(\sum_{h=1}^{t+1} \sum_{j=0}^{k-1} \eta_{j,h}  \mathfrak{C}^{(i)}_{j,h}\Big)^2\Big]
        &\leq 2\mathbb{E}\Big[\Big(\sum_{h=1}^{t+1}  \sum_{j=0}^{k-1} \eta_{j,h} \norm{ \nabla f(\mathbf{v}_{j,h};z_i) }_2\Big)^2\Big] + 2\mathbb{E}\Big[\Big(\sum_{h=1}^{t+1}  \sum_{j=0}^{k-1} \eta_{j,h} \norm{ \nabla f(\mathbf{v}^{(i)}_{j,h};z'_i) }_2\Big)^2\Big]\\
        &= 4\mathbb{E}\Big[\Big(\sum_{h=1}^{t+1}  \sum_{j=0}^{k-1} \eta_{j,h} \norm{ \nabla f\left(\mathbf{v}_{j,h};z_i\right) }_2\Big)^2\Big].
\end{align*}
We also notice that
\begin{align} \label{eq:bounded_G}    \mathbb{E}\big[\big(\mathfrak{C}^{(i)}_{j,h}\big)^2\big]&\leq 2\mathbb{E}\big[ \| \nabla f(\mathbf{v}_{j,h},z_i) \|_2^2\big] + 2\mathbb{E}\big[ \| \nabla f(\mathbf{v}^{(i)}_{j,h};z'_i) \|_2^2\big]\notag\\
        & \leq 4L\mathbb{E}\left[f\left(\mathbf{v}_{j,h};z_i\right)+ f\left(\mathbf{v}^{(i)}_{j,h};z'_i\right)\right]= 8L\mathbb{E}\left[f\left(\mathbf{v}_{j,h};z_i\right)\right].
\end{align}
It then follows that
\begin{align}
    \mathbb{E}\big[ \norm{\mathbf{w}_{t+1} - \mathbf{w}^{(i)}_{t+1} }^2_2\big]
    \leq \frac{16\alpha^2L}{nb} \sum_{h=1}^{t+1}  \sum_{j=0}^{k-1} \eta_{j,h}^2 \mathbb{E}\left[f\left(\mathbf{v}_{j,h};z_i\right)\right]
    +\frac{8\alpha^2}{n^2} \mathbb{E}\Big[\Big(\sum_{h=1}^{t+1}  \sum_{j=0}^{k-1} \eta_{j,h} \norm{ \nabla f\left(\mathbf{v}_{j,h};z_i\right) }_2\Big)^2\Big].
\end{align}
By taking an average over all $i\in [n]$, we have
\begin{align}
        &\frac{1}{n} \sum_{i=1}^{n} \mathbb{E}\left[ \norm{\mathbf{w}_{t+1} - \mathbf{w}^{(i)}_{t+1} }^2_2\right] \notag\\
        &\leq \frac{16\alpha^2L}{n^2b} \sum_{h=1}^{t+1}  \sum_{j=0}^{k-1} \sum_{i=1}^{n} \eta_{j,h}^2 \mathbb{E}\left[f\left(\mathbf{v}_{j,h};z_i\right)\right]+\frac{8\alpha^2}{n^3}\sum_{i=1}^{n} \mathbb{E}\Big[\Big(\sum_{h=1}^{t+1}  \sum_{j=0}^{k-1} \eta_{j,h} \norm{ \nabla f\left(\mathbf{v}_{j,h};z_i\right) }_2\Big)^2\Big] \notag\\
        &\le \frac{16\alpha^2L}{nb} \sum_{h=1}^{t+1}  \sum_{j=0}^{k-1} \eta_{j,h}^2 \mathbb{E} \left[F_S\left(\mathbf{v}_{j,h}\right)\right] + \frac{8(t+1)k\alpha^2}{n^3}\sum_{i=1}^{n} \sum_{h=1}^{t+1}  \sum_{j=0}^{k-1} \eta_{j,h}^2  \mathbb{E}\left[\norm{ \nabla f\left(\mathbf{v}_{j,h};z_i\right) }_2^2\right]\notag\\
        &\leq \Big(\frac{16\alpha^2L}{nb}+ \frac{16\alpha^2L(t+1)k}{n^2} \Big) \sum_{h=1}^{t+1}  \sum_{j=0}^{k-1} \eta_{j,h}^2 \mathbb{E} \left[F_S\left(\mathbf{v}_{j,h}\right)\right],
\end{align}
where the second inequality holds by applying Cauchy-Schwarz inequality, and the third inequality follows from self-bounding property.
The proof is completed.
\end{proof}

\subsection{Proof of Theorem \ref{thm: cvx_exc_risk} \label{sec:pf_thm3}}
\noindent We first introduce the optimization error bound for Lookahead in the convex case.
\begin{lemma}[Optimization Errors of Lookahead: Convex Case]\label{lem:opt_err_cvx}
Suppose the assumptions in Theorem \ref{thm: cvx_stab} hold, and further assume that $ \eta < \frac{b}{L(b+1)} $, then the following inequality holds
\begin{align}\label{eq:opt_err_cvx}
    \E\left[F_S\left(\overline{\mathbf{v}}_{R}\right) -F_S\left(\mathbf{w}^*\right)\right] \le \frac{b\E\left[\norm{\mathbf{w}_{0} - \mathbf{w}_S}^2\right]}{2 \alpha \eta kT \big(b-L\eta(b+1)\big)}  +  \frac{L\eta\E \big[ F_S\big(\mathbf{w}_S\big) \big]}{b - L\eta(b+1)},
\end{align}
where $\bar{\bv}_R=\frac{1}{Tk}\sum_{t=1}^{T}\sum_{\tau=0}^{k-1}\bv_{\tau,t}$.
\end{lemma}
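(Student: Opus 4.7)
The plan is to adapt the classical one-step SGD descent-lemma analysis to the two-loop Lookahead structure, using $\bw_S := \argmin_{\bw} F_S(\bw)$ as the anchor point and $\|\bw_t - \bw_S\|_2^2$ as the outer Lyapunov function. First, I would perform a one-step inner-loop analysis. Conditioned on $\bv_{\tau,t}$, the stochastic gradient $g_{\tau,t}=\frac{1}{b}\sum_{z\in\bcal_{\tau+1,t}}\nabla f(\bv_{\tau,t};z)$ is an unbiased estimate of $\nabla F_S(\bv_{\tau,t})$ with variance $\frac{1}{b}\mathrm{Var}_z(\nabla f(\bv_{\tau,t};z))$, so by expanding $\|\bv_{\tau+1,t}-\bw_S\|_2^2$, applying convexity in the form $\langle\nabla F_S(\bv_{\tau,t}),\bv_{\tau,t}-\bw_S\rangle\ge F_S(\bv_{\tau,t})-F_S(\bw_S)$, and invoking the self-bounding property (Lemma~\ref{lem:self_bounding}) in the two forms $\|\nabla F_S(\bv_{\tau,t})\|_2^2\le 2L(F_S(\bv_{\tau,t})-F_S(\bw_S))$ and $\E_z\|\nabla f(\bv_{\tau,t};z)\|_2^2\le 2LF_S(\bv_{\tau,t})$, I would obtain
\[
\E\big[\|\bv_{\tau+1,t}-\bw_S\|_2^2\mid \bv_{\tau,t}\big]\le \|\bv_{\tau,t}-\bw_S\|_2^2-\tfrac{2\eta(b-L\eta(b+1))}{b}\big(F_S(\bv_{\tau,t})-F_S(\bw_S)\big)+\tfrac{2L\eta^2}{b}F_S(\bw_S).
\]
The step-size restriction $\eta<b/(L(b+1))$ is exactly what makes the middle coefficient positive.

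Next, I would telescope this bound over $\tau=0,\dots,k-1$ inside one outer-loop iteration, using $\bv_{0,t}=\bw_{t-1}$ to start and leaving $\E\|\bv_{k,t}-\bw_S\|_2^2$ on the right. To stitch this into the outer loop, I would exploit the Lookahead update $\bw_t=(1-\alpha)\bw_{t-1}+\alpha\bv_{k,t}$: by convexity of $\|\cdot\|_2^2$,
\[
\|\bw_t-\bw_S\|_2^2\le(1-\alpha)\|\bw_{t-1}-\bw_S\|_2^2+\alpha\|\bv_{k,t}-\bw_S\|_2^2,
\]
so $\alpha\|\bv_{k,t}-\bw_S\|_2^2\ge\|\bw_t-\bw_S\|_2^2-(1-\alpha)\|\bw_{t-1}-\bw_S\|_2^2$. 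Multiplying the telescoped inner-loop bound by $\alpha$ and substituting this inequality turns the right-hand side into the clean telescoping form $\E\|\bw_{t-1}-\bw_S\|_2^2-\E\|\bw_t-\bw_S\|_2^2+\frac{2k\alpha L\eta^2}{b}F_S(\bw_S)$.

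Summing over $t=1,\dots,T$, the outer-loop potential telescopes to $\|\bw_0-\bw_S\|_2^2-\E\|\bw_T-\bw_S\|_2^2\le\|\bw_0-\bw_S\|_2^2$, giving
\[
\tfrac{2\alpha\eta(b-L\eta(b+1))}{b}\sum_{t=1}^T\sum_{\tau=0}^{k-1}\E\big[F_S(\bv_{\tau,t})-F_S(\bw_S)\big]\le\E\|\bw_0-\bw_S\|_2^2+\tfrac{2Tk\alpha L\eta^2}{b}\E[F_S(\bw_S)].
\]
Dividing by $\alpha\eta Tk\cdot 2(b-L\eta(b+1))/b$ and applying Jensen's inequality to the convex $F_S$ at $\overline{\bv}_R=\frac{1}{Tk}\sum_{t,\tau}\bv_{\tau,t}$ produces the claimed bound, after using $F_S(\bw_S)\le F_S(\bw^*)$ to replace $\bw_S$ by $\bw^*$ on the left-hand side.

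The main obstacle is the proper interleaving of the $\alpha$ factor with the telescoping: the $-\alpha\E\|\bv_{k,t}-\bw_S\|_2^2$ term produced by the inner-loop descent must be converted, via the convex-combination inequality, into a true outer-loop telescope in $\bw_t$ so that the outer sum does not accumulate. Aside from this bookkeeping, the remaining steps are standard smooth-convex SGD calculations enabled by Lemmas~\ref{lem:self_bounding}--\ref{lem:lem8}.
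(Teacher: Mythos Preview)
Your proposal is correct and follows essentially the same approach as the paper's proof: a one-step SGD descent inequality using self-bounding/smoothness to control the squared stochastic gradient, inner-loop telescoping, the convex-combination inequality $\|\bw_t-\bw_S\|_2^2\le(1-\alpha)\|\bw_{t-1}-\bw_S\|_2^2+\alpha\|\bv_{k,t}-\bw_S\|_2^2$ to stitch into an outer telescope, and finally Jensen plus $F_S(\bw_S)\le F_S(\bw^*)$. The only cosmetic difference is that the paper writes the stitching step as $\|\bw_{t-1}-\bw_S\|_2^2-\|\bv_{k,t}-\bw_S\|_2^2\le\frac{1}{\alpha}\big(\|\bw_{t-1}-\bw_S\|_2^2-\|\bw_t-\bw_S\|_2^2\big)$ rather than first multiplying by $\alpha$, which is algebraically equivalent to your route.
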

\noindent We need the following property for the $L$-smooth and convex functions for the proof.
\begin{lemma}[\citep{woodworth2020minibatch}]\label{lem:10}
    For any $L$-smooth and convex $F$, and any $x$, and $y$,
\begin{align*}
    \norm{ \nabla F\left(x\right) - \nabla F\left(y\right) }^2 &\leq L \langle \nabla F\left(x\right) - \nabla F\left(y\right), x - y \rangle, \\
    \intertext{and}
    \norm{ \nabla F\left(x\right) - \nabla F\left(y\right) }^2 &\leq 2L\left(F\left(x\right) - F\left(y\right) - \langle \nabla F\left(y\right), x - y \rangle\right).
\end{align*}
\end{lemma}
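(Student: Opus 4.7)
The plan is to first establish the second (Bregman-style) inequality, and then derive the first (co-coercivity) inequality from it via a short symmetrization argument. For the second inequality, I would fix $y$ and consider the auxiliary function $G_y(u) := F(u) - \inner{\nabla F(y)}{u}$. This $G_y$ is convex (because $F$ is and a linear term has been subtracted) and $L$-smooth, with $\nabla G_y(u) = \nabla F(u) - \nabla F(y)$. Since $\nabla G_y(y) = 0$ and $G_y$ is convex, $y$ is a global minimizer of $G_y$. Applying the standard descent lemma for $L$-smooth functions to $G_y$ at the point $u = x - \frac{1}{L}\nabla G_y(x)$ gives $G_y(u) \le G_y(x) - \frac{1}{2L}\norm{\nabla G_y(x)}^2$. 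Chaining $G_y(y) \le G_y(u)$ with this inequality and substituting back the definition of $G_y$, the linear terms combine cleanly and one recovers the desired bound $\norm{\nabla F(x) - \nabla F(y)}^2 \le 2L\bigl(F(x) - F(y) - \inner{\nabla F(y)}{x - y}\bigr)$.

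For the first inequality, I would apply the second inequality both as written and with the roles of $x$ and $y$ swapped, and then add the two. The function-value differences $F(x) - F(y)$ and $F(y) - F(x)$ cancel exactly, while the two inner-product terms combine into $L\inner{\nabla F(x) - \nabla F(y)}{x - y}$; the left-hand side becomes $2\norm{\nabla F(x) - \nabla F(y)}^2$. Dividing by $2$ produces the co-coercivity inequality in the form stated.

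The argument presents no real technical obstacle, since both steps are classical. The only subtle point requiring care is that convexity enters in an essential way precisely when asserting that $y$ is a \emph{global} (not merely critical) minimizer of $G_y$; this is what upgrades the one-sided descent bound into the two-sided comparison needed to obtain the inequality. The descent lemma itself can be justified in one line by integrating $\nabla F$ along the segment from one point to the other and invoking the $L$-Lipschitzness of $\nabla F$, so the full proof fits comfortably within a short paragraph and requires no machinery beyond $L$-smoothness and convexity.
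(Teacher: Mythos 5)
Your proof is correct. Note that the paper itself gives no proof of this lemma --- it is quoted verbatim from \citet{woodworth2020minibatch} and used as an imported fact --- so there is no in-paper argument to compare against. Your derivation is the classical one (essentially Theorem 2.1.5 in Nesterov's textbook): obtaining the Bregman-type bound by minimizing the auxiliary function $G_y(u) = F(u) - \inner{\nabla F(y)}{u}$ via one descent-lemma step from $x$, then symmetrizing in $x$ and $y$ to get co-coercivity. Both steps check out, including the two points that actually need care: $G_y$ inherits the same smoothness constant $L$ because only a linear term is subtracted, and convexity is what promotes the critical point $y$ to a global minimizer of $G_y$, which is exactly where the hypothesis of convexity is consumed.
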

\begin{proof}[Proof of Lemma \ref{lem:opt_err_cvx}]
Since $F_S(\mathbf{w}_S)\le F_S(\mathbf{w}^*) $, an upper bound for $F_S(\overline{\mathbf{v}}_{R}) - F_S(\mathbf{w}_S)$ is also an upper bound for $F_S(\overline{\mathbf{v}}_{R})-F_S(\mathbf{w}^*)$. For the proof below, we assume that the learning rate is constant, that is, $\eta_{\tau,t} = \eta$. We denote $B_{k,t} = \{z_{i_{k,t}^{(1)}},\ldots,z_{i_{k,t}^{(b)}}\}$ and $ f\big(\bv;B_{k,t}\big) = \frac{1}{b}\sum_{j =1}^{b} f(\bv;z_{i_{k,t}^{(j)}})$. We can hence reformulate the minibatch SGD update as
\begin{align*}
    \mathbf{v}_{\tau+1,t} = \mathbf{v}_{\tau,t} - \eta \nabla f\big(\mathbf{v}_{\tau,t};B_{\tau,t}\big).
\end{align*}
    We first notice that
\begin{align}
    \E\big[\norm{ \nabla f\big(\mathbf{v}_{\tau,t};B_{\tau,t}\big)}^2\big] &= \E\big[\|\nabla f\big(\mathbf{v}_{\tau,t};B_{\tau,t}\big) - \nabla F_S\left(\mathbf{v}_{\tau,t}\right)\|^2\big] + \E\big[ \| \nabla F_S\left(\mathbf{v}_{\tau,t}\right)\|^2\big] \notag  \\
    &= \frac{1}{b}\E\big[\|\nabla f\big(\mathbf{v}_{\tau,t};z_{i_{\tau,t}^{(1)}}\big) - \nabla F_S\left(\mathbf{v}_{\tau,t}\right)\|^2\big] + \E\big[ \| \nabla F_S\left(\mathbf{v}_{\tau,t}\right)\|^2\big]\notag \\
    &= \frac{\E\big[\|\nabla f\big(\mathbf{v}_{\tau,t};z_{i_{\tau,t}^{(1)}}\big)\|^2\big]}{b} - \frac{\E \big[\|\nabla F_S\left(\mathbf{v}_{\tau,t}\right)\|^2\big]}{b} + \E\big[ \| \nabla F_S\left(\mathbf{v}_{\tau,t}\right)\|^2\big] \notag\\
    & \le \frac{2L\E\big[F_S(\mathbf{v}_{\tau,t})\big]}{b} + \E\big[ \| \nabla F_S\left(\mathbf{v}_{\tau,t}\right)\|^2\big] \notag\\
    &\le \frac{2L\E\big[F_S(\mathbf{v}_{\tau,t})\big]}{b} + 2L\E[F_S(\mathbf{v}_{\tau,t}) - F_S(\mathbf{w}_S)],\label{eq:bound var}
\end{align}
where the last inequality follows from Lemma \ref{lem:10}, where we set $y = \mathbf{w}_S$. We then analyze the single step in the inner-loop,
\begin{align}
    \E\left[\norm{\mathbf{v}_{\tau+1,t} - \mathbf{w}_S}^2\right] &= \E\left[\norm{\mathbf{v}_{\tau,t} - \eta \nabla f\big(\mathbf{v}_{\tau,t};B_{\tau,t}\big) - \mathbf{w}_S}^2\right] \notag \\
    &= \E\left[\norm{\mathbf{v}_{\tau,t} - \mathbf{w}_S}^2 - 2\eta \langle\mathbf{v}_{\tau,t} - \mathbf{w}_S,{\nabla f\big(\mathbf{v}_{\tau,t};B_{\tau,t}\big)}\rangle + \eta^2 \norm{\nabla f\big(\mathbf{v}_{\tau,t};B_{\tau,t}\big)}^2\right] \notag \\
    &= \E\left[\norm{\mathbf{v}_{\tau,t} - \mathbf{w}_S}^2\right] - 2\eta \E\left[\inner{\mathbf{v}_{\tau,t} - \mathbf{w}_S}{\nabla F_S\left(\mathbf{v}_{\tau,t}\right)}\right] + \eta^2 \E\left[\norm{\nabla f\big(\mathbf{v}_{\tau,t};B_{\tau,t}\big)}^2\right]. \label{eq:cvx single step}
\end{align}
By convexity, we have $\inner{\mathbf{v}_{\tau,t} - \mathbf{w}_S}{\nabla F_S\left(\mathbf{v}_{\tau,t}\right)} \ge F_S\left(\mathbf{v}_{\tau,t}\right) - F_S\left(\mathbf{w}_S\right)$. Substituting this and the above result, we get
\begin{align*}
    \E\left[\norm{\mathbf{v}_{\tau+1,t} - \mathbf{w}_S}^2\right] &\le \E\left[\norm{\mathbf{v}_{\tau,t} - \mathbf{w}_S}^2\right] - 2\eta \E\left[F_S\left(\mathbf{v}_{\tau,t}\right) - F_S\left(\mathbf{w}_S\right)\right] + \eta^2 \Big( \frac{2L\E\big[F_S(\mathbf{v}_{\tau,t})\big]}{b} + 2L\E\big[F_S(\mathbf{v}_{\tau,t}) - F_S(\mathbf{w}_S)\big]\Big) \\
    & = \E\left[\norm{\mathbf{v}_{\tau,t} - \mathbf{w}_S}^2\right] - \big(2\eta -  \frac{2L\eta^2(b+1)}{b}\big)\E\left[F_S\left(\mathbf{v}_{\tau,t}\right) - F_S\left(\mathbf{w}_S\right)\right] + \frac{2L\eta^2\E \big[ F_S\left(\mathbf{w}_S\right) \big] }{b}.
\end{align*}
It then follows that
\[
2\eta\Big(1 - \frac{L\eta(b+1)}{b}\Big)\E\left[F_S\left(\mathbf{v}_{\tau,t}\right) - F_S\left(\mathbf{w}_S\right)\right] \le \E\left[\norm{\mathbf{v}_{\tau,t} - \mathbf{w}_S}^2 - \norm{\mathbf{v}_{\tau+1,t} - \mathbf{w}_S}^2\right] + \frac{2L\eta^2\E \big[ F_S\left(\mathbf{w}_S\right) \big] }{b}.
\]
Recall the assumption of $\eta \le \frac{b}{L(b+1)}$, we can divide by $2\eta(1-\frac{L\eta(b+1)}{b})$ and get
\begin{align*}
\E\left[F_S\left(\mathbf{v}_{\tau,t}\right) - F_S\left(\mathbf{w}_S\right)\right] \le
\frac{b}{2\eta\big(b-L\eta(b+1)\big)}
\E\left[\norm{\mathbf{v}_{\tau,t} - \mathbf{w}_S}^2 - \norm{\mathbf{v}_{\tau+1,t} - \mathbf{w}_S}^2\right] + \frac{L\eta\E \big[ F_S\big(\mathbf{w}_S\big) \big]}{b - L\eta(b+1)}.
\end{align*}
We take an average of the above inequality from $\tau=0$ to $k-1$, and get
\begin{align}
\frac{1}{k}\sum_{\tau=0}^{k-1} \E\left[F_S\left(\mathbf{v}_{\tau,t}\right) - F_S\left(\mathbf{w}_S\right)\right]
&\le \frac{b}{2\eta k\big(b-L\eta(b+1)\big)}\sum_{\tau=0}^{k-1}\E\left[\norm{\mathbf{v}_{\tau,t} - \mathbf{w}_S}^2 - \norm{\mathbf{v}_{\tau+1,t} - \mathbf{w}_S}^2\right] + \frac{L\eta\E \big[ F_S\big(\mathbf{w}_S\big) \big]}{b - L\eta(b+1)} \notag \\
&= \frac{b}{2\eta k\big(b-L\eta(b+1)\big)}\E\left[\norm{\bv_{0,t} - \mathbf{w}_S}^2 - \norm{\bv_{k,t} - \mathbf{w}_S}^2\right] + \frac{L\eta\E \big[ F_S\big(\mathbf{w}_S\big) \big]}{b - L\eta(b+1)}.\label{eq:after_telescope}
\end{align}
By the slow updating rule of Lookahead, we know $(1-\alpha)(\bw_{t-1}-\bw^*)=(\bw_t-\bw^*)-\alpha(\bv_{k,t}-\bw^*)$ and get
\begin{align*}
    \norm{\mathbf{v}_{0,t} - \mathbf{w}_S}^2 - \norm{\mathbf{v}_{k,t} - \mathbf{w}_S}^2 = \norm{\mathbf{w}_{t-1} - \mathbf{w}_S}^2 - \norm{\bv_{k,t} - \mathbf{w}_S}^2 \le \frac{1}{\alpha}\left( \norm{\mathbf{w}_{t-1} - \mathbf{w}_S}^2 - \norm{\mathbf{w}_t - \mathbf{w}_S}^2 \right).
\end{align*}
Substituting this into ~\eqref{eq:after_telescope}, we have
\begin{align*}
    \frac{1}{k}\sum_{\tau=0}^{k-1} \E\left[F_S\left(\mathbf{v}_{\tau,t}\right) - F_S\left(\mathbf{w}_S\right)\right] \le \frac{b}{2 \alpha \eta k \big(b-L\eta(b+1)\big)}\E\left[\norm{\mathbf{w}_{t-1} - \mathbf{w}_S}^2 - \norm{\mathbf{w}_t - \mathbf{w}_S}^2\right] + \frac{L\eta\E \big[ F_S\big(\mathbf{w}_S\big) \big]}{b - L\eta(b+1)}.
\end{align*}
We take an average of the above inequality and get
\begin{align}
    \frac{1}{kT}\sum_{t=1}^{T}\sum_{\tau=0}^{k-1} \E\left[F_S\left(\mathbf{v}_{\tau,t}\right) - F_S\left(\mathbf{w}_S\right)\right] &\le \frac{b}{2 \alpha \eta kT \big(b-L\eta(b+1)\big)}\sum_{t=1}^{T}\E\left[\norm{\mathbf{w}_{t-1} - \mathbf{w}_S}^2 - \norm{\mathbf{w}_t - \mathbf{w}_S}^2\right] +  \frac{L\eta\E \big[ F_S\big(\mathbf{w}_S\big) \big]}{b - L\eta(b+1)} \notag\\
&\le \frac{b\E\left[\norm{\mathbf{w}_{0} - \mathbf{w}_S}^2\right]}{2 \alpha \eta kT \big(b-L\eta(b+1)\big)}  +  \frac{L\eta\E \big[ F_S\big(\mathbf{w}_S\big) \big]}{b - L\eta(b+1)}\notag \\
&\le \frac{b\E\left[\norm{\mathbf{w}_{0} - \mathbf{w}_S}^2\right]}{2 \alpha \eta kT \big(b-L\eta(b+1)\big)}  +  \frac{L\eta\E \big[ F_S\big(\mathbf{w}^*\big) \big]}{b - L\eta(b+1)}.\label{eq:cvx_opt_err_J}
\end{align}
We complete the proof by applying the Jensen's inequality.
\end{proof}
\begin{proof}[Proof of Theorem \ref{thm: cvx_exc_risk}]
    By Lemma \ref{lem:l12_gen} (part (b)) and ~\eqref{cvx_l2}, we have (note our stability bounds also apply to $\bar{\bv}_R$ due to the convexity of norm)
    \begin{align}
        \mathbb{E} \left[ F(\overline{\mathbf{v}}_R) - F_S(\overline{\mathbf{v}}_R)  \right] \le \frac{L}{\gamma}\mathbb{E}[F_S(\overline{\mathbf{v}}_R)] + (L+\gamma)\left(\frac{8\alpha^2L}{nb}+ \frac{8\alpha^2LTk}{n^2} \right) \sum_{h=1}^{T}  \sum_{j=0}^{k-1} \eta_{j,h}^2 \mathbb{E} \left[F_S\left(\mathbf{v}_{j,h}\right)\right].
    \end{align}
    By ~\eqref{eq:cvx_opt_err_J} we know that
    \begin{align}
        \frac{1}{kT}\sum_{t=1}^{T}\sum_{\tau=0}^{k-1} \mathbb{E}\left[F_S\left(\mathbf{v}_{\tau,t}\right)\right] \lesssim F(\mathbf{w}^*) + \frac{L\eta F(\mathbf{w}^*)}{b} + \frac{1}{\alpha\eta kT}.
    \end{align}
    Let $R = Tk$. We combine the above inequalities and get
    \begin{multline}
    \mathbb{E} \left[ F(\overline{\mathbf{v}}_R) - F_S(\overline{\mathbf{v}}_R)  \right] \lesssim  \frac{L(F(\mathbf{w}^*) + L\eta F(\mathbf{w}^*)/b + 1/(\alpha\eta R))}{\gamma}  \\
    +  L(L+\gamma)\alpha^2\eta^2 \left(\frac{1}{nb} +\frac{R}{n^2} \right) \left(RF(\mathbf{w}^*) + RL\eta F(\mathbf{w}^*)/b + 1/(\alpha \eta) \right).\label{eq:gen_err_cvx}
    \end{multline}
    We plug ~\eqref{eq:gen_err_cvx} and the optimization error bound ~\eqref{eq:opt_err_cvx} back into ~\eqref{err_decomp} and get
    \begin{multline*}
        \mathbb{E} \left[ F(\overline{\mathbf{v}}_R) \right]- F(\mathbf{w}^*)   \lesssim  \frac{L\eta F(\mathbf{w}^*)}{b} + \frac{1}{\alpha\eta R} + \frac{F(\mathbf{w}^*) + L\eta F(\mathbf{w}^*)/b+ 1/(\alpha\eta R)}{\gamma}  +\\
        L(L+\gamma)\alpha^2\eta^2 \left(\frac{1}{nb} +\frac{R}{n^2} \right) \left(RF(\mathbf{w}^*) + R L\eta F(\mathbf{w}^*)/b+ 1/(\alpha \eta) \right) .
    \end{multline*}
    The proof is completed.
\end{proof}
\begin{proof}[Proof of Corollary \ref{cor: cvx_exc_risk}]
We first consider the case $F(\mathbf{w}^*) \ge \frac{1}{n}$. Fix any constant $\alpha\in (0,1]$, we choose $\eta=\frac{b}{\sqrt{nF(\mathbf{w}^*)}}$, $R\asymp \frac{n}{b}$, and $ \gamma = \sqrt{nF(\bw^*)} \ge 1$. Note the assumption $b\le \sqrt{nF(\bw^*)}/(2L)$ ensures that $\eta \le 1/(2L)$. Then Eq.~\eqref{eq:cvx_exc} implies
\begin{align*}
    \mathbb{E} \left[ F(\overline{\mathbf{v}}_R) - F(\mathbf{w}^*)  \right] &\lesssim \frac{LF(\mathbf{w}^*)}{\sqrt{nF(\mathbf{w}^*)}} + \frac{F(\mathbf{w}^*)^{\frac{1}{2}}}{\sqrt{n}} + \frac{(nF(\mathbf{w}^*))^{\frac{1}{2}}+L+1}{n}\\
    &+ \frac{2L}{n^2F(\mathbf{w}^*)}\big(L+(n F(\mathbf{w}^*))^{\frac{1}{2}}\big))\big(nF(\mathbf{w}^*) +(L+1)(n F(\mathbf{w}^*))^{\frac{1}{2}}\big)\\
    &\lesssim \frac{LF(\mathbf{w}^*)^{1/2}}{\sqrt{n}} + \frac{L^2}{n}.
\end{align*}

We now consider the case $F(\mathbf{w}^*)<\frac{1}{n}$. We fix $\alpha \in (0,1]$ as a constant, and choose $\eta = \frac{1}{2L}$, $R \asymp n$, and $\gamma = 1$. Then Eq.~\eqref{eq:cvx_exc} implies
\begin{align*}
    \mathbb{E} \left[ F(\overline{\mathbf{v}}_R) - F(\mathbf{w}^*)    \right] &\lesssim  F(\mathbf{w}^*) + \frac{L}{n} + \frac{L+1}{4nL}\big(nF(\mathbf{w}^*) + 2L \big)\lesssim \frac{L}{n} + F(\mathbf{w}^*).
\end{align*}
The proof is completed.
\end{proof}
\subsection{Proof of  Theorem \ref{thm:strcvx_stab} \label{sec: pf_thm}}

\begin{proof}
Recalling from Eq.~\eqref{eq:refined_update} the refined Lookahead updating rule, we have
\begin{align*}
&\norm{\mathbf{w}_{t+1} - \mathbf{w}^{(i)}_{t+1} }_2  \\
&\leq   \left(1-\alpha\right) \norm{ \mathbf{w}_{t} - \mathbf{w}^{(i)}_{t} }_2 + \alpha \big\| \mathbf{v}_{k-1,t+1}- \frac{\eta_{k-1,t+1}}{b} \sum_{m:m\ne i}^{n} A^{(m)}_{k-1,t+1} \nabla f(\mathbf{v}_{k-1,t+1};z_m)- \mathbf{v}^{(i)}_{k-1,t+1} \\
&+ \frac{\eta_{k-1,t+1}}{b} \sum_{m:m\ne i}^{n} A^{(m)}_{k-1,t+1} \nabla f(\mathbf{v}^{(i)}_{k-1,t+1};z_m)\big\|_2  + \frac{\alpha A^{(i)}_{k-1,t+1} \eta_{k-1,t+1}}{b}\norm{   \nabla f\left(\mathbf{v}_{k-1,t+1};z_i\right) - \nabla f(\mathbf{v}^{(i)}_{k-1,t+1};z'_i) }_2.
\end{align*}
Since $f$ is smooth and $\sum_{m:m\ne i}^{n} A^{(m)}_{k-1,t+1} \leq b$, therefore $\mathbf{v} \mapsto \frac{1}{b} \sum_{m:m\ne i}^{n} A^{(m)}_{k-1,t+1} f(\mathbf{v};z_m) $ is $L$-smooth. It follows from Lemma \ref{lem:lem8} and the assumption $\eta_{k-1,t+1} \le \frac{1}{L}$ that
\begin{multline}
    \norm{\mathbf{w}_{t+1} - \mathbf{w}^{(i)}_{t+1} }_2 \leq  \left(1-\alpha\right) \norm{ \mathbf{w}_{t} - \mathbf{w}^{(i)}_{t} }_2 + \\
     \frac{\alpha \eta_{k-1,t+1}  A^{(i)}_{k-1,t+1} \mathfrak{C}^{(i)}_{k-1,t+1}}{b} + \alpha \left(1-\frac{\mu \eta_{k-1,t+1}}{2}\right) \norm{ \mathbf{v}_{k-1,t+1} - \mathbf{v}^{(i)}_{k-1,t+1} }_2.\label{recall}
\end{multline}
We take the expectation on both sides and get
\begin{align*}
    \mathbb{E} \big[\norm{\mathbf{w}_{t+1} - \mathbf{w}^{(i)}_{t+1} }_2\big]  &\leq \left(1-\alpha\right) \mathbb{E} \big[ \norm{ \mathbf{w}_{t} - \mathbf{w}^{(i)}_{t} }_2\big] + \frac{2\alpha \eta_{k-1,t+1} \sqrt{2L \mathbb{E} \left[f\left(\mathbf{v}_{k-1,t+1};z_i\right)\right]}}{n} \notag \\
    &+ \alpha \left(1- \frac{\mu \eta_{k-1,t+1}}{2}\right) \mathbb{E} \big[ \norm{ \mathbf{v}_{k-1,t+1} - \mathbf{v}^{(i)}_{k-1,t+1} }_2\big],
\end{align*}
where we have used ~\eqref{eq:exp_var} and ~\eqref{eq:self_bound2}. We do the iteration on inner-loop, and get
\begin{align}
    \mathbb{E} \big[\norm{\mathbf{w}_{t+1} - \mathbf{w}^{(i)}_{t+1} }_2\big]  &\leq \left(1-\alpha\right) \mathbb{E} \big[ \norm{ \mathbf{w}_{t} - \mathbf{w}^{(i)}_{t} }_2\big]  + \frac{2\alpha\sqrt{2L}}{n} \sum_{j=0}^{k-1} \eta_{j,t+1} \sqrt{\mathbb{E} \left[f\left(\mathbf{v}_{j,t+1};z_i\right)\right] } \prod_{j' = j+1}^{k-1} \left(1- \frac{\mu \eta_{j',t+1}}{2}\right) \notag\\
    &+ \alpha \mathbb{E} \big[\norm{\mathbf{w}_{t} - \mathbf{w}^{(i)}_{t} }_2\big] \prod_{j = 0}^{k-1} \left(1- \frac{\mu \eta_{j,t+1}}{2}\right) \notag\\
    & \leq (1-\frac{\alpha}{2}) \mathbb{E} \big[ \norm{ \mathbf{w}_{t} - \mathbf{w}^{(i)}_{t} }_2\big]  + \frac{2\alpha\sqrt{2L}}{n} \sum_{j=0}^{k-1} \eta_{j,t+1} \sqrt{\mathbb{E} \left[f\left(\mathbf{v}_{j,t+1};z_i\right)\right] } \prod_{j' = j+1}^{k-1} \left(1- \frac{\mu \eta_{j',t+1}}{2}\right),\notag
\end{align}
where we have used the following inequality due to the the assumption $\eta_{j,t+1} \ge \frac{2\ln2}{k\mu}$
\begin{equation}\label{n1}
\prod_{j = 0}^{k-1} \left(1- \frac{\mu \eta_{j,t+1}}{2}\right)\leq \exp\Big(-\sum_{j=0}^k\frac{\mu\eta_{j,t+1}}{2}\Big)\leq \exp\Big(-k\frac{\mu2\log2}{2k\mu}\Big)=\frac{1}{2}.
\end{equation}
By iteration on outer-loop,
\begin{align}
    \mathbb{E} \big[\norm{\mathbf{w}_{t+1} - \mathbf{w}^{(i)}_{t+1} }_2\big] \leq  \frac{2\alpha\sqrt{2L}}{n}\sum_{t' = 1}^{t+1} (1-\frac{\alpha}{2})^{t+1-t'} \sum_{j=0}^{k-1} \eta_{j,t'} \sqrt{\mathbb{E} \left[f\left(\mathbf{v}_{j,t'};z_i\right)\right] } \prod_{j' = j+1}^{k-1} \left(1- \frac{\mu \eta_{j',t'}}{2}\right).
\end{align}
Taking an average over $i$ and using the concavity of $x \mapsto \sqrt{x}$, we get
\begin{align}
    \frac{1}{n} \sum^{n}_{i=1}\mathbb{E} \big[\norm{\mathbf{w}_{t+1} - \mathbf{w}^{(i)}_{t+1} }_2\big]  &\leq \frac{2\alpha\sqrt{2L}}{n^2} \sum_{t' = 1}^{t+1} (1-\frac{\alpha}{2})^{t+1-t'} \sum_{j=0}^{k-1}  \sum^{n}_{i=1} \eta_{j,t'} \sqrt{\mathbb{E} \left[f\left(\mathbf{v}_{j,t'};z_i\right)\right] } \prod_{j' = j+1}^{k-1} \left(1- \frac{\mu \eta_{j',t'}}{2}\right) \notag\\
    & \leq \frac{2\alpha\sqrt{2L}}{n} \sum_{t' = 1}^{t+1} (1-\frac{\alpha}{2})^{t+1-t'}\sum_{j=0}^{k-1}  \eta_{j,t'}\Big(\frac{1}{n}\sum^{n}_{i=1} \mathbb{E} \left[f\left(\mathbf{v}_{j,t'};z_i\right)\right]\Big)^{\frac{1}{2}} \prod_{j' = j+1}^{k-1} \left(1- \frac{\mu \eta_{j',t'}}{2}\right) \notag\\
    & =  \frac{2\alpha\sqrt{2L}}{n}\sum_{t' = 1}^{t+1}(1-\frac{\alpha}{2})^{t+1-t'}\sum_{j=0}^{k-1}  \eta_{j,t'} \sqrt{ \mathbb{E} \left[F_S\left(\mathbf{v}_{j,t'}\right)\right] } \prod_{j' = j+1}^{k-1} \left(1- \frac{\mu \eta_{j',t'}}{2}\right) \notag.
\end{align}
This established the stated $\ell_1$-stability bound ~\eqref{strcvx_l1}.\\
We now prove Eq.~\eqref{strcvx_l2}. Recall Eq.~\eqref{eq:gradient_bound}, we do iteration on inner-loop in Eq.~\eqref{recall} and get
\begin{align}
    & \norm{\mathbf{w}_{t+1} - \mathbf{w}^{(i)}_{t+1} }_2 \notag\\
    &\leq \left(1-\alpha\right) \norm{ \mathbf{w}_{t} - \mathbf{w}^{(i)}_{t} }_2 +  \frac{\alpha}{b} \sum_{j=0}^{k-1} \eta_{j,t+1}  A^{(i)}_{j,t+1} \mathfrak{C}^{(i)}_{j,t+1} \prod_{j' = j+1}^{k-1} \left(1- \frac{\mu \eta_{j',t+1}}{2}\right) + \alpha \norm{\mathbf{w}_{t} - \mathbf{w}^{(i)}_{t} }_2 \prod_{j = 0}^{k-1} \left(1- \frac{\mu \eta_{j,t+1}}{2}\right) \notag\\
    & \leq \norm{ \mathbf{w}_{t} - \mathbf{w}^{(i)}_{t} }_2 + \frac{\alpha}{b} \sum_{j=0}^{k-1} \eta_{j,t+1}  A^{(i)}_{j,t+1} \mathfrak{C}^{(i)}_{j,t+1} \prod_{j' = j+1}^{k-1} \left(1- \frac{\mu \eta_{j',t+1}}{2}\right).\notag
\end{align}
Then we iterate on outer-loop and get
\begin{align}
    & \norm{\mathbf{w}_{t+1} - \mathbf{w}^{(i)}_{t+1} }_2 \leq \frac{\alpha}{b} \sum_{t'=1}^{t+1} \sum_{j=0}^{k-1} \eta_{j,t'}  A^{(i)}_{j,t'} \mathfrak{C}^{(i)}_{j,t'} \prod_{j' = j+1}^{k-1} \left(1- \frac{\mu \eta_{j',t'}}{2}\right) \notag \\
    & =\frac{\alpha}{b} \sum_{t'=1}^{t+1} \sum_{j=0}^{k-1} \eta_{j,t'}  \Big(A^{(i)}_{j,t'}- \frac{b}{n}\Big) \mathfrak{C}^{(i)}_{j,t'} \prod_{j' = j+1}^{k-1} \left(1- \frac{\mu \eta_{j',t'}}{2}\right) + \frac{\alpha}{n} \sum_{t'=1}^{t+1} \sum_{j=0}^{k-1} \eta_{j,t'} \mathfrak{C}^{(i)}_{j,t'} \prod_{j' = j+1}^{k-1} \left(1- \frac{\mu \eta_{j',t'}}{2}\right).\notag
\end{align}
By taking the square and the expectation on both sides, we get
\begin{align}
    &\mathbb{E} \big[ \norm{\mathbf{w}_{t+1} - \mathbf{w}^{(i)}_{t+1} }_2^2 \big] \notag\\
    &\leq \frac{2\alpha^2}{b^2} \mathbb{E} \Big[ \Big(\sum_{t'=1}^{t+1} \sum_{j=0}^{k-1} \eta_{j,t'}  \big(A^{(i)}_{j,t'}- \frac{b}{n}\big) \mathfrak{C}^{(i)}_{j,t'} \prod_{j' = j+1}^{k-1} \Big(1- \frac{\mu \eta_{j',t'}}{2}\Big)  \Big)^2 \Big] + \frac{2\alpha^2}{n^2} \mathbb{E} \Big[\Big(\sum_{t'=1}^{t+1} \sum_{j=0}^{k-1} \eta_{j,t'} \mathfrak{C}^{(i)}_{j,t'} \prod_{j' = j+1}^{k-1} \Big(1- \frac{\mu \eta_{j',t'}}{2}\Big)  \Big)^2\Big] \notag \\
    & = \frac{2\alpha^2}{b^2}  \sum_{t'=1}^{t+1} \sum_{j=0}^{k-1} \eta_{j,t'}^2 \mathbb{E} \Big[ \Big(A^{(i)}_{j,t'}- \frac{b}{n}\Big)^2 \big(\mathfrak{C}^{(i)}_{j,t'}\big)^2 \prod_{j' = j+1}^{k-1} \Big(1- \frac{\mu \eta_{j',t'}}{2}\Big)^2\Big]  + \frac{2\alpha^2}{n^2} \mathbb{E} \Big[\Big(\sum_{t'=1}^{t+1} \sum_{j=0}^{k-1} \eta_{j,t'} \mathfrak{C}^{(i)}_{j,t'} \prod_{j' = j+1}^{k-1} \Big(1- \frac{\mu \eta_{j',t'}}{2}\Big)  \Big)^2\Big] \notag \\
    & \leq \frac{2\alpha^2}{nb} \sum_{t'=1}^{t+1} \sum_{j=0}^{k-1} \eta_{j,t'}^2 \mathbb{E} \Big[ \big(\mathfrak{C}^{(i)}_{j,t'}\big)^2 \Big] \prod_{j' = j+1}^{k-1} \Big(1- \frac{\mu \eta_{j',t'}}{2}\Big)^2 + \frac{2\alpha^2}{n^2} \mathbb{E}\Big[\Big(\sum_{t'=1}^{t+1} \sum_{j=0}^{k-1} \eta_{j,t'} \mathfrak{C}^{(i)}_{j,t'} \prod_{j' = j+1}^{k-1} \Big(1- \frac{\mu \eta_{j',t'}}{2}\Big)  \Big)^2\Big]
\end{align}
where we used ~\eqref{eq:exp_var_decomp} and $\mathbb{E}_{B_{j,t'}} \big[ \big(A^{(i)}_{j,t'}- \frac{b}{n}\big)^2\big] \leq \frac{b}{n} $.
For the second term, we apply the Cauchy-Schwarz inequality,
\begin{align}
    &\Big(\sum_{t'=1}^{t+1} \sum_{j=0}^{k-1} \eta_{j,t'} \mathfrak{C}^{(i)}_{j,t'} \prod_{j' = j+1}^{k-1} \Big(1- \frac{\mu \eta_{j',t'}}{2}\Big)  \Big)^2 \notag\\
    &\leq \Big(\sum_{t'=1}^{t+1} \sum_{j=0}^{k-1} \eta_{j,t'} \big(\mathfrak{C}^{(i)}_{j,t'} \big)^2\prod_{j' = j+1}^{k-1} \Big(1- \frac{\mu \eta_{j',t'}}{2}\Big) \Big) \Big(\sum_{t'=1}^{t+1} \sum_{j=0}^{k-1} \eta_{j,t'} \prod_{j' = j+1}^{k-1} \Big(1- \frac{\mu \eta_{j',t'}}{2}\Big)  \Big) \notag \\
    & \leq \frac{2(t+1)}{\mu}  \Big(\sum_{t'=1}^{t+1} \sum_{j=0}^{k-1} \eta_{j,t'} \big(\mathfrak{C}^{(i)}_{j,t'} \big)^2\prod_{j' = j+1}^{k-1} \Big(1- \frac{\mu \eta_{j',t'}}{2}\Big)  \Big),
\end{align}
where the following result is used in the last inequality
\begin{align}\label{eq:mu_bound}
    \sum_{j=0}^{k-1} \eta_{j,t'} \prod_{j' = j+1}^{k-1} \left(1- \frac{\mu \eta_{j',t'}}{2}\right)  & = \frac{2}{\mu}  \sum_{j=0}^{k-1}  \left(1- \left(1-\frac{\mu\eta_{j,t'}}{2}\right)\right) \prod_{j' = j+1}^{k-1} \left(1- \frac{\mu \eta_{j',t'}}{2}\right)  \notag \\
     & = \frac{2}{\mu}  \sum_{j=0}^{k-1} \Big( \prod_{j' = j+1}^{k-1} \left(1- \frac{\mu \eta_{j',t'}}{2}\right) - \prod_{j' = j}^{k-1} \left(1- \frac{\mu \eta_{j',t'}}{2}\right)\Big)  \notag \\
     & = \frac{2}{\mu} \Big( 1- \prod_{j' = 0}^{k-1} \left(1- \frac{\mu \eta_{j',t'}}{2}\right) \Big)  \leq  \frac{2}{\mu}.
\end{align}
Combining the above discussions together, we further get
\begin{align}
    \mathbb{E} \big[ \norm{\mathbf{w}_{t+1} - \mathbf{w}^{(i)}_{t+1} }_2^2 \big] \leq \sum_{t'=1}^{t+1} \sum_{j=0}^{k-1} \Big( \frac{2\alpha^2 \eta_{j,t'}^2}{nb} + \frac{4\left(t+1\right) \alpha^2\eta_{j,t'}}{n^2\mu}\Big) \mathbb{E} \big[ \big(\mathfrak{C}^{(i)}_{j,t'}\big)^2 \big] \prod_{j' = j+1}^{k-1} \left(1- \frac{\mu \eta_{j',t'}}{2}\right).\notag
\end{align}
Recalling result in ~\eqref{eq:bounded_G}, $ \mathbb{E} \big[ \big(\mathfrak{C}^{(i)}_{j,t'}\big)^2 \big]\leq 8L\mathbb{E}\left[f\left(\mathbf{v}_{j,h};z_i\right)\right]$, we further derive
\begin{align}
    \mathbb{E} \big[ \norm{\mathbf{w}_{t+1} - \mathbf{w}^{(i)}_{t+1} }_2^2 \big] \leq \sum_{t'=1}^{t+1} \sum_{j=0}^{k-1} \Big( \frac{16\alpha^2 \eta_{j,t'}^2}{nb} + \frac{32\left(t+1\right) \alpha^2\eta_{j,t'}}{n^2\mu}\Big) \mathbb{E} \left[ f\left(\mathbf{v}_{j,t'};z_i\right) \right] \prod_{j' = j+1}^{k-1} \left(1- \frac{\mu \eta_{j',t'}}{2}\right).\notag
\end{align}
Taking an average over $i \in [n]$, we get the stated bound
\begin{align}
    \frac{1}{n}\sum_{i=1}^{n} \mathbb{E} \big[ \norm{\mathbf{w}_{t+1} - \mathbf{w}^{(i)}_{t+1} }_2^2 \big] \leq \sum_{t'=1}^{t+1} \sum_{j=0}^{k-1} \Big( \frac{16\alpha^2 \eta_{j,t'}^2}{nb} + \frac{32\left(t+1\right) \alpha^2\eta_{j,t'}}{n^2\mu}\Big) \mathbb{E} \left[ F_S\left(\mathbf{v}_{j,t'}\right)\right] \prod_{j' = j+1}^{k-1} \left(1- \frac{\mu \eta_{j',t'}}{2}\right).\notag
\end{align}
The proof is completed.
\end{proof}
\subsection{Proof of Theorem \ref{thm:strcvx_exc}}
\noindent We first state and prove the optimization error bound.
\begin{lemma}[Optimization Error of Lookahead: Strongly Convex Case]\label{lem:opt_err_strcvx}
    Suppose the assumptions in Theorem \ref{thm:strcvx_stab} hold, by setting the learning rate $\eta = \frac{b\mu}{2L^2(b+1)}$, the optimization error of the output $\mathbf{w}_T$ of Lookahead satisfies
    \begin{align}
\E\!\left[F_S(\mathbf{w}_T)-F_S(\mathbf{w}^*)\right]
&\le \frac{L}{2}\,e^{-\frac{3}{4}\alpha k\mu\eta T}\, \E\!\left[\norm{\mathbf{w}_0-\mathbf{w}_S}^2\right] \nonumber\\
&\quad + \frac{L\alpha}{2}\sum_{t=0}^{T-1} e^{-\frac{3}{4}\alpha k\mu\eta t}\;
\sum_{k'=0}^{k-1} e^{-\frac{3}{4}\mu\eta k'}\;
\frac{2\eta^2 L}{b}\;\E\!\left[F_S(\mathbf{w}_S)\right]. \label{eq:opt_err_str_cvx}
\end{align}
Furthermore, by choosing $b \lesssim n$,
$k =  \frac{2L}{\alpha\mu}$, and $
T \asymp \log(\mu n)$, we have
\begin{align}\label{eq:opt_lesssim_str_cvx}
    \E\!\left[F_S(\mathbf{w}_T)-F_S(\mathbf{w}^*)\right] \lesssim \frac{L}{n} \E [\| \mathbf{w}_0 - \mathbf{w}_S \|^2] + \mathbb{E}[F_S(\mathbf{w}_S)].
\end{align}
\end{lemma}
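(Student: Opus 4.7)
My plan is to derive a one-step contraction for $\mathbb{E}\|\mathbf{v}_{\tau+1,t}-\mathbf{w}_S\|^2$ along the inner SGD loop, unroll it across the $k$ inner iterations, pass through the slow interpolation step using convexity of $\|\cdot\|^2$, unroll across $T$ outer iterations, and finally convert the resulting squared-distance bound into a function-value bound by $L$-smoothness (using $\nabla F_S(\mathbf{w}_S)=0$). The $\lesssim$ statement then follows by substituting the prescribed $\eta$, $k$, $T$ and summing the geometric series.

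For the inner single step, I start from $\mathbf{v}_{\tau+1,t}=\mathbf{v}_{\tau,t}-\eta\nabla f(\mathbf{v}_{\tau,t};B_{\tau,t})$ and expand $\|\mathbf{v}_{\tau+1,t}-\mathbf{w}_S\|^2$ as in \eqref{eq:cvx single step}, take conditional expectation over $B_{\tau,t}$, and apply $\mu$-strong convexity of $F_S$ to obtain $\langle\mathbf{v}_{\tau,t}-\mathbf{w}_S,\nabla F_S(\mathbf{v}_{\tau,t})\rangle\ge F_S(\mathbf{v}_{\tau,t})-F_S(\mathbf{w}_S)+\tfrac{\mu}{2}\|\mathbf{v}_{\tau,t}-\mathbf{w}_S\|^2$. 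Combining this with the variance identity \eqref{eq:bound var} and the algebraic split $F_S(\mathbf{v}_{\tau,t})/b=F_S(\mathbf{w}_S)/b+(F_S(\mathbf{v}_{\tau,t})-F_S(\mathbf{w}_S))/b$, the choice $\eta=b\mu/(2L^2(b+1))$ is exactly what makes the coefficient of the nonnegative quantity $F_S(\mathbf{v}_{\tau,t})-F_S(\mathbf{w}_S)$ nonpositive, so that term can be dropped, leaving
\[
\mathbb{E}\|\mathbf{v}_{\tau+1,t}-\mathbf{w}_S\|^2\le (1-\mu\eta)\,\mathbb{E}\|\mathbf{v}_{\tau,t}-\mathbf{w}_S\|^2+\tfrac{2L\eta^2}{b}\,\mathbb{E}[F_S(\mathbf{w}_S)].
\]
Unrolling from $\mathbf{v}_{0,t+1}=\mathbf{w}_t$ over $k$ inner steps and bounding $(1-\mu\eta)^{k'}\le e^{-\mu\eta k'}$, then combining with the slow update $\mathbf{w}_{t+1}-\mathbf{w}_S=(1-\alpha)(\mathbf{w}_t-\mathbf{w}_S)+\alpha(\mathbf{v}_{k,t+1}-\mathbf{w}_S)$ via convexity of $\|\cdot\|^2$, produces an outer-loop recursion in $\mathbb{E}\|\mathbf{w}_t-\mathbf{w}_S\|^2$ with contraction factor $(1-\alpha)+\alpha(1-\mu\eta)^k$ and a geometric noise sum.

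The pivotal step is to bound this outer factor by $e^{-\frac{3}{4}\alpha k\mu\eta}$: I use $(1-\mu\eta)^k\le e^{-k\mu\eta}$ together with the elementary inequality $1-\alpha(1-e^{-x})\le e^{-\frac{3}{4}\alpha x}$ in the relevant parameter regime, and a parallel relaxation $(1-\mu\eta)^{k'}\le e^{-\frac{3}{4}\mu\eta k'}$ matches the inner geometric sum with the stated inner exponent. Unrolling the outer recursion across $T$ iterations, then converting squared distance into function value via $L$-smoothness (using $\nabla F_S(\mathbf{w}_S)=0$ to get $F_S(\mathbf{w}_T)-F_S(\mathbf{w}_S)\le(L/2)\|\mathbf{w}_T-\mathbf{w}_S\|^2$) and the fact $F_S(\mathbf{w}_S)\le F_S(\mathbf{w}^*)$ yields \eqref{eq:opt_err_str_cvx}. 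For the $\lesssim$ bound, substituting $k=2L/(\alpha\mu)$, $\eta=b\mu/(2L^2(b+1))$, and $T\asymp\log(\mu n)$ makes the initial-condition term $\tfrac{L}{2}e^{-\frac{3}{4}\alpha k\mu\eta T}$ decay as $L/n$, while the geometric sums $\sum_{k'}e^{-\frac{3}{4}\mu\eta k'}\lesssim 1/(\mu\eta)$ and $\sum_t e^{-\frac{3}{4}\alpha k\mu\eta t}\lesssim 1/(\alpha k\mu\eta)$ collapse the noise contribution to an absolute multiple of $\mathbb{E}[F_S(\mathbf{w}_S)]$. The main obstacle is the outer-loop exponential contraction: pinning down the $\tfrac{3}{4}$ constant consistently across the outer factor, the inner geometric series, and the step-size constraints takes careful bookkeeping; everything after that is essentially mechanical unrolling and algebra.
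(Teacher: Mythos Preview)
Your overall architecture---one-step contraction for $\mathbb{E}\|\mathbf{v}_{\tau+1,t}-\mathbf{w}_S\|^2$, unroll over the inner loop, pass through the slow interpolation by convexity of $\|\cdot\|^2$, unroll over the outer loop, then convert via $L$-smoothness---is exactly what the paper does. Your inner-step contraction is derived slightly differently: you keep the term $-(2\eta-2L\eta^2(b+1)/b)\,\mathbb{E}[F_S(\mathbf{v}_{\tau,t})-F_S(\mathbf{w}_S)]$ and drop it by sign, arriving at the factor $(1-\mu\eta)$; the paper instead applies strong convexity twice and then uses $F_S(\mathbf{v})-F_S(\mathbf{w}_S)\le\tfrac{L}{2}\|\mathbf{v}-\mathbf{w}_S\|^2$ to fold that term back into the squared distance, obtaining the sharper factor $(1-\tfrac{3}{2}\mu\eta)$ for the specific $\eta=b\mu/(2L^2(b+1))$. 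Either route is fine for the $\lesssim$ conclusion.

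The genuine gap is your ``pivotal step''. The inequality $1-\alpha(1-e^{-x})\le e^{-\frac{3}{4}\alpha x}$ is \emph{false} in the relevant regime: the hypotheses of Theorem~\ref{thm:strcvx_stab} force $x=k\mu\eta\ge 2\ln 2$, and already at $x=2\ln 2$ one has $1-\alpha(1-e^{-x})=1-\tfrac{3}{4}\alpha$ while $e^{-\frac{3}{4}\alpha x}=2^{-3\alpha/2}<1-\tfrac{3}{4}\alpha$ for small $\alpha$. More fundamentally, no bound of the form $1-\alpha(1-e^{-x})\le e^{-c\alpha x}$ with $c>0$ can hold for large $x$, since the left side tends to $1-\alpha>0$ while the right side tends to $0$. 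What the paper actually does at this step is use $1-e^{-y}\ge y/(1+y)$ (with $y=\tfrac{3}{2}k\mu\eta$) to obtain
\[
\rho \;\le\; \exp\Bigl\{-\alpha\,\tfrac{3k\mu\eta}{3k\mu\eta+2}\Bigr\},
\]
an exponent that is bounded below by a universal constant (since $k\mu\eta$ is bounded below) but does \emph{not} grow linearly in $k\mu\eta$. This is what drives the $\lesssim$ bound \eqref{eq:opt_lesssim_str_cvx}; the displayed exponent $e^{-\frac{3}{4}\alpha k\mu\eta T}$ in \eqref{eq:opt_err_str_cvx} should be read in that spirit rather than literally. Replace your claimed inequality with this argument (or equivalently, note that $k\mu\eta\ge 2\ln 2$ gives $(1-\mu\eta)^k\le\tfrac12$ and hence $\rho\le 1-\tfrac{\alpha}{2}$) and the rest of your plan goes through.
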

\begin{proof}
Since $F_S(\mathbf{w}_S)\le F_S(\mathbf{w}^*) $, an upper bound for $F_S(\mathbf{w}_T) - F_S(\mathbf{w}_S)$ is also an upper bound for $F_S(\mathbf{w}_T)-F_S(\mathbf{w}^*)$. Since the function $F_S(\mathbf{w})$ is $\mu$-strongly convex and $\mathbf{w}_S$ is the optimum of $F_S(\mathbf{w})$, we have
\begin{align*}
F_S\!\left(\mathbf{v}_{\tau-1,t}\right)
&\ge F_S(\mathbf{w}_S)
  + \left\langle \nabla F_S(\mathbf{w}_S),\, \mathbf{w}_S - \mathbf{v}_{\tau-1,t} \right\rangle
  + \frac{\mu}{2}\left\| \mathbf{w}_S - \mathbf{v}_{\tau-1,t} \right\|_2^2
\\
&= F_S(\mathbf{w}_S) + \frac{\mu}{2}\left\| \mathbf{w}_S - \mathbf{v}_{\tau-1,t} \right\|_2^2 .
\end{align*}
Similarly, we have
\begin{align*}
F_S(\mathbf{w}_S)
&\ge F_S\!\left(\mathbf{v}_{\tau-1,t}\right)
  + \left\langle \nabla F_S\!\left(\mathbf{v}_{\tau-1,t}\right),\, \mathbf{w}_S - \mathbf{v}_{\tau-1,t} \right\rangle
  + \frac{\mu}{2}\left\| \mathbf{w}_S - \mathbf{v}_{\tau-1,t} \right\|_2^2 .
\end{align*}
It then follows that
\begin{align*}
\mathbb{E}[ \big\| \mathbf{v}_{\tau,t} - \mathbf{w}_S \big\|^2 ]
&= \mathbb{E}\big[ \big\| \mathbf{v}_{\tau-1,t} - \eta\,
      \nabla f\!\big(\mathbf{v}_{\tau-1,t}; B_{\tau-1,t}\big) - \mathbf{w}_S \big\|^2 \big]
\\
&= \mathbb{E}\big[ \big\| \mathbf{v}_{\tau-1,t} - \mathbf{w}_S \big\|^2
      - 2\eta
        \big\langle \mathbf{v}_{\tau-1,t} - \mathbf{w}_S,\,
        \nabla f\!\big(\mathbf{v}_{\tau-1,t}; B_{\tau-1,t}\big) \big\rangle
      + \eta^2
        \big\| \nabla f\!\big(\mathbf{v}_{\tau-1,t}; B_{\tau-1,t}\big) \big\|^2
      \big]
\\
&=\mathbb{E}\big[
      \big\| \mathbf{v}_{\tau-1,t} - \mathbf{w}_S \big\|^2
      - 2\eta
        \big\langle \mathbf{v}_{\tau-1,t} - \mathbf{w}_S,\,
        \nabla F_S\!\left(\mathbf{v}_{\tau-1,t}\right) \big\rangle
      + \eta^2
        \big\| \nabla f\!\big(\mathbf{v}_{\tau-1,t}; B_{\tau-1,t}\big) \big\|^2
      \big]
\\
&\le \mathbb{E}\big[
      \big\| \mathbf{v}_{\tau-1,t} - \mathbf{w}_S \big\|^2
      + 2\eta\Big(
          F_S(\mathbf{w}_S) - F_S\!\left(\mathbf{v}_{\tau-1,t}\right)
          - \frac{\mu}{2}\big\|\mathbf{w}_S - \mathbf{v}_{\tau-1,t}\big\|_2^2
        \Big)
      + \eta^2
        \big\| \nabla f\!\big(\mathbf{v}_{\tau-1,t}; B_{\tau-1,t}\big) \big\|^2
      \big]
\\
&\le \mathbb{E}\big[
      \big\| \mathbf{v}_{\tau-1,t} - \mathbf{w}_S \big\|^2
      + 2\eta\big(- \frac{\mu}{2}\big\|\mathbf{w}_S - \mathbf{v}_{\tau-1,t}\big\|_2^2
          - \frac{\mu}{2}\big\|\mathbf{w}_S - \mathbf{v}_{\tau-1,t}\big\|_2^2
        \big)
      + \eta^2
        \big\| \nabla f\!\big(\mathbf{v}_{\tau-1,t}; B_{\tau-1,t}\big) \big\|^2
      \big]\\
& \le (1-2 \mu \eta)  \mathbb{E}[
      \big\| \mathbf{v}_{\tau-1,t} - \mathbf{w}_S \big\|^2 ]
      + \eta^2 \mathbb{E}\big[
        \big\| \nabla f\!\big(\mathbf{v}_{\tau-1,t}; B_{\tau-1,t}\big) \big\|^2
      \big].
\end{align*}
For the second term, we use the result of ~\eqref{eq:bound var} and have
\begin{align*}
   \mathbb{E}\big[ \big\| \mathbf{v}_{\tau,t} - \mathbf{w}_S \big\|^2 \big]&\le (1-2 \mu \eta)  \mathbb{E}\big[
      \big\| \mathbf{v}_{\tau-1,t} - \mathbf{w}_S \big\|^2\big ]
      + \eta^2
        \frac{2L\E\big[F_S(\mathbf{v}_{\tau-1,t})\big]}{b} + 2L\eta^2\E[F_S(\mathbf{v}_{\tau-1,t}) - F_S(\mathbf{w}_S)]\\
        &\le (1-2 \mu \eta +\eta^2L^2)  \mathbb{E}\big[
      \big\| \mathbf{v}_{\tau-1,t} - \mathbf{w}_S \big\|^2\big ]
      + \eta^2
        \frac{2L\E\big[F_S(\mathbf{v}_{\tau-1,t} ) - F_S(\mathbf{w}_S )\big] +2L \mathbb{E} [F_S(\mathbf{w}_S)]}{b}\\
        & \le (1-2 \mu \eta +\eta^2\frac{L^2(b+1)}{b})  \mathbb{E}\big[
      \big\| \mathbf{v}_{\tau-1,t} - \mathbf{w}_S \big\|^2\big ]
      + \eta^2
        \frac{2L \mathbb{E} [F_S(\mathbf{w}_S)]}{b},
\end{align*}
where we have used $F_S(\bw)-F_S(\bw_S)\leq \frac{L}{2}\|\bw-\bw_S\|^2_2$. 
For simplicity, we define $C$  as
\begin{equation*}
C = \frac{L^2(b+1)}{b}.
\end{equation*}
The recurrence relation simplifies as
\begin{equation} \label{eq:simplified_recurrence}
\E[\norm{\mathbf{v}_{\tau,t} - \mathbf{w}_S}^2] \leq \left(1 - 2\mu\eta + C\eta^2\right) \E[\norm{\mathbf{v}_{\tau-1,t} - \mathbf{w}_S}^2]+\eta^2
        \frac{2L \mathbb{E} [F_S(\mathbf{w}_S)]}{b}.
\end{equation}
We now choose
\begin{equation*}
\eta  = \frac{\mu}{2C} = \frac{\mu b}{2L^2(b+1)}.
\end{equation*}
Substituting this value back into the multiplicative factor gives
\begin{equation*}
1 - 2\mu\left(\frac{\mu}{2C}\right) + C\left(\frac{\mu}{2C}\right)^2 = 1 - \frac{\mu^2}{C} + \frac{\mu^2}{4C} = 1 - \frac{3\mu^2}{4C} = 1- \frac{3}{2}\mu\eta.
\end{equation*}
With this choice, the one-step recurrence \eqref{eq:simplified_recurrence} becomes
\begin{equation*}
\E[\norm{\mathbf{v}_{\tau,t} - \mathbf{w}_S}^2] \leq \left(1 - \frac{3}{2}\mu \eta\right) \E[\norm{\mathbf{v}_{\tau-1,t} - \mathbf{w}_S}^2]+\eta^2
        \frac{2L \mathbb{E} [F_S(\mathbf{w}_S)]}{b}.
\end{equation*}
By applying the previous inequality recursively for the inner loop, we have
\begin{equation*}\label{eq:inner_loop_bound_str_cvx}
\E[\norm{\mathbf{v}_{k,t} - \mathbf{w}_S}^2] \leq \left(1 - \frac{3}{2}\mu \eta\right)^k \E[\|\mathbf{w}_{t-1} - \mathbf{w}_S\|^2] + \sum_{k' = 0}^{k-1} \left(1 - \frac{3}{2} \mu \eta\right)^{k'}\eta^2
        \frac{2L \mathbb{E} [F_S(\mathbf{w}_S)]}{b}.
\end{equation*}
We now substitute this result back to the outer-loop. Recall the slow weights recurrence $\mathbf{w}_{t} = (1-\alpha)\mathbf{w}_{t-1} + \alpha \mathbf{v}_{k,t}$,
\begin{align*}
\norm{\mathbf{w}_{t} - \mathbf{w}_S}^2 &= \norm{(1-\alpha)(\mathbf{w}_{t-1} - \mathbf{w}_S) + \alpha(\mathbf{v}_{k,t} - \mathbf{w}_S)}^2 \\
&\leq (1-\alpha)\norm{\mathbf{w}_{t-1} - \mathbf{w}_S}^2 + \alpha\norm{\mathbf{v}_{k,t} - \mathbf{w}_S}^2.
\end{align*}
Taking the expectation gives 
\begin{align*}
\E[\norm{\mathbf{w}_{t} - \mathbf{w}_S}^2] &\leq (1-\alpha)\E[\norm{\mathbf{w}_{t-1} - \mathbf{w}_S}^2] + \alpha\E[\norm{\mathbf{v}_{k,t} - \mathbf{w}_S}^2] \\
&\leq (1-\alpha)\E[\norm{\mathbf{w}_{t-1} - \mathbf{w}_S}^2] + \alpha\left(1 - \frac{3}{2}\mu \eta\right)^k \E[\|\mathbf{w}_{t-1} - \mathbf{w}_S\|^2] + \alpha\sum_{k' = 0}^{k-1} \left(1 -\frac{3}{2} \mu \eta\right)^{k'}\frac{2L \eta^2 \mathbb{E} [F_S(\mathbf{w}_S)]}{b}\\
&= \left[1 - \alpha + \alpha\left(1 - \frac{3}{2}\mu \eta\right)^k\right] \E[\norm{\mathbf{w}_{t-1} - \mathbf{w}_S}^2] +\alpha\sum_{k' = 0}^{k-1} \left(1 - \frac{3}{2}\mu \eta\right)^{k'}\frac{2L \eta^2 \mathbb{E} [F_S(\mathbf{w}_S)]}{b}\\
&= \left[1 - \alpha\left(1 - \left(1 - \frac{3}{2}\mu \eta\right)^k\right)\right] \E[\norm{\mathbf{w}_{t-1} - \mathbf{w}_S}^2] +\alpha\sum_{k' = 0}^{k-1} \left(1 - \frac{3}{2}\mu \eta\right)^{k'}\frac{2L \eta^2 \mathbb{E} [F_S(\mathbf{w}_S)]}{b}.
\end{align*}
Let $\rho$ be the contraction factor for the outer loop:
\begin{equation*}
\rho = 1 - \alpha\left(1 - \left(1 - \frac{3}{2}\mu \eta\right)^k\right).
\end{equation*}
Since $0 < (1 - \frac{3}{2}\mu^2/C) < 1$ and $\alpha > 0$, we have $0 < \rho < 1$. Unwinding this recurrence from $t=1$ to $T$:
\begin{equation} \label{eq:final_param_error}
\E[\norm{\mathbf{w}_{t} - \mathbf{w}_S}^2] \leq \rho^t \E[\norm{\mathbf{w}_0 - \mathbf{w}_S}^2]  + \alpha \sum_{t' = 0}^{t-1}\rho^{t'} \sum_{k' = 0}^{k-1} \left(1 - \frac{3}{2} \mu \eta\right)^{k'}\frac{2L\eta^2  \mathbb{E} [F_S(\mathbf{w}_S)]}{b}.
\end{equation}
Finally, using the $L$-smoothness property, $\E[F_S(\mathbf{w}_{t}) - F_S(\mathbf{w}_S)] \leq \frac{L}{2} \E[\norm{\mathbf{w}_{t} - \mathbf{w}_S}^2]$, we arrive at the final optimization error bound.
\begin{align}
\E[F_S(\mathbf{w}_{T}) - F_S(\mathbf{w}_S)] &\leq \frac{L}{2} \left[1 - \alpha\left(1 - \left(1 - \frac{3}{2} \mu \eta\right)^k\right)\right]^T \E[\norm{\mathbf{w}_0 - \mathbf{w}_S}^2\notag\\
& + \frac{L\alpha}{2}\sum_{t'=0}^{T-1} \left[1 - \alpha\left(1 - \left(1 - \frac{3}{2} \mu \eta\right)^k\right)\right]^{t'} \sum_{k' = 0}^{k-1} \left(1 - \frac{3}{2} \mu \eta\right)^{k'}\frac{2L \eta^2 \mathbb{E} [F_S(\mathbf{w}_S)]}{b}.
\end{align}
We use the inequalities $1+x \le e^{x} $ for all real $x$ and $1-e^{-x} \ge \frac{x}{1+x}$ for all $x\ge0$ to get the following.
\begin{align*}
\Big[1-\alpha\bigl(1-(1-\frac{3}{2}\mu\eta)^k\bigr)\Big]^{T}
&\le
\exp\Big\{-\alpha\bigl[1-(1-\frac{3}{2}\mu\eta)^k\bigr]T\Big\} \\
&\le
\exp\Big\{-\alpha\bigl(1-\exp\big\{-\frac{3}{2}k\mu\eta\big\}\bigr)T\Big\} \\
&\le
\exp\Big\{-\alpha \frac{3k\mu \eta}{3k \mu \eta +2}T\Bigr\}.
\end{align*}
Then the optimization error bound becomes
\begin{align}
\E\!\left[F_S(\mathbf{w}_T)-F_S(\mathbf{w}_S)\right]
&\le \frac{L}{2} \exp\big\{-\alpha \frac{3k\mu \eta}{3k \mu \eta +2}T\big\}\E \left[\norm{\mathbf{w}_0-\mathbf{w}_S}^2\right] \nonumber\\
& + \frac{L\alpha}{2}\sum_{t=0}^{T-1} \exp\Big\{-\alpha \frac{3k\mu \eta}{3k \mu \eta +2}t\Big\}
\sum_{k'=0}^{k-1} \exp\{-\frac{3}{2}\mu\eta k'\}
\frac{2\eta^2 L}{b}\;\E\!\left[F_S(\mathbf{w}_S)\right]. \label{eq:main}
\end{align}
We now choose the parameters to be
$k =  \frac{2L}{\mu \alpha}$, $T \asymp \log( n)$, and we fix $\alpha$. Since $b\ge 1$, we have $L\eta=\frac{b\mu}{2L(b+1)}\in[1/4,1/2)$. Then with the above $k$, we know
\begin{align*}
 k\mu\eta \ge \frac{2L}{\mu \alpha}\mu\eta
= \frac{2}{\alpha}L\eta \ge \frac{1}{2\alpha}.
\end{align*}
Hence
\begin{align}\label{constant}
     \frac{3k\mu \eta}{3k \mu \eta +2} \ge \frac{3}{3+4\alpha}.
\end{align}
It then follows that
\[
\sum_{t=0}^{T-1} \exp\Big\{-\alpha \frac{3k\mu \eta}{3k \mu \eta +2}t\Bigr\}
\le \frac{1}{1-e^{-3\alpha/(3+4\alpha)}} \asymp 1.
\]
Also, since $\mu\eta\le \mu/2L\le 1$, we can use $1-e^{-x}\ge x/2$ for $x\in(0,1]$ and  get
\[
\sum_{k'=0}^{k-1} e^{-\frac{3}{2}\mu\eta k'}
\;=\; \frac{1-e^{-\frac{3}{2}\mu\eta k}}{1-e^{-\frac{3}{2}\mu\eta}}
\;\le\; \frac{1}{1-e^{-\mu\eta}}
\;\le\; \frac{2}{\mu\eta}.
\]
Plugging these into ~\eqref{eq:main} yields the bound for the second term
\begin{align}
    \frac{L\alpha}{2}\sum_{t=0}^{T-1}\exp\Big\{-\alpha \frac{3k\mu \eta}{3k \mu \eta +2}T\Bigr\}
\sum_{k'=0}^{k-1} \exp\{-\frac{3}{2}\mu\eta k'\}
\frac{2\eta^2 L}{b}\E\left[F_S(\mathbf{w}_S)\right] &\lesssim \frac{L\alpha}{2}\frac{2}{\mu\eta}\frac{2\eta^2 L}{b}\,\E\left[F_S(\mathbf{w}_S)\right]\notag\\
&\lesssim \frac{L^2\eta}{\mu b}\E\left[F_S(\mathbf{w}_S)\right].\notag
\end{align}
Since $\eta = \frac{\mu b}{2L^2(b+1)}$, this simplifies to
\begin{align}\label{eq:first_term}
\frac{L\alpha}{2}\sum_{t=0}^{T-1}\exp\Big\{-\alpha \frac{3k\mu \eta}{3k \mu \eta +2}t\Bigr\}
\sum_{k'=0}^{k-1} \exp\{-\frac{3}{2}\mu\eta k'\}
\frac{2\eta^2 L}{b}\E\left[F_S(\mathbf{w}_S)\right] \lesssim \frac{1}{2(b+1)} \E\left[F_S(\mathbf{w}_S)\right] \lesssim \E\left[F_S(\mathbf{w}_S)\right].
\end{align}
For the first term, together with ~\eqref{constant}, our choice of $T$ ensures
\begin{align}\label{eq:second_term}
    \frac{L}{2} \exp\big\{-\alpha \frac{3k\mu \eta}{3k \mu \eta +2}T\big\}\E\left[\norm{\mathbf{w}_0-\mathbf{w}_S}^2\right] \lesssim \frac{L}{ n} \E\left[\norm{\mathbf{w}_0-\mathbf{w}_S}^2\right].
\end{align}
Combining ~\eqref{eq:first_term} and ~\eqref{eq:second_term} gives the final result.
\end{proof}
\noindent We now state and prove the generalization bound.
\begin{lemma}[Generalization Gap of Lookahead: Strongly Convex Case]\label{lem:strcvx_gen_err}
Suppose the assumptions in Theorem \ref{thm:strcvx_stab} hold. Let $\mathbf{w}_T$ be the final output of Lookahead optimizer. By setting the learning rate $\eta = \frac{b\mu}{2L^2(b+1)}$, we have
    \begin{align*}
    \mathbb{E} [F(\mathbf{w}_T) - F_S(\mathbf{w}_T) ] \lesssim\frac{1}{n\mu} +\frac{1}{n^2 } \E [\| \mathbf{w}_0 - \mathbf{w}_S \|^2] + \frac{1}{n L  }\mathbb{E}[F_S(\mathbf{w}_S)].
\end{align*}
\end{lemma}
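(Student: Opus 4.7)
The plan is to invoke Lemma~\ref{lem:l12_gen}(b) to convert the generalization gap into an $\ell_2$ on-average model stability quantity, then substitute the strongly convex stability bound Eq.~\eqref{strcvx_l2} from Theorem~\ref{thm:strcvx_stab}, and finally use the one-step recursion that was already established in the proof of Lemma~\ref{lem:opt_err_strcvx} to control the empirical risks $\E[F_S(\mathbf{v}_{j,t'})]$ appearing on the right-hand side. The parameter choice matches that of Theorem~\ref{thm:strcvx_exc}, namely $\eta=\tfrac{b\mu}{2L^2(b+1)}$, $k=\tfrac{2L}{\alpha\mu}$, $T\asymp\log(\mu n)$.

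First, I would apply Lemma~\ref{lem:l12_gen}(b) with $A(S)=\mathbf{w}_T$ to obtain
\[
\E[F(\mathbf{w}_T)-F_S(\mathbf{w}_T)] \le \frac{L}{\gamma}\E[F_S(\mathbf{w}_T)] + \frac{L+\gamma}{2}\cdot \frac{1}{n}\sum_{i=1}^n \E[\|\mathbf{w}_T-\mathbf{w}_T^{(i)}\|^2],
\]
and then plug Eq.~\eqref{strcvx_l2} into the stability term, yielding a double sum over $t'\in[T]$ and $j\in\{0,\ldots,k-1\}$ with coefficients $\tfrac{\alpha^2\eta^2}{nb}$ and $\tfrac{T\alpha^2\eta}{n^2\mu}$ multiplied by $\E[F_S(\mathbf{v}_{j,t'})]$ and by the geometric factor $\prod_{j'=j+1}^{k-1}(1-\tfrac{\mu\eta}{2})^2$. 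Next, I would split $\E[F_S(\mathbf{v}_{j,t'})] \le \E[F_S(\mathbf{w}_S)] + \tfrac{L}{2}\E[\|\mathbf{v}_{j,t'}-\mathbf{w}_S\|^2]$ via $L$-smoothness and re-use the one-step contraction from the proof of Lemma~\ref{lem:opt_err_strcvx}, namely $\E[\|\mathbf{v}_{j,t'}-\mathbf{w}_S\|^2]\le(1-\tfrac{3}{2}\mu\eta)^{j}\E[\|\mathbf{w}_{t'-1}-\mathbf{w}_S\|^2]+O(\eta^2 L\E[F_S(\mathbf{w}_S)]/b)$ together with the outer-loop geometric decay $\rho^{t'-1}$. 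This reduces the stability term to a product of (i) an inner geometric sum, which by the identity in Eq.~\eqref{eq:mu_bound} satisfies $\sum_{j=0}^{k-1}\eta(1-\mu\eta/2)^{k-1-j}\le 2/\mu$ (and similarly $\sum_j\eta^2(1-\mu\eta/2)^{2(k-1-j)}\le 2\eta/\mu$), and (ii) an outer geometric sum bounded by a universal constant, thanks to the choices of $k$ and $T$ as argued around Eq.~\eqref{constant}.

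Finally, I would simplify using $\eta L\asymp 1$ and $T\asymp\log(\mu n)$: the coefficient $\tfrac{\alpha^2\eta^2}{nb}\sum_{t',j}(\cdots)$ collapses to an $O(1/(n\mu))$ factor times $\E[F_S(\mathbf{w}_S)]$ (plus lower-order $\|\mathbf{w}_0-\mathbf{w}_S\|^2$ remainders), while the $\tfrac{T\alpha^2\eta}{n^2\mu}$ coefficient contributes the $\tfrac{1}{n^2}\E[\|\mathbf{w}_0-\mathbf{w}_S\|^2]$ term. The term $\tfrac{L}{\gamma}\E[F_S(\mathbf{w}_T)]$ is handled by choosing $\gamma\asymp n L$, which makes it $O(\E[F_S(\mathbf{w}_T)]/n)$, and then bounding $\E[F_S(\mathbf{w}_T)]$ by Eq.~\eqref{eq:opt_lesssim_str_cvx} in terms of $\E[F_S(\mathbf{w}_S)]$ and $\E[\|\mathbf{w}_0-\mathbf{w}_S\|^2]$; with $\gamma\asymp nL$ the prefactor $(L+\gamma)/2$ in front of the stability term becomes $O(nL)$, which is precisely the scaling needed so that the $O(1/(n\mu))\cdot\E[F_S(\mathbf{w}_S)]$ contribution and the $\tfrac{1}{n^2}\E[\|\mathbf{w}_0-\mathbf{w}_S\|^2]$ contribution match the claimed rates. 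The main obstacle I anticipate is bookkeeping: correctly composing the two geometric decays (the inner $(1-\mu\eta/2)$ factor from stability and the inner contraction $(1-3\mu\eta/2)$ from optimization, together with the outer contraction $\rho$) and verifying that the cross term from the noise $\eta^2L\E[F_S(\mathbf{w}_S)]/b$ integrates to the stated $\tfrac{1}{nL}\E[F_S(\mathbf{w}_S)]$ rather than a suboptimal power of $\mu$ or $L$; the calibration of $\gamma$ relative to $n$ is what ultimately turns the worst-case $1/n^2$ stability scaling into the fast $1/(n\mu)$ rate.
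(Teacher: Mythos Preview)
Your proposal takes a different route from the paper: the paper proves Lemma~\ref{lem:strcvx_gen_err} via the $\ell_1$ on-average stability bound \eqref{strcvx_l1} together with Lemma~\ref{lem:l12_gen}(a), not the $\ell_2$ bound \eqref{strcvx_l2} with part~(b). The structural reason this matters is that \eqref{strcvx_l1} carries the outer-loop geometric weight $(1-\alpha/2)^{T-t'}$, so the sum over $t'$ collapses to $O(1/\alpha)$; the $\ell_2$ bound \eqref{strcvx_l2} has no such factor, and its second coefficient already contains an extra $T$. Concretely, the paper multiplies the strongly convex one-step recursion by the weight $(1-\alpha/2)^{T-t}(1-\mu\eta/2)^{k-\tau}$ and telescopes both loops simultaneously to bound $\sum_{t}\sum_\tau(1-\alpha/2)^{T-t}\eta(1-\mu\eta/2)^{k-\tau}\E[F_S(\mathbf{v}_{\tau,t})-F_S(\mathbf{w}_S)]$, then disposes of the square root in \eqref{strcvx_l1} via $\sqrt{x}\le(1+x)/2$; the ``$+1$'' half, summed against the geometric weights by Eq.~\eqref{eq:mu_bound}, is precisely what produces the standalone $1/(n\mu)$ term.

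Your $\ell_2$ route, as outlined, does not recover the stated bound. Tracing only the $\E[F_S(\mathbf{w}_S)]$ floor through the second term of \eqref{strcvx_l2}: with $(L+\gamma)/2\asymp nL$ you obtain $nL\cdot\frac{T\alpha^2}{n^2\mu}\bigl(\sum_{t'=1}^T 1\bigr)\bigl(\sum_j \eta(1-\mu\eta/2)^{2(k-1-j)}\bigr)\E[F_S(\mathbf{w}_S)]\asymp \frac{T^2 L}{n\mu^2}\E[F_S(\mathbf{w}_S)]$, which is a factor $T^2(L/\mu)^2$ worse than the claimed $\frac{1}{nL}\E[F_S(\mathbf{w}_S)]$. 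The outer decay $\rho^{t'-1}$ you plan to invoke applies only to the $\|\mathbf{w}_0-\mathbf{w}_S\|^2$ portion of $\E[F_S(\mathbf{v}_{j,t'})]$, not to the additive $\E[F_S(\mathbf{w}_S)]$ floor, so that floor accumulates over all $T$ outer steps. Likewise, no term in your decomposition naturally yields the standalone $1/(n\mu)$. (As an aside, the paper's invocation of Lemma~\ref{lem:l12_gen}(a) tacitly absorbs the Lipschitz constant $G$ into $\lesssim$, which sits somewhat uneasily with the paper's claim to avoid Lipschitz assumptions; but that is orthogonal to the comparison here.)
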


\begin{proof}[Proof of Lemma \ref{lem:strcvx_gen_err}]
We now assume the constant step size $\eta_{\tau,t} = \eta $. Let $\mathbf{w}_S = \argmin_{\mathbf{w}\in \wcal} F_S(\mathbf{w})$.
We denote $B_{k,t} = \{z_{i_{k,t}^{(1)}},\ldots,z_{i_{k,t}^{(b)}}\}$ and $ f\big(\bv;B_{k,t}\big) = \frac{1}{b}\sum_{j =1}^{b} f(\bv;z_{i_{k,t}^{(j)}})$. We can hence reformulate the minibatch SGD update as
\begin{equation} \label{eq:theta_t_k_outer_square}
    \begin{split}
     \mathbf{v}_{\tau+1,t} = \mathbf{v}_{\tau,t} - \eta \nabla f(\mathbf{v}_{\tau,t};B_{\tau,t}).
    \end{split}
\end{equation}
By the strong convexity of $f$,
    \begin{align}
       \mathbb{E} [ \|\mathbf{v}_{\tau+1,t} - \mathbf{w}_S \|^2_2 ]& = \mathbb{E} [\| \mathbf{v}_{\tau,t} - \eta \nabla f(\mathbf{v}_{\tau,t};B_{\tau,t})-\mathbf{w}_S \|_2^2 ] \notag\\
        & = \mathbb{E} [ \| \mathbf{v}_{\tau,t} - \mathbf{w}_S \|^2_2 ] -2\eta \mathbb{E} [ \langle  \mathbf{v}_{\tau,t} - \mathbf{w}_S, \nabla F_S(\mathbf{v}_{\tau,t}) \rangle ] + \eta^2 \mathbb{E} [ \| \nabla f(\mathbf{v}_{\tau,t};B_{\tau,t}) \|_2^2 ] \notag \\
        &\leq (1- \mu\eta_{\tau,t}) \mathbb{E}[ \| \mathbf{v}_{\tau,t} - \mathbf{w}_S \|_2^2]- 2\eta  \mathbb{E}[F_S(\mathbf{v}_{\tau,t})-F_S(\mathbf{w}_S)] + \eta^2 \mathbb{E} [\| \nabla f(\mathbf{v}_{\tau,t};B_{\tau,t})\|^2_2] .\label{eq:strcvx_decomp}
    \end{align}
    For the last term, we bound it using ~\eqref{eq:bound var} and get
\begin{align*}
    \mathbb{E} [ \|\mathbf{v}_{\tau+1,t} - \mathbf{w}_S \|^2_2 ]& \leq (1- \mu\eta) \mathbb{E}[ \| \mathbf{v}_{\tau,t} - \mathbf{w}_S \|_2^2 - \big(2\eta -  \frac{2L\eta^2(b+1)}{b}\big)\E\left[F_S\left(\mathbf{v}_{\tau,t}\right) - F_S\left(\mathbf{w}_S\right)\right] + \frac{2L\eta^2\E \big[ F_S\left(\mathbf{w}_S\right) \big] }{b}.
\end{align*}
For $\eta = \frac{b\mu}{2L^2(b+1)} \le \frac{b}{2L(b+1)}$, we have
\begin{align*}
    \mathbb{E} [ \|\mathbf{v}_{\tau+1,t} - \mathbf{w}_S \|^2_2 ]
    & \le (1- \mu\eta) \mathbb{E}[ \| \mathbf{v}_{\tau,t} - \mathbf{w}_S \|_2^2] - \eta \mathbb{E} [F_S(\mathbf{v}_{\tau,t}) - F_S(\mathbf{w}_S)] + \frac{2L\eta^2\E \big[ F_S\left(\mathbf{w}_S\right) \big] }{b}.
\end{align*}
We multiply both sides by $ \big( 1-\frac{\alpha}{2} \big)^{T-t} (1-\mu \eta/2)^{k-\tau}$ and get
\begin{align*}
    &\big( 1-\frac{\alpha}{2} \big)^{T-t} (1-\mu \eta/2)^{k-\tau} \mathbb{E} [ \|\mathbf{v}_{\tau+1,t} - \mathbf{w}_S \|^2_2 ] \leq \big( 1-\frac{\alpha}{2} \big)^{T-t} (1-\mu \eta/2)^{k-\tau+1} \mathbb{E}[ \| \mathbf{v}_{\tau,t} - \mathbf{w}_S \|_2^2] - \\
    &\big( 1-\frac{\alpha}{2} \big)^{T-t}\eta  (1-\mu \eta/2)^{k-\tau} \mathbb{E} [F_S(\mathbf{v}_{\tau,t}) - F_S(\mathbf{w}_S)]+ \frac{2L( 1-\frac{\alpha}{2})^{T-t} (1-\mu \eta/2)^{k-\tau}\eta^2\E \big[ F_S\left(\mathbf{w}_S\right) \big]}{b}.
\end{align*}
By taking a summation of the above inequality, we have
\begin{align}
    &\sum^{T}_{t=1} \big( 1-\frac{\alpha}{2} \big)^{T-t} \sum_{\tau = 0}^{k-1} \eta_{\tau,t}  (1-\mu \eta/2)^{k-\tau} \mathbb{E} [F_S(\mathbf{v}_{\tau,t}) - F_S(\mathbf{w}_S)]\notag\\
    &\leq \sum_{t=1}^{T}\big( 1-\frac{\alpha}{2} \big)^{T-t}  (1-\mu \eta/2)^{k+1}\mathbb{E}[ \| \mathbf{w}_{t-1} - \mathbf{w}_S \|_2^2] + 2L\sum^{T}_{t=1}\big( 1-\frac{\alpha}{2} \big)^{T-t}  \sum_{\tau=0}^{k} \frac{ (1-\mu \eta/2)^{k-\tau}\eta^2\E \big[ F_S\left(\mathbf{w}_S\right) \big]}{b}\notag\\
    &\leq \frac{1}{2}\sum_{t=1}^{T} \big( 1-\frac{\alpha}{2} \big)^{T-t} \mathbb{E}[ \| \mathbf{w}_{t-1} - \mathbf{w}_S \|_2^2] + 2L\sum^{T}_{t=1} \big( 1-\frac{\alpha}{2} \big)^{T-t}\sum_{\tau=0}^{k} \frac{ (1-\mu \eta/2)^{k-\tau}\eta^2\E \big[ F_S\left(\mathbf{w}_S\right) \big]}{b},\label{eq:sum}
\end{align}
where we have used Eq.~\eqref{n1}. 
We first look at the first term of Eq.~\eqref{eq:sum}.
By ~\eqref{eq:final_param_error}, we have
    \begin{align*}
\mathbb{E}[\|\mathbf{w}_{t-1} - \mathbf{w}_S\|_2^2] &\leq \rho^t \E[\norm{\mathbf{w}_0 - \mathbf{w}_S}^2]  + \alpha \sum_{t' = 0}^{t-2}\rho^{t'} \sum_{k' = 0}^{k-1} \left(1 - \frac{3}{2} \mu \eta\right)^{k'}\frac{2L\eta^2  \mathbb{E} [F_S(\mathbf{w}_S)]}{b} \\ 
&\lesssim\frac{1}{ n} \E [\| \mathbf{w}_0 - \mathbf{w}_S \|^2] + \frac{1}{L }\mathbb{E}[F_S(\mathbf{w}_S)].
\end{align*}
where the last inequlity follows from the result of ~\eqref{eq:opt_lesssim_str_cvx} and the fact that  $\E[F_S(\mathbf{w}_{t}) - F_S(\mathbf{w}_S)] \leq \frac{L}{2} \E[\norm{\mathbf{w}_{t} - \mathbf{w}_S}^2] \lesssim \frac{L}{n} \E [\| \mathbf{w}_0 - \mathbf{w}_S \|^2] + \mathbb{E}[F_S(\mathbf{w}_S)]$.
Together with the summation, we have 
\begin{align}\label{gen_fir_term}
\frac{1}{2}\sum_{t=1}^{T} \big( 1-\frac{\alpha}{2} \big)^{T-t} \mathbb{E}[\|\mathbf{w}_{t-1} - \mathbf{w}\|_2^2]
&\lesssim \frac{1}{2}\sum_{t=1}^{T}\big( 1-\frac{\alpha}{2} \big)^{T-t} \big(\frac{1}{ n} \E [\| \mathbf{w}_0 - \mathbf{w}_S \|^2] + \frac{1}{L }\mathbb{E}[F_S(\mathbf{w}_S)]\big)\notag \\
&\le \frac{1}{2} \frac{1}{1-(1-\frac{\alpha}{2})}\big (\frac{1}{ n} \E [\| \mathbf{w}_0 - \mathbf{w}_S \|^2] + \frac{1}{L }\mathbb{E}[F_S(\mathbf{w}_S)]\big) \notag\\
& \lesssim \frac{1}{  n} \E [\| \mathbf{w}_0 - \mathbf{w}_S \|^2] + \frac{1}{L }\mathbb{E}[F_S(\mathbf{w}_S)].
\end{align}
For the second term of ~\eqref{eq:sum}, by Eq.~\eqref{eq:mu_bound} and $\eta \le \frac{\mu}{2L^2}$, 
\begin{align} \label{gen_sec_term}
    2L\sum^{T}_{t=1} \big( 1-\frac{\alpha}{2} \big)^{T-t}\sum_{\tau=0}^{k} \frac{ (1-\mu \eta/2)^{k-\tau}\eta^2\E \big[ F_S\left(\mathbf{w}_S\right) \big]}{b} &\le \frac{\mu}{\alpha L} \sum_{\tau=0}^{k} \frac{ (1-\mu \eta/2)^{k-\tau}\eta\E \big[ F_S\left(\mathbf{w}_S\right) \big]}{b}\notag\\
    &\lesssim \frac{\E \big[ F_S\left(\mathbf{w}_S\right) \big]}{\alpha L }.
\end{align}
We fix the outer-loop learning rate $\alpha$ and combine Eq.~\eqref{gen_fir_term} and Eq.~\eqref{gen_sec_term} to obtain
\begin{align}
    \sum^{T}_{t=1} \big( 1-\frac{\alpha}{2} \big)^{T-t} \sum_{\tau = 0}^{k-1} \eta (1-\mu n/2)^{k-(\tau+1)} \mathbb{E} [F_S(\mathbf{v}_{\tau,t}) - F_S(\mathbf{w}_S)]& \lesssim \frac{1}{ n} \E [\| \mathbf{w}_0 - \mathbf{w}_S \|^2] + \frac{1}{ L  }\mathbb{E}[F_S(\mathbf{w}_S)].
\end{align}
Recall from Eq.~\eqref{strcvx_l1}, we denote $S_T$:
$$
S_T = \sum_{t'=1}^{T} \big( 1-\frac{\alpha}{2} \big)^{T-t} \sum_{j=0}^{k-1} \eta_{j,t'} \sqrt{\E[F_S(\mathbf{v}_{j,t'})]} (1-\mu n/2)^{k-(\tau+1)}.
$$
We use the inequality $\sqrt{x} \leq (1+x)/2$ for non-negative $x$. This gives:
\[
S_T \leq \frac{1}{2} \sum_{t'=1}^{T} \big( 1-\frac{\alpha}{2} \big)^{T-t} \sum_{j=0}^{k-1} \eta_{j,t'} \left(1 + \E[F_S(\mathbf{v}_{j,t'})]\right) (1-\mu n/2)^{k-(\tau+1)}.
\]
We split this into two parts,
\[
S_T \leq \frac{1}{2} \underbrace{\Big[ \sum_{t'=1}^{T}\big( 1-\frac{\alpha}{2} \big)^{T-t} \sum_{j=0}^{k-1} \eta_{j,t'} (1-\mu n/2)^{k-(\tau+1)}\Big]}_{\text{Part A}} + \frac{1}{2} \underbrace{\Big[ \sum_{t'=1}^{T} \big( 1-\frac{\alpha}{2} \big)^{T-t}\sum_{j=0}^{k-1} \eta_{j,t'} \E[F_S(\mathbf{v}_{j,t'})] (1-\mu n/2)^{k-(\tau+1)}\Big]}_{\text{Part B}}.
\]
We bound each part:

    \noindent Part A: This part is bounded using the result from Eq.~\eqref{eq:mu_bound}. The identity shows that for each outer step $t'$, the inner sum over $j$ is bounded by $2/\mu$. Summing over $T$ outer steps yields:
    \begin{align}\label{eq:partA}
    \frac{1}{2} \sum_{t'=1}^{T}\big( 1-\frac{\alpha}{2} \big)^{T-t} \sum_{j=0}^{k-1} \eta_{j,t'} (1-\mu n/2)^{k-(\tau+1)} \lesssim \frac{1}{\mu}.
    \end{align}
    Part B: Notice that
    \begin{align}\label{eq:partB}
   & \sum^{T}_{t=1}\big( 1-\frac{\alpha}{2} \big)^{T-t} \sum_{\tau = 0}^{k-1} \eta (1-\mu n/2)^{k-(\tau+1)} \mathbb{E} [F_S(\mathbf{v}_{\tau,t})]\notag \\
   &= \sum^{T}_{t=1} \big( 1-\frac{\alpha}{2} \big)^{T-t}\sum_{\tau = 0}^{k-1} \eta (1-\mu n/2)^{k-(\tau+1)} \mathbb{E} [F_S(\mathbf{v}_{\tau,t}) - F_S(\mathbf{w}_S)]+ \sum^{T}_{t=1}\big( 1-\frac{\alpha}{2} \big)^{T-t} \sum_{\tau = 0}^{k-1} \eta (1-\mu n/2)^{k-(\tau+1)} \mathbb{E} [F_S(\mathbf{w}_S)] \notag \\
   &\lesssim \frac{1}{n} \E [\| \mathbf{w}_0 - \mathbf{w}_S \|^2] + \frac{1}{ L  }\mathbb{E}[F_S(\mathbf{w}_S)].
\end{align}

\noindent Combining ~\eqref{eq:partA} and ~\eqref{eq:partB} we have:
\begin{equation} \label{eq:sqrt_sum_bound}
\frac{1}{n} \sum^{n}_{i=1}\mathbb{E} \left[\norm{\mathbf{w}_{T} - \mathbf{w}^{(i)}_{T} }_2\right] \lesssim\frac{1}{n\mu} +\frac{1}{n^2 } \E [\| \mathbf{w}_0 - \mathbf{w}_S \|^2] + \frac{1}{nL  }\mathbb{E}[F_S(\mathbf{w}_S)].
\end{equation}
By Lemma \ref{lem:l12_gen} (a), ~\eqref{eq:sqrt_sum_bound} implies
\begin{align}\label{eq:final_opt_err_str_cvx}
    \mathbb{E} [F(\mathbf{w}_T) - F_S(\mathbf{w}_T) ] \lesssim\frac{1}{n\mu} +\frac{1}{n^2 } \E [\| \mathbf{w}_0 - \mathbf{w}_S \|^2] + \frac{1}{n L  }\mathbb{E}[F_S(\mathbf{w}_S)].
\end{align}
The proof is completed.
\end{proof}
\begin{proof}[Proof of Theorem~\ref{thm:strcvx_exc}]
Note that for $\alpha \le \frac{b\mu}{2\ln2(b+1)L}$, we have
\begin{align*}
     \eta = \frac{b\mu}{2L^2(b+1)} \ge \frac{\ln2}{L} \alpha = \frac{2\ln2}{\mu} \frac{\alpha \mu}{2L} = \frac{2\ln2}{\mu k}
\end{align*}
Which satisfy the required condition in theorem ~\ref{thm:strcvx_stab}.  We now combine the results of lemma ~\ref{lem:strcvx_gen_err} and lemma ~\ref{lem:opt_err_strcvx} together and get
\begin{align}
    \E[F(\mathbf{w}_T) - F(\mathbf{w}^*)]\lesssim \frac{1}{n\mu}  + \big( \frac{1}{nL } + 1 \big)\mathbb{E} [F_S(\mathbf{w}_S)] + \big( \frac{1}{n^2} + \frac{L}{n} \big)\E [\| \mathbf{w}_0 - \mathbf{w}_S \|^2].
\end{align}
for
$k =  \frac{2L}{\alpha\mu}$, and $
T \asymp \log(\mu n)$. This completes the proof.
\end{proof}

\section{Conclusion\label{sec:conclusion}}
\noindent In this work, we investigate the stability and generalization properties of the Lookahead optimizer, a widely used algorithm for large-scale machine learning problems. While many discussions focus on its optimization benefits, we provide a rigorous analysis from the perspective of statistical learning theory. We develop on-average stability bounds for both convex and strongly convex problems, and we show how stability can be improved by small training errors, leading to optimistic bounds that depend on the empirical risk rather than a restrictive, global Lipschitz constant.

\noindent Our stability analysis implies optimal excess population risk bounds for both settings. Specifically, we demonstrate that Lookahead achieves the standard $\O(1/\sqrt{n})$ rate for convex problems and the optimal $\O(1/(n\mu))$ rate for strongly convex problems. A key finding is the adaptivity of Lookahead in the convex case, which achieves its rate without prior knowledge of the optimal risk $F(\mathbf{w}^*)$, a practical advantage over standard Minibatch SGD.

\noindent There are several limitations to our current work which open avenues for future research. A primary limitation is that our analysis is confined to convex and strongly convex loss functions. Given the prevalence of non-convex optimization in modern deep learning, extending our stability analysis to the non-convex setting is a crucial next step. Furthermore, while we establish the optimal statistical rate for the strongly convex case, our analysis does not demonstrate a linear speedup with respect to the batch size, a property observed in Minibatch SGD. Investigating whether different hyperparameter schedules could unlock such a speedup for Lookahead would be of significant interest. We plan to address these limitations in our future research.

\setlength{\bibsep}{0.111cm}
\bibliographystyle{abbrvnat}
\small
\bibliography{learning}
\end{document}